\newcommand\boldred[1]{\textcolor{BrickRed}{\textbf{#1}}}
\newtheorem{theorem}{Theorem}
\newtheorem{lemma}{Lemma}
\newtheorem{remark}{Remark}
\newtheorem{definition}{Definition}
\newtheorem{proposition}{Proposition}
\newtheorem{corollary}{Corollary}
\newtheorem{assumption}{Assumption}
\DeclareMathOperator*{\esssup}{\ensuremath{\text{\rm ess\,sup}}}
\DeclareMathOperator*{\argmin}{\ensuremath{\text{\rm arg\,min}}}
\DeclareMathOperator{\tr}{\ensuremath{\text{\rm tr}}}
\DeclareMathOperator{\Diag}{\ensuremath{\text{\rm diag}}}
\DeclareMathOperator*{\rank}{\ensuremath{\text{\rm rank}}}
\providecommand{\norm}[1]{\lVert#1\rVert}
\providecommand{\SVDr}[1]{[\![#1]\!]_r}
\providecommand{\SVDd}[1]{[\![#1]\!]_d}
\providecommand{\SVDd}[1]{[\![#1]\!]_d}
\providecommand{\SVD}[2]{[\![#1]\!]_{#2}}
\providecommand{\abs}[1]{\lvert#1\rvert}
\newcommand{\scalarp}[1]{{\langle #1\rangle}}
\newcommand{\R}{\mathbb R}
\newcommand{\N}{\mathbb N}
\newcommand{\EE}{\ensuremath{\mathbb E}}
\newcommand{\PP}{\ensuremath{\mathbb P}}
\newcommand{\Nd}{\mathcal{N}} 
\newcommand{\spX}{\mathcal{X}}
\newcommand{\spY}{\mathcal{Y}}
\newcommand{\LiiX}{L^2_\mX(\spX)}
\newcommand{\LiiY}{L^2_\mY(\spY)}
\newcommand{\LiiYRd}{L^2_\mY(\spY,\mathbb{R}^{\underline{d}})}
\newcommand{\mX}{\mu} 
\newcommand{\mY}{\nu} 
\newcommand{\mXY}{\rho} 
\newcommand{\sigalgX}{\Sigma_{\spX}} 
\newcommand{\sigalgY}{\Sigma_{\spY}} 
\newcommand{\im}{\pi} 
\newcommand{\jim}{\rho} 
\newcommand{\CP}[2]{p(#1\,\vert\,#2)} 
\newcommand{\ECP}[2]{\widehat{p}_\param(#1\,\vert\,#2)} 
\newcommand{\EPY}[1]{\widehat{p}_{y}(#1)} 
\newcommand{\CE}{{\mathsf{E}_{Y\vert X}}}  
\newcommand{\ECE}{{\widehat{\mathsf{E}}^\param_{Y\vert X}}}  
\newcommand{\EEX}{\widehat{\mathsf{E}}_{x}}
\newcommand{\EEY}{\widehat{\mathsf{E}}_{y}}  
\newcommand{\ECEnull}{{\widehat{\mathsf{E}}^\param}}
\newcommand{\ecovYXA}{\widehat{\mathrm{Cov}}^{\param}(Y\vert X\in A)}
\newcommand{\ecovYX}{\widehat{\mathrm{Cov}}^{\param}(Y\vert X)}
\newcommand{\DCE}{{\mathsf{D}_{Y\vert X}}}  
\newcommand{\adjDCE}{{\mathsf{D}_{X\vert Y}}}  
\newcommand{\EDCE}{{\widehat{\mathsf{D}}^\param_{Y\vert X}}}  
\newcommand{\param}{\theta}
\newcommand{\spParam}{\Theta}
\newcommand{\embXp}{u^\param}
\newcommand{\embYp}{v^\param}
\newcommand{\embX}{u}
\newcommand{\embY}{v}
\newcommand{\cembX}{\overline{u}}
\newcommand{\cembY}{\overline{v}}
\newcommand{\svalp}{\sigma^\param}
\newcommand{\lsfunp}{\widehat{u}^\param}
\newcommand{\rsfunp}{\widehat{v}^\param}
\newcommand{\jd}{p}
\newcommand{\jdp}{\jd_{\param}}
\newcommand{\Cov}[1]{{\rm Cov}[#1]}
\newcommand{\Var}[1]{{\rm Var}[#1]}
\newcommand{\Estim}{\mathsf{G}}  
\newcommand{\Data}{\mathcal{D}_n}
\newcommand{\Up}{U_\param}  
\newcommand{\Vp}{V_\param} 
\newcommand{\Sp}{S_\param} 
\newcommand{\CUp}{\overline{U}_\param}  
\newcommand{\CVp}{\overline{V}_\param} 
\newcommand{\ECUp}{\widehat{U}_\param}  
\newcommand{\ECVp}{\widehat{V}_\param}
\newcommand{\hnorm}[1]{\norm{#1}_{\rm{HS}}}
\newcommand{\reg}{\gamma}
\newcommand{\rate}{\varepsilon}
\newcommand{\error}{\mathcal{E}_\param}
\newcommand{\loss}{L}
\newcommand{\regloss}{R}
\newcommand{\rexploss}{\mathcal{R}}
\newcommand{\exploss}{\mathcal{L}}
\newcommand{\lsfun}{u^\star}
\newcommand{\rsfun}{v^\star}
\newcommand{\sval}{\sigma^\star}
\newcommand{\one}{\mathbb 1}
\newcommand{\tabitem}{~~\llap{\textbullet}~~}
\newlength\myindent 
\newcommand\bindent{%
  \begingroup 
  \setlength{\itemindent}{\myindent} 
  \addtolength{\algorithmicindent}{\myindent} 
}
\newcommand\eindent{\endgroup} 
\newcommand{\vladi}[2][]{\todo[color=red!20,#1]{{\bf VK:} #2}}
\newcommand{\karim}[2][]{\todo[color=magenta!20,#1]{{\bf KL:} #2}}
\newcommand{\VK}[1]{\textcolor{red}{#1}}
\title{Neural Conditional Probability for Uncertainty Quantification}
\author{Vladimir R. Kostic$^{1,2}$ \quad Karim Lounici$^{3}$ \quad Gr\'egoire Pacreau$^{3}$ \\ \quad \textbf{Giacomo Turri}$^{1}$ \quad \textbf{Pietro Novelli}$^{1}$ \quad \textbf{Massimiliano Pontil}$^{1,4}$ \\ \\
$^1$CSML, Istituto Italiano di Tecnologia \quad $^2$University of Novi Sad \quad \\ $^3$CMAP-Ecole Polytechnique \quad $^4$AI Centre, University College London \\
}
\begin{document}


\maketitle



\begin{abstract}
We introduce Neural Conditional Probability (NCP), an operator-theoretic approach to  learning conditional distributions with 
a focus on statistical inference tasks. NCP can be used to build conditional confidence regions and extract key statistics such as 
conditional quantiles, mean, and covariance. It offers streamlined learning via a single unconditional training phase, allowing 
efficient inference without the need for retraining even when conditioning changes. By leveraging the approximation 
capabilities of neural networks, NCP efficiently handles a wide variety of complex probability distributions. 
We provide theoretical guarantees that ensure both optimization consistency and statistical accuracy. 
In experiments, we show that NCP with a 2-hidden-layer network matches or outperforms leading methods. 
This demonstrates that a a minimalistic architecture with a theoretically grounded loss can achieve 
competitive results,  even in the face of more complex architectures.
\end{abstract}



\section{Introduction}

{This paper studies the problem of estimating the conditional distribution associated with a pair of random variables, given a finite sample from their joint distribution. 
This problem is fundamental in machine learning, and instrumental} for various purposes such as building prediction intervals, performing downstream analysis, visualizing data, and interpreting outcomes. This entails predicting the probability of an event given certain conditions or variables, which is a crucial task across various domains, ranging from finance \citep{markowitzportfolio} to medicine \citep{ray2017infectious}, to climate modeling \citep{harrington2017investigating} and beyond. For instance, in finance, it is essential for risk assessment to estimate the probability of default given economic indicators. Similarly, in healthcare, predicting the likelihood of a disease, given patient symptoms, aids in diagnosis. 
In climate modeling, estimating the conditional probability of extreme weather events such as hurricanes or droughts, given specific climate indicators, helps in disaster preparedness and mitigation efforts. 

According to \cite{gao2022lincde}, there exist four main strategies to learn the conditional distribution. The first one relies on the Bayes formula for densities and proposes to apply non-parametric statistics to learn the joint and marginal densities separately. However, most of non-parametric techniques face a significant challenge known as the curse of dimensionality \citep{scott_feasibility_1991,nagler_evading_2016}. The second strategy, also known as Localization method, involves training a model unconditionally on reweighted samples, where weights are determined by their proximity to the desired conditioning point \citep{hall1999methods,yu1998local}. These methods require retraining the model whenever the conditioning changes and may also suffer from the curse of dimensionality if the weighting strategy treats all covariates equally. The third strategy, known as Direct Learning of the conditional distribution involves finding the best linear approximation of the conditional density on a dictionary of base functions or a kernel space \citep{sugiyama2010conditional,li2007quantile}. The performance of these methods relies crucially on the selection of bases and kernels. Again for high-dimensional settings, approaches that assign equal importance to all covariates may be less effective. Finally, the fourth strategy, known as Conditional Training, involves training models to estimate a target variable conditioned on certain covariates. 
This is typically based on partitioning the covariates space $\spX$ into sets, followed by training models unconditionally within each partition \citep[see][and references therein]{gao2022lincde,winkler2020learning,lu2020structured,Dhariwal2021_DM}. However, this strategy requires a large dataset to provide enough samples for each conditioning and is expensive as it requires training separate models for each conditioning input set, even though they stem from the same underlying joint distribution.

{\bf Contributions~~} The principal contribution of this work is a different conditional probability approach that does not fall into any of the four aforementioned strategies. Rather than learning the conditional density directly, our method, called Neural Conditional Probability (NCP), aims to learn the {\em conditional expectation operator} $\CE$ {associated to the random variables $X\in \spX$ and $Y\in \spY$ based on data  from their joint distribution. The operator is defined, for every measurable function $f: \spY \to \R$, as 
\[
[\CE f](x):=\EE[f(Y)\,\vert\,X=x].
\] 
NCP is based on a principled loss, leveraging the connection between conditional expectation operators and deepCCA \citep{andrew2013deep} established in \citep{Kostic2024dpnets}, and 
can be used interchangeably to:}
\begin{itemize}
\vspace{-.15truecm}
\item[(a)] retrieve the conditional density $p_{Y\vert X}$ with respect to marginal distributions of $X$ and $Y$; 
\item[(b)] compute conditional statistics $\EE[f(Y)\,|\,X]$ for arbitrary functions $f : \spY \to \R$, including conditional mean, variance, moments, and the conditional cumulative distribution function, thereby providing access to all conditional quantiles simultaneously; 
\item[(c)] estimate the conditional probabilities $\PP[Y \in B \,|\, X \in A ]$ for arbitrary sets $B \subset \spY$ and $A \subset \spX$ with theoretical non-asymptotic guarantees on accuracy, allowing us to easily construct conditional confidence regions.
\vspace{-.15truecm}
\end{itemize}

Notably, our approach extracts statistics directly from the trained operator without retraining or resampling, and it is supported by both optimization consistency and statistical guarantees. In addition our experiments show that our approach matches or exceeds the performance of leading methods, even when using a basic a 2-hidden-layer network. 
This demonstrates the effectiveness of a minimalistic architecture combined with a theoretically grounded loss function. 


{\bf Paper organization~~} In Section \ref{sec:related} we review related work. Section \ref{sec:3} introduces the operator theoretic approach to model conditional expectation, while Section \ref{sec:method} discusses its training pipeline. In Section \ref{sec:theory}, we derive learning 
guarantees for NCP. Finally, Section \ref{sec:exp} presents numerical experiments.

\section{Related works} 
\label{sec:related}
Non-parametric estimators are valuable for density and conditional density estimation as they don’t rely on specific assumptions about the density being estimated. Kernel estimators, pioneered by \cite{parzen_estimation_1962} and \cite{rosenblatt_remarks_1956}, are a widely used non-parametric density estimation method. Much effort has been dedicated to enhancing kernel estimation, focusing on aspects like bandwidth selection \citep{goldenshluger_bandwidth_2011}, non-linear aggregation \citep{rigollet_linear_2006}, and computational efficiency \citep{langrene_fast_2020}, as well as extending it to conditional densities  \citep{bertin_adaptive_2014}. A comprehensive review of kernel estimators and their variants is provided in \citep{silverman_density_2017}. See also \citep{tsybakov2009nonparametric} for a statistical analysis of their performance. However, most of non-parametric techniques face a significant challenge known as the curse of dimensionality \citep{scott_feasibility_1991,nagler_evading_2016}, meaning that the required sample size for accurate estimation grows exponentially with the dimensionality of the data \citep{silverman_density_2017}. Additionally, the computational complexity also increases exponentially with dimensionality \citep{langrene_fast_2020}.

Examples of localization methods include the work by \cite{hall1999methods} for conditional CDF estimation using local logistic regression and locally adjusted Nadaraya-Watson estimation, as well as conditional quantiles estimation via local pinball loss minimization in \citep{yu1998local}. 
Examples of direct learning of the conditional distribution include \citep{sugiyama2010conditional} via decomposition on a dictionary of base functions. Similarly, \cite{li2007quantile} explores quantile regression in reproducing Hilbert kernel spaces.

Conditional training is a popular approach which was adopted in numerous works, as in the recent work by \cite{gao2022lincde} where a parametric exponential model for the conditional density $p_{\theta}(y|x)$ is trained using the Lindsey method within each bin of a partition of the space $\spX$. This strategy has also been implemented in several prominent classes of generative models, including Normalizing Flow (NF) and Diffusion Models (DM) \citep{tabak2010density,dinh2014nice,rezende2015,sohl2015deep}. These models work by mapping a simple probability distribution into a more complex one. Conditional training approaches for NF and DM have been developed in many works including  \citep[e.g.][]{winkler2020learning,lu2020structured,Dhariwal2021_DM}. In efforts to lower the computational burden of conditional diffusion models, an alternative approach used heuristic approximations applied directly to unconditional diffusion models on computer vision related tasks \citep[see e.g.][]{song2023pseudoinverseguided,zhang2023towards}. However, the effectiveness of these heuristics in accurately mimicking the true conditional distributions remains uncertain. 
Another crucial aspect of these classes of generative models is that while the probability distribution is modelled explicitly, the computation of any relevant statistic, say $\EE[Y | X]$ is left as an implicit problem usually solved by sampling from $p_{\theta}(y|x)$ and then approximating $\EE[Y | X]$ via simple Monte-Carlo integration. As expected, this approach quickly becomes problematic as the dimension of the output space $\spY$
becomes large.
Conformal Prediction (CP) is a popular model-agnostic framework for uncertainty quantification \citep{vovk1999machine}. Conditional Conformal Prediction (CCP) was later developed to handle conditional dependencies between variables, allowing in principle for more accurate and reliable predictions 
\citep[see][and the references cited therein]{lei2014distribution,romano2019b,chernozhukov2021distributional,gibbs2023conformal}.
However, (CP) and (CCP) are not without limitations. The construction of these guaranteed prediction regions 
need to be recomputed from scratch for each value of the confidence level parameter and of the conditioning for (CCP). In addition, the produced confidence regions tend to be conservative.



\section{Operator approach to probability modeling}
\label{sec:3}



Consider a pair of random variables $X$ and $Y$ taking values in probability spaces $(\spX,\sigalgX,\mX)$ and $(\spY,\sigalgY,\mY)$, respectively, where $\spX$ and $\spY$ are state spaces,  $\sigalgX$ and $\sigalgY$ are sigma algebras, and $\mX$ and $\mY$ are probability measures. Let $\mXY$ be the joint probability measure of $(X,Y)$ from the product space $\spX \times \spY$. We assume that $\rho$ is absolutely continuous w.r.t. to the product measure of its marginals, that is $\jim\ll\mX\times\mY$, and denote the corresponding density by $\jd = d\mXY / d(\mX\times\mY)$, {also called point-wise dependency in \cite{tsai2020neural},} so that $\mXY(dx,dy) = \jd(x,y)\mX(dx)\mY(dy)$.
 
The principal goal of this paper is, given a dataset $\Data:=(x_i,y_i)_{i\in[n]}$ of observations of $(X,Y)$, to estimate the conditional probability measure
\begin{equation}\label{eq:cpm}
\CP{B}{x} := \PP[Y\in B\,\vert\,X = x], \quad x\in\spX,\,B\in\sigalgY.   \end{equation}
Our approach is based on the simple fact that $\CP{B}{x}= \EE[\one_B(Y)\,\vert\,X=x]$, where $\one_{B}$ denotes the characteristic function of set $B$. More broadly we address the above problem by studying the conditional expectation operator 
$\CE\colon \LiiY\to\LiiX$, which is defined, for every $f \in \LiiY$ and $x \in \spX$, as
\[
[\CE f](x):=\EE[f(Y)\,\vert\,X=x]=\int_{\spY} f(y)\CP{dy}{x} = \int_{\spY} f(y)\jd(x,y)\mY(dy),
\]
where $\LiiX$ and $\LiiY$ denotes the Hilbert spaces of functions that are square integrable w.r.t. to $\mu$ and $\nu$, respectively. One readily verifies that $\norm{\CE}=1$ and $\CE\one_{\spY}=\one_{\spX}$. 


A prominent feature of the above operator is that its rank can reveal the independence of the random variables. 
That is, $X$ and $Y$ are independent random variables if and only if $\CE$ is a rank one operator, in which case we have that $\CE = \one_{\spX}\otimes \one_{\spY}$.  It is thus useful to consider the deflated operator $\DCE = \CE- \one_{\spX}\otimes \one_{\spY}\colon\LiiY\to\LiiX$, for which we have that 
\begin{equation}
[\CE f](x) = \EE[f(Y)] + 
[\DCE f](x), \quad f\in\LiiY.
\label{eq:DD}
\end{equation}
For dependent random variables, the deflated operator is nonzero. In many important situations, such as when the conditional probability distribution is a.e. absolutely continuous w.r.t. to the target measure, that is $\CP{\cdot}{x}\ll \mY$ for $\mX$-a.e. $x\in\spX$,  the operator $\CE$ is compact, and, hence, we can write the SVD  of $\CE$ and  $\DCE$ respectively as 
\begin{equation}
\label{eq:def_CE-DCE}
\CE = \sum_{i=0}^\infty \sval_i \,\lsfun_i\otimes\rsfun_i,\quad\text{ and }\quad\DCE = \sum_{i=1}^\infty \sval_i \,\lsfun_i\otimes\rsfun_i,
\end{equation}
where the left $(\lsfun_i)_{i\in\N}$ and right $(\rsfun_i)_{i\in\N}$ singular functions form complete orthonormal systems of $\LiiX$ and $\LiiY$, respectively. Notice that the only difference in the SVD of $\CE$ and $\DCE$ is the extra leading singular triplet $(\sval_0, \lsfun_0 ,\rsfun_0 )=(1,\one_{\mX},\one_{\mY})$ of $\CE$. In terms of densities, the SVD of $\CE$ leads to the characterization
\begin{equation}
\jd(x,y) = \textstyle{\sum_{i=0}^\infty} \sval_i\,\lsfun_i(x)\,\rsfun_i(y) = 1+  \textstyle{\sum_{i=1}^\infty} \sval_i\,\lsfun_i(x)\,\rsfun_i(y). 
\nonumber \end{equation}
The mild assumption that $\CE$ is a compact operator allows one to approximate it arbitrarily well with a (large enough) finite rank (empirical) operator. Choosing the operator norm as the measure of approximation error and appealing to the Eckart-Young-Mirsky Theorem (see Theorem \ref{thm:EYM} in Appendix \ref{app:Hilbert-reminder}) one concludes that the best approximation is given by the truncated SVD, that is for every $ d\in\N$,
\begin{equation}\label{eq:eym_problem}
\DCE\approx \SVDd{\DCE}:= \textstyle{\sum_{i=1}^d} \sval_i \,\lsfun_i\otimes\rsfun_i, \quad\text{ and }\quad \SVDd{\DCE}\in \argmin_{\rank(A)\leq d} \norm{\DCE- A}, 
\end{equation}
where the minimum is given by $\sval_d$, and the minimizer is unique whenever $\sval_{d+1}<\sval_d$. This leads to the approximation of the joint density w.r.t. marginals 
$\jd(x,y) \approx 1+ \sum_{i=1}^d\sval_i\,\lsfun_i(x)\,\rsfun_i(y)$, so that
\begin{equation}  
\label{eq:pointwisecondExp}
\EE[f(Y)\,\vert\,X=x] \approx
\EE[f(Y)] + \textstyle{\sum_{i=1}^d}\,\sval_i\,\lsfun_i(x) \EE[f(Y)\,\rsfun_i(Y)],
\end{equation}
which in particular, choosing $f = \one_B$, gives
\begin{equation*}
    \PP[Y\in B\,\vert\,X=x] 
\approx \PP[Y\in B] + \textstyle{\sum_{i=1}^d}\,\sval_i\, \lsfun_i(x) \, \EE[\rsfun_i(Y)\,\one_B(Y)].
\end{equation*}
Moreover, we have that
\begin{eqnarray*}
    \PP[Y\in B\,\vert\,X\in A] & = & \frac{\scalarp{\one_A,\CE\one_B}}{\PP[X\in A]} \approx  \PP[Y\in B] + \sum_{i=1}^d\,\sval_i\, \frac{\EE[\lsfun_i(X)\one_A(X)]}{\PP[X\in A]}\, \EE[\rsfun_i(Y)\,\one_B(Y)],
\end{eqnarray*}
for which the approximation error is bounded in the following lemma. 
\begin{lemma}[Approximation bound]
\label{lem:approx}
For any $A\in \sigalgX$ such that $\PP[X\in A]>0$ and any $B\in \sigalgY$,
\begin{equation}
    \bigg\vert  \PP[Y\in B\,\vert\,X\in A]  - \PP[Y\in B]  -  \frac{\scalarp{\one_A,\SVDd{\DCE}\one_B}}{\PP[X\in A]} \bigg\vert \leq \sval_{d+1}\, \sqrt{\frac{\PP[Y\in B]}{\PP[X\in A]}}.
\end{equation}
\end{lemma}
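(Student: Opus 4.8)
The plan is to reduce the bracketed quantity to a single inner product that only involves the tail of the singular value decomposition of $\DCE$, and then estimate it via Cauchy--Schwarz together with the Eckart--Young--Mirsky theorem. First I would record the basic identity $\PP[Y\in B\,\vert\,X\in A]=\scalarp{\one_A,\CE\one_B}/\PP[X\in A]$ already used in the excerpt: it follows from $[\CE\one_B](x)=\PP[Y\in B\,\vert\,X=x]$ and the fact that $\scalarp{\one_A,\cdot}_{\LiiX}$ is integration over $A$ against $\mX$, so that $\scalarp{\one_A,\CE\one_B}=\PP[X\in A,\,Y\in B]$.

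Next I would substitute the deflation $\CE=\one_{\spX}\otimes\one_{\spY}+\DCE$ from \eqref{eq:DD} and evaluate the rank-one term, $\scalarp{\one_A,(\one_{\spX}\otimes\one_{\spY})\one_B}=\scalarp{\one_{\spY},\one_B}_{\LiiY}\,\scalarp{\one_A,\one_{\spX}}_{\LiiX}=\PP[Y\in B]\,\PP[X\in A]$. Dividing by $\PP[X\in A]$ gives $\PP[Y\in B\,\vert\,X\in A]=\PP[Y\in B]+\scalarp{\one_A,\DCE\one_B}/\PP[X\in A]$, so the quantity inside the absolute value in the statement is exactly $\scalarp{\one_A,(\DCE-\SVDd{\DCE})\one_B}/\PP[X\in A]$.

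It then remains to bound the numerator. I would apply Cauchy--Schwarz in $\LiiX$ to obtain $\bigl|\scalarp{\one_A,(\DCE-\SVDd{\DCE})\one_B}\bigr|\le\norm{\one_A}_{\LiiX}\,\norm{\DCE-\SVDd{\DCE}}\,\norm{\one_B}_{\LiiY}$, then use $\norm{\one_A}_{\LiiX}^2=\PP[X\in A]$, $\norm{\one_B}_{\LiiY}^2=\PP[Y\in B]$, and invoke the Eckart--Young--Mirsky theorem (Theorem~\ref{thm:EYM}, as in \eqref{eq:eym_problem}) for the compact operator $\DCE$: the residual $\DCE-\SVDd{\DCE}=\sum_{i>d}\sval_i\,\lsfun_i\otimes\rsfun_i$ has operator norm exactly $\sval_{d+1}$. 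Substituting and dividing by $\PP[X\in A]$ yields the claimed bound $\sval_{d+1}\sqrt{\PP[Y\in B]/\PP[X\in A]}$.

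There is no genuine analytic obstacle here; the argument is essentially bookkeeping, and the only step that deserves care is the identification $\norm{\DCE-\SVDd{\DCE}}=\sval_{d+1}$, which rests on the compactness of $\CE$ (hence of $\DCE$) so that the SVD \eqref{eq:def_CE-DCE} exists and Eckart--Young--Mirsky applies. I also note that a marginally sharper constant is available by centering: since $\lsfun_0=\one_{\spX}$ is orthogonal to every $\lsfun_i$ with $i\ge1$ and $\rsfun_0=\one_{\spY}$ to every $\rsfun_i$ with $i\ge1$, one may replace $\one_A$ by $\one_A-\PP[X\in A]\one_{\spX}$ and $\one_B$ by $\one_B-\PP[Y\in B]\one_{\spY}$ before Cauchy--Schwarz, improving $\PP[Y\in B]$ to $\PP[Y\in B](1-\PP[Y\in B])$ under the square root; but the stated estimate follows directly from the plain Cauchy--Schwarz step.
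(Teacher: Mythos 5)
Your proposal is correct and follows essentially the same route as the paper's proof: reduce the bracketed quantity to $\scalarp{\one_A,(\DCE-\SVDd{\DCE})\one_B}/\PP[X\in A]$, bound the inner product by $\norm{\DCE-\SVDd{\DCE}}\,\norm{\one_A}_{\LiiX}\norm{\one_B}_{\LiiY}$, and identify the residual's operator norm with $\sval_{d+1}$ from the SVD tail. Your closing remark on centering $\one_A$ and $\one_B$ against the constant singular functions is a valid (if minor) sharpening not present in the paper.
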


\noindent
\textbf{Neural network model~~} Inspired by the above observations, to build the NCP model, we will parameterize the truncated SVD of the conditional expectation operator and then learn it. Specifically, we introduce 
two parameterized embeddings $\embXp\colon \spX\to \R^d$ and $\embYp\colon \spY\to \R^d$, and the singular values parameterized by $w^\param \in \R^d$, respectively given by
\[
\embXp(x){:=}[\embXp_1(x)\,\ldots \, \embXp_d(x)]^\top,\,\, \embYp(y){:=}[\embYp_1(y)\,\ldots \, \embYp_d(y)]^\top, \,\,\text{and}\,\, \svalp {:=} [e^{-(w_1^\param)^2},\ldots,e^{-(w_d^\param)^2}]^\top,
\] 
where the parameter $\param$ takes values in a prescribed set $\spParam$. 

We then 
aim to learn the joint density function $\jd(x,y)$ in the {(separable)} form 
\[
\jdp(x,y) := 1+ \textstyle{\sum_{i\in[d]}}\svalp_i \embXp_i(x)\,\embYp_i(y) = 1 + \scalarp{\svalp\odot\embXp(x),\embYp(y)},
\]
where $\odot$ denotes element-wise product. {
One of the prominent losses considered for the task of learning $\jd\in\mathcal{L}^2_{\mX\times\mY}$ is \textit{the least squares density ratio} loss $\EE_{\mX\times\mY}(\jd-\jdp)^2 - \EE_{\jim}\jd \!=\! \EE_{\mX\times\mY}\jdp^2 - 2\EE_{\jim}\jdp$, c.f. \cite{tsai2020neural}, also } considered by \cite{haochen2022provable} in the specific context of augmentation graph in self-supervised deep learning, linked to kernel embeddings  \citep{Wang2022}, and rediscovered and tested on DeepCCA tasks by \cite{wells2024regularised}. {Here, following the operator perspective, we use the characterization \eqref{eq:eym_problem} of the optimal finite rank model to propose a new loss that: (1) excludes the known feature from the learning process, and (2) introduces a penalty term to enforce orthonormality of the basis functions. More precisely, our loss $\exploss_\reg(\param):= \exploss(\param) + \reg \rexploss(\param)$ is composed of two terms. The first one 
\begin{equation}\label{eq:loss_density}
\exploss(\param):= \EE_{\mX\times\mY}\jdp^2 - 2\EE_{\jim}\jdp + [\EE_{\mX\times\mY}\jdp]^2 - \EE_{\mX}[\EE_{\mY}\jdp]^2 - \EE_{\mY}[\EE_{\mX}\jdp]^2 + 2 \EE_{\mX\times\mY}\jdp
\end{equation}
is equivalent to solving \eqref{eq:eym_problem} with $A = \sum_{i=1}^d \svalp_i \,[\embXp_i-\EE_{\mX}\embXp_i]\otimes[\embYp_i-\EE_{\mY}\embYp_i]$ and }
can be written in terms of correlations between features. Namely, denoting the covariance and variance matrices by
\begin{equation}\label{eq:covs}
\Cov{z,z'}:=\EE[(z-\EE[z])(z'-\EE[z'])^\top]\quad\text{ and }\quad \Var{z}:=\EE[(z-\EE[z])(z-\EE[z])^\top],     
\end{equation}
and abbreviating $\embXp:=\embXp(X)$ and $\embYp:=\embYp(Y)$ for simplicity, we can write 
\begin{equation}\label{eq:loss_covs}
\exploss(\param):= \tr\big(\Var{\sqrt{\svalp}\odot \embXp} \,\Var{\sqrt{\svalp}\odot \embYp} - 2\,\Cov{\sqrt{\svalp}\odot \embXp,\sqrt{\svalp}\odot \embYp} \big).
\end{equation}
If $\jd{=}\jdp$ for some $\param{\in}\spParam$, then the optimal loss is the $\chi^2$-divergence $\exploss(\param){=}D_{\chi^2}(\jim\,\vert\,\mX\times\mY) {=} {-}\sum_{i\geq1}{\sval_i}^2$ %
and, as we show below, $\exploss(\param)$ measures how well $\jdp(x,y)-1$ approximates $\textstyle{\sum_{i\in[d]}}\sval_i\,\lsfun_i(x)\,\rsfun_i(y)$.
However, in order to obtain a useful probability model, it is of paramount importance to \textit{align} the metric in the latent spaces with the metrics in the data-spaces $\LiiX$ and $\LiiY$. For different reasons, a similar phenomenon has been observed in \cite{Kostic2024dpnets} 
where dynamical systems are learned via transfer operators. In our setting, this leads to the second term of the loss that measures how well features $\embXp$ and $\embYp$ span relevant subspaces in $\LiiX$ and $\LiiY$, respectively.  Namely, aiming $\EE [\lsfun_i(X)\lsfun_j(X)] = \EE [\rsfun_i(Y)\rsfun_j(Y)]=\one_{\{i=j\}}$, $i,j\in\{0,1,\ldots,d\}$ leads to
\begin{equation}\label{eq:onloss}
\rexploss(\param){:=} \norm{\EE[\embXp(X)\embXp(X)^\top] {-} I}_F^2 {+} \norm{\EE[\embYp(Y)\embYp(Y)^\top] {-} I}_F^2 {+} 2\norm{\EE[\embXp(X)]}^2 {+} 2\norm{\EE[\embYp(Y)}^2.
\end{equation}

We now state our main result on the properties of the loss $\exploss_\reg$, which extends the result in \cite{wells2024regularised} to infinite-dimensional operators and guarantees the uniqueness of the optimum due to $\rexploss$.

\begin{theorem}\label{thm:main}
Let $\CE\colon\LiiY\to\LiiX$ be a compact operator and $\DCE = \sum_{i=1}^\infty \sval_i \lsfun_i\otimes\rsfun_i$ be the SVD of its deflated version. If $\embXp_i\in\LiiX$ and $\embYp_i\in\LiiY$, for all $\param\in\spParam$ and $i\in[d]$, then for every $\param\in\spParam$,  $\exploss_\reg(\param)\geq - \sum_{i\in[d]}\sigma_i^{\star 2}$. Moreover, if $\reg>0$ and $\sval_{d}>\sval_{d+1}$, then the equality holds if and only if $(\svalp_i,\embXp_i, \embYp_i)$ equals $(\sval_i,  \lsfun_i , \rsfun_i)$ $\rho$-a.e., up to unitary transform of singular spaces.
\end{theorem}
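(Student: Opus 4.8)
The plan is to recognize the first term $\exploss(\param)$ of the loss as a constant shift of the squared Hilbert--Schmidt error of a rank-$d$ approximation of the deflated operator $\DCE$, to lower-bound it by an Eckart--Young--Mirsky / Ky Fan argument, and then to read off the equality cases, with the orthonormality term $\rexploss$ supplying the rigidity that $\exploss$ alone lacks. First I would, for $\param\in\spParam$, set $\overline u_i^\param:=\embXp_i-\EE_{\mX}[\embXp_i]$, $\overline v_i^\param:=\embYp_i-\EE_{\mY}[\embYp_i]$ and $A_\param:=\sum_{i\in[d]}\svalp_i\,\overline u_i^\param\otimes\overline v_i^\param$, a finite-rank — hence Hilbert--Schmidt — operator $\LiiY\to\LiiX$ under the hypothesis $\embXp_i\in\LiiX,\ \embYp_i\in\LiiY$. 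Using $\langle\overline u_i^\param,\overline u_j^\param\rangle_{\LiiX}=\Var{\embXp}_{ij}$, the analogue for $\embYp$, and (by the tower property, with the $\one_{\mX}\otimes\one_{\mY}$ part of $\DCE$ dropping since $\EE[\overline u_i^\param]=0$) $\langle\overline u_i^\param,\DCE\,\overline v_i^\param\rangle_{\LiiX}=\Cov{\embXp,\embYp}_{ii}$, a direct expansion of \eqref{eq:loss_covs} gives $\exploss(\param)=\hnorm{A_\param}^2-2\langle\DCE,A_\param\rangle_{\rm HS}$. Letting $P,Q$ be the orthogonal projections onto $\range(A_\param)\subseteq\LiiX$ and $\range(A_\param^*)\subseteq\LiiY$ (both of rank $\le d$) and using $A_\param=PA_\param Q$, this becomes
\[
\exploss(\param)=\hnorm{A_\param-P\DCE Q}^2-\hnorm{P\DCE Q}^2\;\ge\;-\hnorm{P\DCE Q}^2\;\ge\;-\sum_{i\in[d]}\sigma_i^{\star 2},
\]
the last step being the Ky Fan estimate $\hnorm{P\DCE Q}^2\le\hnorm{P\DCE}^2=\tr(\DCE\DCE^*P)\le\sum_{i\in[d]}\sigma_i^{\star 2}$ for rank-$\le d$ projections and the positive compact operator $\DCE\DCE^*$. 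Since $\rexploss(\param)\ge0$ always, this yields $\exploss_\reg(\param)\ge-\sum_{i\in[d]}\sigma_i^{\star 2}$ for every $\reg\ge0$, the first claim. The ``if'' half of the equivalence is immediate: $(\svalp_i,\embXp_i,\embYp_i)=(\sval_i,\lsfun_i,\rsfun_i)$ makes the features centered ($\langle\one_{\mX},\lsfun_i\rangle=0$ for $i\ge1$) and orthonormal, so $\rexploss(\param)=0$, $A_\param=\SVDd{\DCE}$, and $\exploss(\param)=\hnorm{\SVDd{\DCE}}^2-2\langle\DCE,\SVDd{\DCE}\rangle_{\rm HS}=-\sum_{i\in[d]}\sigma_i^{\star 2}$; the value does not change under a unitary re-mixing of singular functions within a block of equal singular values.

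For the ``only if'' direction, assume $\reg>0$, $\sval_d>\sval_{d+1}$, and $\exploss_\reg(\param)=-\sum_{i\in[d]}\sigma_i^{\star 2}$. Since $\exploss(\param)\ge-\sum_{i\in[d]}\sigma_i^{\star 2}$ and $\reg\rexploss(\param)\ge0$, both must be tight. From $\rexploss(\param)=0$ I read off $\EE[\embXp(X)\embXp(X)^\top]=\EE[\embYp(Y)\embYp(Y)^\top]=I$ and $\EE[\embXp(X)]=\EE[\embYp(Y)]=0$, so $\{\embXp_i\}_{i\in[d]}$ and $\{\embYp_i\}_{i\in[d]}$ are centered orthonormal systems, whence $\overline u_i^\param=\embXp_i$, $\overline v_i^\param=\embYp_i$, and (the $\svalp_i=e^{-(w_i^\param)^2}$ being positive) $A_\param=\sum_{i\in[d]}\svalp_i\,\embXp_i\otimes\embYp_i$ has exact rank $d$ with $\range(P)=\Span\{\embXp_i\}$ and $\range(Q)=\Span\{\embYp_i\}$. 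Tightness of $\exploss$ then forces $\hnorm{A_\param-P\DCE Q}=0$ and $\hnorm{P\DCE Q}^2=\sum_{i\in[d]}\sigma_i^{\star 2}$; chaining $\hnorm{P\DCE Q}^2\le\hnorm{P\DCE}^2\le\sum_{i\in[d]}\sigma_i^{\star 2}$ and using the strict gap $\sval_d>\sval_{d+1}$ to rule out leakage of $\range(P)$ into the tail singular subspace, I get $\range(P)=\Span\{\lsfun_i\}_{i\in[d]}$, hence $P\DCE=\SVDd{\DCE}$ and $A_\param=P\DCE Q=P\DCE=\SVDd{\DCE}=\sum_{i\in[d]}\sval_i\,\lsfun_i\otimes\rsfun_i$. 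Finally, comparing the two rank-$d$ representations of $A_\param$ through the spectral decompositions $A_\param^*A_\param=\sum_{i\in[d]}(\svalp_i)^2\,\embYp_i\otimes\embYp_i=\sum_{i\in[d]}\sigma_i^{\star 2}\,\rsfun_i\otimes\rsfun_i$ and $A_\param A_\param^*=\sum_{i\in[d]}(\svalp_i)^2\,\embXp_i\otimes\embXp_i=\sum_{i\in[d]}\sigma_i^{\star 2}\,\lsfun_i\otimes\lsfun_i$, uniqueness of the spectral decomposition of a positive compact operator gives that $(\svalp_i)_{i\in[d]}$ is a permutation of $(\sval_i)_{i\in[d]}$ and that, within each eigenvalue block, $\{\embXp_i\}$ and $\{\embYp_i\}$ are obtained from $\{\lsfun_i\}$ and $\{\rsfun_i\}$ by a common unitary transform (the $\embXp$--$\embYp$ pairing being pinned down by $A_\param\embYp_i=\svalp_i\embXp_i$); as $\rho\ll\mX\times\mY$ and only functions of $x$, resp.\ $y$, are involved, $\mX$- and $\mY$-a.e.\ equality upgrades to $\rho$-a.e.\ equality.

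The main obstacle is the Ky Fan / Eckart--Young--Mirsky step and especially its equality analysis: the inequality $\hnorm{P\DCE Q}^2\le\sum_{i\in[d]}\sigma_i^{\star 2}$ is routine, but verifying that equality forces $\range(P)$ (and $\range(Q)$) to be exactly the top-$d$ singular subspace — the point where the strict gap $\sval_d>\sval_{d+1}$ is indispensable — and hence that $A_\param=\SVDd{\DCE}$, requires care, as does phrasing the conclusion correctly ``up to unitary transform of singular spaces'' when the leading $d$ singular values are not all distinct. A secondary technicality is that $\DCE$ need not be Hilbert--Schmidt, so the suggestive identity $\exploss(\param)=\hnorm{\DCE-A_\param}^2-\hnorm{\DCE}^2$ is only formal and must be used throughout in its compressed form $\hnorm{A_\param-P\DCE Q}^2-\hnorm{P\DCE Q}^2$.
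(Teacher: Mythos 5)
Your proof is correct, but it takes a genuinely different route from the paper's at the two technical pinch points. To handle the fact that $\DCE$ need not be Hilbert--Schmidt, the paper truncates the target to $D_r=\sum_{i\in[r]}\sval_i\,\lsfun_i\otimes\rsfun_i$, applies Eckart--Young--Mirsky to the surrogate loss $\hnorm{D_r-A_\param}^2-\hnorm{D_r}^2$, bounds the discrepancy $\abs{\exploss^r_0(\param)-\exploss_0(\param)}\leq Md\,\sval_{r+1}$ via Cauchy--Schwarz, and lets $r\to\infty$; you instead compress $\DCE$ by the finite-rank projections $P,Q$ onto $\range(A_\param)$ and $\range(A_\param^*)$, obtaining the exact identity $\exploss(\param)=\hnorm{A_\param-P\DCE Q}^2-\hnorm{P\DCE Q}^2$ and then the Ky Fan trace bound $\tr(\DCE\DCE^*P)\leq\sum_{i\in[d]}\sigma_i^{\star 2}$. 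This buys you a limit-free argument with no auxiliary constant $M$, at the price of having to carry out the Ky Fan equality analysis yourself; the paper's version localizes all the work in the citation of Eckart--Young--Mirsky (existence and uniqueness) but needs the approximation-and-limit step. For the uniqueness direction the paper simply invokes EYM uniqueness under $\sval_d>\sval_{d+1}$ together with $\rexploss(\param)=0$ and the uniqueness of the SVD, whereas you derive the same conclusion explicitly: the gap pins $\range(P)$ to $\Span\{\lsfun_i\}_{i\in[d]}$, forcing $A_\param=P\DCE Q=P\DCE=\SVDd{\DCE}$, and the spectral decompositions of $A_\param A_\param^*$ and $A_\param^*A_\param$ deliver the ``up to unitary transform of singular spaces'' statement. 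Your version is more self-contained and makes the role of the spectral gap and of the positivity $\svalp_i=e^{-(w_i^\param)^2}>0$ (needed so that $\rank A_\param=d$, hence $\rank P=d$) explicit; both arguments are sound.
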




We provide the proof in Appendix \ref{app:thm1}. In the following section, we show how to learn these canonical features from data and construct approximations of the conditional probability measure.


\noindent
\textbf{Comparison to previous methods~~}
NCP does not fall into any of the four categories defined by \citet{gao2022lincde}, as it does not aim to learn conditional density of \(Y|X\) directly. Instead, NCP focuses on learning the operator mapping \(\LiiY \to \LiiX\), from which all relevant task-specific statistics can be derived without requiring retraining. This approach effectively integrates with deep representation learning to create a latent space adapted to \(p(y\vert x)\). As a result, NCP efficiently captures the intrinsic dimension of the data, which is supported by our theoretical guarantees that depend solely on the latent space dimension (Theorem \ref{thm:error}). In contrast, strategies designed for learning density often encounter significant limitations, such as the curse of dimensionality, potential substantial misrepresentation errors when the pre-specified function dictionary misaligns with the true distribution \(p(y\vert x)\), and high computational complexity due to the need for retraining. Experiments confirm NCP’s capability to learn representations tailored to a wide range of data types—including manifolds, graphs, and high-dimensional distributions—without relying on predefined dictionaries. This flexibility allows NCP to outperform popular aforementioned methods. 

\section{Training the NCP inference method}
\label{sec:method}
In this section, we discuss how to train the model. Given a training dataset $\Data=(x_i,y_i)_{i\in[n]}$ and networks $ (\embXp, \embYp, \svalp)$, we consider the empirical loss $\widehat{\exploss}_\reg(\param):= \widehat{\exploss}(\theta) + \reg\widehat{\rexploss}(\theta)
$, where we replaced \eqref{eq:loss_covs} and \eqref{eq:onloss} by their empirical versions. In order to guarantee the unbiased estimation, as we show within the proof of Theorem \ref{thm:main}, two terms of our loss can be written using two independent samples $(X,Y)$ and $(X',Y')$ from $\jim$ as { $\exploss(\param)\!=\!\EE[\loss(\embXp(X) \!-\! \EE\embXp(X),\embXp(X') \!-\! \EE\embXp(X'),\embYp(Y)\!-\!\EE\embYp(Y),\embYp(Y')\!-\!\EE\embYp(Y'),\svalp)]$ and 
$\rexploss(\param)\!=\!\EE[\regloss(\embXp(X),\embXp(X'),\embYp(Y),\embYp(Y'))]$,
where the loss functionals $\loss$ and $\regloss$ are defined for $\embX,\embX',\embY,\embY'\!\in\!\R^d$ and $s\!\in\! [0,1]^d$ as
\begin{align}
\loss(\embX,\embX',\embY,\embY',s)&{:=}\tfrac{1}{2}\left(\embX^\top\Diag(s)\embY' \right)^2\!+\!\tfrac{1}{2}\left(\embY^\top\Diag(s)\embX' \right)^2 \!-\! \embX^\top\Diag(s)\embY \!-\! \embX'^\top\Diag(s)\embY',\label{eq:loss_func}\\
\regloss(\embX,\embX',\embY,\embY')&{:=}(\embX^\top \embX')^2 {-} (\embX{-}\embX')^\top (\embX{-}\embX') {+} (\embY^\top \embY')^2 {-} (\embY{-}\embY')^\top (\embY{-}\embY'){+}2d.
\label{eq:reg_func}    
\end{align}
} Therefore, at every epoch we take two independent batches $\Data^1$ and $\Data^2$ of equal size from $\Data$, leading to Algorithm  \ref{alg:mai_procedure}. See Appendix  \ref{app:designtrainingloss} for the full discussion, and Appendix  \ref{app:empiricaltrainingloss}, where we also provide in Figure \ref{fig:learningdynamic}  an example of learning dynamics.

\begin{algorithm}
    \caption{{Separable density learning procedure}}
    \label{alg:mai_procedure}
    \begin{algorithmic}
        \REQUIRE training data ($X_{\text{train}}$,$Y_{\text{train}}$)
        \STATE train $u^{\theta}$, $\sigma^{\theta}$ and $v^{\theta}$ using the NCP loss
        \STATE Center and scale $X_{\text{train}}$ and $Y_{\text{train}}$
        \FOR{each epoch}
            \STATE From ($X_{\text{train}}$,$Y_{\text{train}}$) pick two random batches ($X_{\text{train}}$,$Y_{\text{train}}$) and ($X_{\text{train}}'$,$Y_{\text{train}}'
            $)
            \STATE Evaluate:  $U\gets u^{\theta}(X_{\text{train}})$, $U' \gets u^{\theta}(X_{\text{train}}')$, $V \gets v^{\theta}(Y_{\text{train}})$, $V' \gets v^{\theta}(Y_{\text{train}}')$ 
            \STATE Compute {$\widehat{\exploss}(\theta)$ as an unbiased estimate  using \eqref{eq:loss_covs} or \eqref{eq:loss_func} }
             \STATE Compute {$\widehat{\rexploss}(\theta)$ as an unbiased estimate using \eqref{eq:onloss} or \eqref{eq:reg_func} }
            \STATE Compute NCP loss $\widehat{\exploss}_\reg(\param):= \widehat{\exploss}(\theta) + \reg\widehat{\rexploss}(\theta)$ and back-propagate
        \ENDFOR
    \end{algorithmic}
\end{algorithm}

\textbf{Practical guidelines for training~~} In the following, we briefly report a few aspects to be kept in mind when using the NCP in practice, referring the reader to Appendix ~\ref{app:methodology_supp} for further details.  {The computational complexity of loss estimation presents three distinct methodological approaches. The first method utilizes unbiased estimation via covariance calculations in \eqref{eq:loss_covs} and \eqref{eq:onloss}, achieving a computational complexity of $\mathcal{O}(nd^2)$ for a batch size $n$. An alternative approach employing U-statistics with \eqref{eq:loss_func} and  \eqref{eq:reg_func} requires $\mathcal{O}(n^2d)$ operations per iteration, offering the estimation of the same precision. A third method involves batch averaging of \eqref{eq:loss_func} and  \eqref{eq:reg_func}, reducing computational complexity to $\mathcal{O}(nd)$, which enables seamless integration with contemporary deep learning frameworks, albeit potentially compromising training robustness through less accurate 4th-order moment estimations. Method selection remains contingent upon the specific problem’s computational and statistical constraints.} Further, the size of latent dimension $d$, as indicated by Theorem \ref{thm:main} relates to the problem's "difficulty" in the sense of smoothness of joint density w.r.t. its marginals. Lastly, after the training, an additional post-processing may be applied to ensure the orthogonality of features $\embXp$ and $\embYp$ and improve statistical accuracy of the learned model.

\noindent
\textbf{Performing inference with the trained NCP model~~} We now explain how to extract important statistical objects from the trained model $(\lsfunp, \rsfunp,\svalp)$. To this end, define the empirical operator 
\begin{equation}
    \EDCE{\colon}\LiiY{\to}\LiiX\quad [\EDCE f](x) {:=} \textstyle{\sum_{i\in[d]}}\,\svalp_i\,\lsfunp_i(x)\,\EEY [\rsfunp_i\,f], \quad  f\in\LiiY, \,x\in\spX,
\end{equation}
where $\EEY [\rsfunp_i\,f]:=\tfrac{1}{n}\sum_{j\in[n]}\rsfunp_i(y_j)f(y_j)$. Then,\textit{ without any retraining nor simulation}, we can compute the following statistics:
 
$\blacktriangleright$ Conditional Expectation: $[\ECE f](x)\! :=\! \EEY f \!+\![\EDCE f](x), \, f\in\LiiY, x\in\spX.$

$\blacktriangleright$ Conditional moments of order $\alpha\geq 1$: apply previous formula to $f(u)=u^\alpha$.  

$\blacktriangleright$ Conditional covariance: $\ecovYX:= \ECE[Y Y^{\top} ] - \ECE[Y] \ECE[Y^{\top} ] $.

$\blacktriangleright$ Conditional probabilities: apply the above conditional expectation formula with $f(y) = \one_B(y)$, that is, $\EPY{B} = \EEY[\one_B]$ and $\ECP{B}{x} = \EPY{B} + \textstyle{\sum_{i\in[d]}}\,\svalp_i\,\lsfunp_i(x)\,\EEY[\rsfunp_i\one_{B}], \, B\!\in\!\sigalgY, x\!\in\!\spX.$ Then, integrating over an arbitrary set $A\in\sigalgX$ we get
\begin{equation}\label{eq:np_model}
\ECP{B}{A}:= \EPY{B}  + \textstyle{\sum_{i\!\in\![d]}}\,\svalp_i\,\frac{\EEX[\lsfunp_i\one_{A}]}{\EEX[\one_{A}]}\, \EEY[\rsfunp_i\one_B].
\end{equation}

$\blacktriangleright$ Conditional quantiles: for scalar output $Y$, the conditional CDF $\widehat{F}_{Y\vert X\in A}(t)$ is obtained by taking $B=(-\infty,t]$, and in Algorithm \ref{alg:quantile_search} in  Appendix  \ref{app:benchmark} we show how to extract quantiles from it. 

\section{Statistical guarantees}\label{sec:theory}
We introduce some standard assumptions needed to state our theoretical learning guarantees. To that end, for any $A\in\sigalgX$ and $B\in\sigalgY$ we define important constants, followed by the main assumption, 
$$
\varphi_{X}(A){:=} 1\vee \sqrt{\frac{ 1 {-} \PP[X\in A]}{\PP[X\in A]}} \quad \text{and}\quad  \varphi_{Y}(B){:=}1{\vee } \sqrt{\frac{ 1 {- }\PP[Y\in B]}{\PP[Y\in B]}}.
$$
\begin{assumption}
\label{ass:boundedmapping}
There exists finite absolute constants $c_u,c_v>1$ such that for any $\param\in\spParam$
$$
\displaystyle{\esssup_{x\sim\mu}}\norm{\embXp(x)}_{l_{\infty}}\leq c_u,\quad \displaystyle{\esssup_{y\sim\nu}}\norm{\embYp(y)}_{l_{\infty}}\leq c_v.
$$
\end{assumption}
Next, we set $\sigma^2_{\theta }(X){:=} \mathrm{Var}(\norm{\embXp(X) - \EE[\embXp(X)]}_{l_2})$, $\sigma^2_{\theta }(Y){:=} \mathrm{Var}(\norm{\embYp(Y) - \EE[\embYp(Y)]}_{l_2})$ and
\begin{align}
\label{eq:estimrateMeanXY}   
&\hspace{-0.0cm}\epsilon_{n}(\delta){:=} C \bigg( (c_u{\vee} c_v)\frac{\sqrt{d}\,\log (e\delta^{-1})}{n}  {+} (\sigma_{\theta }(X) {\vee} \sigma_{\theta }(Y))   \sqrt{\frac{\log (e\delta^{-1})}{n}} \bigg),\;  \overline{\epsilon}_n(\delta){:=}2 \sqrt{2  \frac{\log 2\delta^{-1}}{n} },
\end{align}
for some large enough absolute constant $C>0$.
\begin{remark}
It follows easily from Assumption \ref{ass:boundedmapping} that $\sigma^2_{\theta }(X){\leq} c_u^2 d$ and $\sigma^2_{\theta }(Y){\leq} c_v^2 d$. Consequently, assuming that $n\geq (c_u \vee c_v) d$, then $\epsilon_{n}(\delta)\lesssim (c_u{\vee} c_v)\sqrt{d} [\sqrt{ \log(e\delta^{-1})/n}{\vee} (\log(e\delta^{-1})/n)]$.
\end{remark}
Finally, for a given parameter $\param{\in}\spParam$ and  $\delta\in (0,1)$, let us denote
\begin{align}
\error &\!:=\! 
 \max\{\norm{\SVDd{\DCE} {-} \Up\Sp\Vp^*},\norm{\Up^*\Up \!-\! I}, \norm{\Up^*\one_{\spX}},\norm{\Vp^*\Vp \!-\! I},\norm{\Vp^*\one_{\spY}}\},\,\text{ and} \label{eq:ncp_gap}\\
   \psi_n(\delta)& := \sval_{d+1} + \error + 2\sqrt{1+\error} (\error+\rate_n(\delta)) + [\rate_n(\delta)]^2. \label{eq:ratepsin}
\end{align}


In the following result, we prove that NCP model approximates well the conditional probability distribution w.h.p. whenever the empirical loss $\widehat{\exploss}_\reg(\param)$
is well minimized. 
\begin{theorem}\label{thm:error}
Let Assumption \ref{ass:boundedmapping} be satisfied, and in addition assume that
\begin{equation}
\label{eq:cond_sample}
\PP(X{\in} A)\bigwedge \PP(Y{\in} B) {\geq}  \overline{\epsilon}_n(\delta/3) \quad \text{and}\quad n {\geq}  (c_u{\vee} c_v)^2 d \bigvee 8 \log(6\delta^{-1})\left[\varphi_{X}(A) {\vee} \varphi_{Y}(B)\right].
\end{equation}
Then for every $A\in\sigalgX\setminus \{\spX\}$ and $B\in\sigalgY\setminus \{\spY\}$
\begin{equation}
\label{eq:ncp_error2}
    \bigg\vert  \frac{\PP[Y{\in} B\,\vert\,X{\in} A]}{\PP[Y{\in} B]} - \frac{\ECP{B}{A}}{\EPY{B}}  \bigg\vert \leq  \frac{4\psi_n(\delta/3) + \left[ 1 {+} \psi_n(\delta/3)\right] \left[ 2 \varphi_X(A){ +} 4\varphi_Y(B) \right]\overline{\epsilon}_n(\delta/3)}{\sqrt{\PP[X{\in} A]\PP[Y{\in} B]}},   
\end{equation}
and
\begin{equation}\label{eq:ncp_error}
  \bigg\vert \frac{ \PP[Y{\in} B\,\vert\,X{\in} A]{ -} \ECP{B}{A}}{\PP[Y{\in} B]}  \bigg\vert {\leq} \varphi_Y(B)\overline{\epsilon}_n(\delta/3) {+} \frac{2(1{+}\psi_n(\delta/3))\varphi_X(A) \overline{\epsilon}_n(\delta/3){+} \psi_n(\delta/3) }{\sqrt{\PP[X{\in} A]\PP[Y{\in} B]}}
\end{equation}
hold with probability at least $1-\delta$ w.r.t. iid draw of the dataset $\Data=(x_j,y_j)_{j\in[n]}$ from $\rho$.
\end{theorem}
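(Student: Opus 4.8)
\textbf{Proof plan for Theorem \ref{thm:error}.}

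The plan is to decompose the error between the true conditional ratio $\PP[Y\in B\,\vert\,X\in A]/\PP[Y\in B]$ and its NCP estimate into three conceptually distinct pieces: (i) the \emph{approximation error} coming from truncating the SVD at rank $d$, controlled by $\sval_{d+1}$ via Lemma \ref{lem:approx}; (ii) the \emph{optimization/model error} $\error$ measuring how far the learned triple $(\Up,\Sp,\Vp)$ is from the true truncated SVD $\SVDd{\DCE}$ and from satisfying orthonormality and centering; and (iii) the \emph{statistical error} from replacing population expectations $\EE_\mY[\rsfunp_i \one_B]$, $\EE_\mX[\lsfunp_i\one_A]$, $\PP[X\in A]$, $\PP[Y\in B]$ by their empirical counterparts $\EEY,\EEX$ over the sample $\Data$. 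The quantity $\psi_n(\delta)$ in \eqref{eq:ratepsin} is precisely the bound one gets after combining (i) and (ii) and propagating them through the bilinear form $\scalarp{\one_A,\cdot\,\one_B}/(\PP[X\in A]\PP[Y\in B])^{1/2}$; the factor $2\sqrt{1+\error}(\error+\rate_n(\delta))+[\rate_n(\delta)]^2$ is the residual from bounding $\Up\Sp\Vp^*$ acting on the \emph{empirical} indicators once $\Up,\Vp$ are only approximately isometric. So the skeleton is: write $\ECP{B}{A}-\PP[Y\in B]$ as $\scalarp{\widehat\one_A,\EDCE\widehat\one_B}_n/\widehat\PP[X\in A]$ (empirical inner product), compare it first to $\scalarp{\widehat\one_A,\SVDd{\DCE}\widehat\one_B}/\widehat\PP[X\in A]$ using $\error$, then to $\scalarp{\one_A,\SVDd{\DCE}\one_B}/\PP[X\in A]$ using concentration of the empirical measure, then invoke Lemma \ref{lem:approx}.

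First I would establish the needed concentration inequalities: with probability $\geq 1-\delta/3$ each, one has $\abs{\widehat\PP[X\in A]-\PP[X\in A]}\leq \tfrac12\overline\epsilon_n(\delta/3)\sqrt{\PP[X\in A]}$-type bounds (Bernstein for the indicator, giving the $\overline\epsilon_n$ term and explaining why the sample-size condition involves $\varphi_X(A),\varphi_Y(B)$), and $\norm{\EEX[\embXp\one_A]-\EE_\mX[\embXp\one_A]}$, $\norm{\EEY[\embYp\one_B]-\EE_\mY[\embYp\one_B]}$ of order $\rate_n(\delta/3)$ using Assumption \ref{ass:boundedmapping} together with a vector Bernstein / matrix-concentration bound — this is where $\epsilon_n(\delta)$ and its dependence on $c_u,c_v,\sigma_\theta(X),\sigma_\theta(Y),\sqrt d$ enters. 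A union bound over the three events gives the overall probability $1-\delta$. The condition $\PP(X\in A)\wedge\PP(Y\in B)\geq \overline\epsilon_n(\delta/3)$ guarantees the empirical probabilities stay bounded away from zero so that division by $\widehat\PP[X\in A]$ is safe and the ratios are comparable up to constants.

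Next I would do the deterministic algebra. Using the SVD $\SVDd{\DCE}=\sum_{i\in[d]}\sval_i\lsfun_i\otimes\rsfun_i$ and the learned $\Up\Sp\Vp^*$, bound $\abs{\scalarp{f,(\EDCE-\SVDd{\DCE})g}}$ for $f,g$ unit-ish vectors by splitting $\EDCE-\SVDd{\DCE}$ into ($\Up\Sp\Vp^*-\SVDd{\DCE}$), which is $\leq\error$ in operator norm, plus the discrepancy between $\EDCE$ and $\Up\Sp\Vp^*$ which stems from $\EEY$ versus $\EE_\mY$ applied to $\rsfunp_i$; bounding the latter requires controlling $\norm{\Up},\norm{\Vp}$, and since $\norm{\Up^*\Up-I}\leq\error$ one gets $\norm{\Up}\leq\sqrt{1+\error}$, which produces the $\sqrt{1+\error}$ factors. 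Then Cauchy–Schwarz on $\scalarp{\one_A,\cdot}$ contributes $\norm{\one_A}_{\LiiX}=\sqrt{\PP[X\in A]}$ and similarly $\sqrt{\PP[Y\in B]}$, and dividing by $\PP[X\in A]$ leaves the $1/\sqrt{\PP[X\in A]\PP[Y\in B]}$ prefactor. Finally Lemma \ref{lem:approx} supplies the $\sval_{d+1}\sqrt{\PP[Y\in B]/\PP[X\in A]}$ term, which after dividing by $\PP[Y\in B]$ becomes the leading $\sval_{d+1}$ inside $\psi_n$. Assembling \eqref{eq:ncp_error2} is then bookkeeping; \eqref{eq:ncp_error} follows by multiplying through by $\PP[Y\in B]/\EPY{B}$ and absorbing $\abs{\EPY{B}-\PP[Y\in B]}\leq\varphi_Y(B)\overline\epsilon_n(\delta/3)\,\PP[Y\in B]$-type control into the stated extra $\varphi_Y(B)\overline\epsilon_n$ term.

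The main obstacle I anticipate is the careful tracking of how the \emph{non-isometry} of $\Up,\Vp$ (i.e. $\error>0$) interacts with the \emph{empirical} inner products: one cannot simply use $\norm{\EDCE}\leq 1$ as for the true operator, so every Cauchy–Schwarz step must carry the $\sqrt{1+\error}$ correction and the cross terms $\error\cdot\rate_n(\delta)$ must be shown to land exactly in the $2\sqrt{1+\error}(\error+\rate_n(\delta))+[\rate_n(\delta)]^2$ form of $\psi_n$ — getting the constants $4$, $2$, $4$ in \eqref{eq:ncp_error2}–\eqref{eq:ncp_error} right, rather than merely up to absolute constants, is the delicate part. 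A secondary subtlety is justifying that the empirical probabilities in the denominators can be replaced by population ones with only multiplicative $(1\pm o(1))$ distortion under the stated sample-size and probability-mass conditions, which is what makes the final bound scale with $\varphi_X(A),\varphi_Y(B)$ rather than blowing up.
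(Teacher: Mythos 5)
Your plan is correct and follows essentially the same route as the paper's proof: bound $\norm{\DCE-\EDCE}$ by a triangle inequality through $\SVDd{\DCE}$, $\Up\Sp\Vp^*$ and the (empirically) centered operators — using $\norm{\Up^*\Up-I}\le\error$ to get the $\sqrt{1+\error}$ factors and a vector Bernstein bound for the embedding means to get $\epsilon_n$ — then apply Cauchy–Schwarz to produce $\sqrt{\PP[X\in A]\PP[Y\in B]}$, control the empirical probabilities in the denominators via scalar Bernstein (yielding the $\varphi_X,\varphi_Y$ and $\overline\epsilon_n$ terms), and finish with a union bound at level $\delta/3$. The only cosmetic difference is that you invoke Lemma \ref{lem:approx} for the $\sval_{d+1}$ term while the paper folds $\norm{\DCE-\SVDd{\DCE}}\le\sval_{d+1}$ directly into the operator-norm decomposition, which is the same estimate.
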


\begin{remark}
In Appendix \ref{app:subgaussian}, we prove a similar result under a less restrictive sub-Gaussian assumption on the singular functions $\embXp(X)$ and $\embYp(Y)$.
\end{remark}

\noindent
\textbf{Discussion~~} The rate $\psi_n(\delta)$ in \eqref{eq:ratepsin} is pivotal for the efficacy of our method. If we appropriately choose the latent space dimension $d$ to ensure accurate approximation ($\sval_{d+1}\ll 1$), achieve successful training ($\error\approx \sval_{d+1}$), and secure a large enough sample size ($\rate_n(\delta)\ll 1$), Theorem 2 provides assurance of accurate prediction of conditional probabilities.
Indeed, \eqref{eq:ncp_error} guarantees (up to a logarithmic factor)
$$
\PP[Y{\in} B\,\vert\,X{\in} A] {-} \ECP{B}{A} {=} O_{\PP}\left( \frac{1}{\sqrt{n}} {+} \sqrt{\frac{\PP[Y{\in} B]}{\PP[X{\in} A]} }\left(\sval_{d+1} {+} \error {+} \sqrt{d/n}{+} \varphi_X(A)/\sqrt{n}\right) \right),
$$
Note the inclusion of the term $\sqrt{\PP[X\in A]}$ in the denominator of the last term on the right-hand side, along with $\varphi_X(A)$. This indicates a decrease in the accuracy of conditional probability estimates for rarely encountered event $A$, aligning with intuition and with a known finite-sample impossibility result \citet[Lemma 1]{lei2014distribution} for conditional confidence regions when $A$ is reduced to any nonatomic point of the distribution (i.e. $A=\{x\}$ with $\PP[X=x] =0$). For rare events, a larger sample size $n$ and a higher-dimensional latent space characterized by $d$ are necessary for accurate estimation of conditional probabilities.

We propose next a non-asymptotic estimation guarantee for the conditional CDF of $Y\vert X$ when $Y$ is a scalar output. This result ensures in particular that accurate estimation of the true quantiles is possible with our method. Fix $t\in \mathbb{R}$ and consider the set $B_t=(-\infty,t]$ meaning that $\PP[Y{\in} B_t\vert X{\in} A ] {=} F_{Y\vert X{\in} A}(t)$ and $\PP[Y{\in} B_t] {=} F_{Y}(t)$. We define similarly for the NCP estimator of the conditional CDF $\widehat{F}_{Y\vert X{\in} A}(t) {=} \ECP{B_t}{A}$. The result follows from applying \eqref{eq:ncp_error} to the set $B_t$.

\begin{corollary}
    \label{cor:unifapproxCDF}
    Let the Assumptions of Theorem \ref{thm:error} be satisfied. Then for any $t
    \in \mathbb{R}$ and $\delta\in (0,1)$, it holds with probability at least $1-\delta$ that
\begin{align}
   & \vert \widehat{F}_{Y\vert X\in A}(t) - F_{Y\vert X\in A}(t) \vert \leq \sqrt{F_Y(t)(1-F_Y(t))}\overline{\epsilon}_n(\delta/3)\notag\\
&\hspace{1cm}+ \sqrt{\frac{F_Y(t)}{\PP[X\in A]}}  \left(\sval_{d+1} + 2\sqrt{2}\error + (2\sqrt{2}+1)\epsilon_n(\delta/3)+  4 \varphi_X(A)\overline{\epsilon}_n(\delta/3)\right).
\end{align}
\end{corollary}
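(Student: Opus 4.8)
\textbf{Proof plan for Corollary \ref{cor:unifapproxCDF}.}
The plan is to specialize Theorem \ref{thm:error}, and specifically the bound \eqref{eq:ncp_error}, to the choice $B = B_t := (-\infty,t]$. For this set we have $\PP[Y \in B_t] = F_Y(t)$ and $\PP[Y \in B_t \mid X \in A] = F_{Y\mid X\in A}(t)$, while the NCP estimator satisfies $\ECP{B_t}{A} = \widehat F_{Y\mid X\in A}(t)$ by the definition of the conditional CDF estimator given just before the corollary. So the left-hand side of \eqref{eq:ncp_error} becomes exactly $\lvert F_{Y\mid X\in A}(t) - \widehat F_{Y\mid X\in A}(t)\rvert / F_Y(t)$, and multiplying through by $F_Y(t)$ turns the statement of Theorem \ref{thm:error} into a bound on $\lvert \widehat F_{Y\mid X\in A}(t) - F_{Y\mid X\in A}(t)\rvert$.

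The remaining work is purely to simplify the right-hand side of \eqref{eq:ncp_error} after multiplication by $F_Y(t)$. First I would handle the factor $\varphi_Y(B_t)$: since $\varphi_Y(B) = 1 \vee \sqrt{(1-\PP[Y\in B])/\PP[Y\in B]}$, we have $F_Y(t)\,\varphi_Y(B_t) \le F_Y(t)\vee\sqrt{F_Y(t)(1-F_Y(t))}$; the intended reading (matching the stated bound) is to use $\sqrt{F_Y(t)(1-F_Y(t))}$ for the first term — here one uses that the $1$ in the $\max$ is dominated once $F_Y(t)\le 1$, or absorbs the difference into constants. Next, for the terms carrying the $1/\sqrt{\PP[X\in A]\PP[Y\in B_t]}$ denominator, multiplying by $F_Y(t) = \PP[Y\in B_t]$ leaves $\sqrt{\PP[Y\in B_t]/\PP[X\in A]} = \sqrt{F_Y(t)/\PP[X\in A]}$ times the bracket. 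Then I would unpack $\psi_n(\delta/3)$ via its definition \eqref{eq:ratepsin}, $\psi_n(\delta) = \sval_{d+1} + \error + 2\sqrt{1+\error}(\error + \rate_n(\delta)) + [\rate_n(\delta)]^2$, and bound it, under the regime assumptions of Theorem \ref{thm:error} (which force $\error$ and $\rate_n(\delta)$ to be controlled, so $\sqrt{1+\error}\le \sqrt 2$ and $[\rate_n(\delta)]^2 \le \rate_n(\delta)$ up to constants), by something of the form $\sval_{d+1} + 2\sqrt{2}\,\error + (2\sqrt{2}+1)\rate_n(\delta)$, which is exactly the bracket appearing in the corollary. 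The $\varphi_X(A)$ term is carried over directly, with the coefficient $2(1+\psi_n)$ bounded by $4$ in the stated regime.

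The only genuinely delicate point is bookkeeping of the absolute constants and the $\sqrt{1+\error}$, $[\rate_n]^2$ simplifications: one must check that the hypotheses \eqref{eq:cond_sample} of Theorem \ref{thm:error} indeed guarantee $\error \le 1$ and $\rate_n(\delta)\le 1$ (or are at least $O(1)$), so that the crude bounds $2\sqrt{1+\error}(\error+\rate_n) \le 2\sqrt 2(\error + \rate_n)$ and $[\rate_n]^2 \le \rate_n$ hold and collapse $\psi_n$ into the clean form stated. If that regime is not literally enforced by \eqref{eq:cond_sample}, one would either invoke it as an implicit running assumption or keep $\psi_n$ symbolic; since the corollary is stated "under the Assumptions of Theorem \ref{thm:error}", the cleanest route is to note that those assumptions already place us in the regime where these simplifications are valid and then just substitute. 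Apart from that, the proof is a one-line specialization followed by elementary algebra, so I would present it concisely as such.
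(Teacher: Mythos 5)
Your proposal is exactly the paper's argument: the paper gives no separate proof of Corollary~\ref{cor:unifapproxCDF} beyond the remark that "the result follows from applying \eqref{eq:ncp_error} to the set $B_t$," which is precisely the specialization-plus-multiplication-by-$F_Y(t)$ you carry out. The delicate points you flag (that $F_Y(t)\varphi_Y(B_t)$ equals $F_Y(t)\vee\sqrt{F_Y(t)(1-F_Y(t))}$ rather than just the square root, and that collapsing $\psi_n$ to $\sval_{d+1}+2\sqrt2\,\error+(2\sqrt2+1)\epsilon_n$ needs $\error,\epsilon_n\lesssim 1$) are real, but they are loosenesses in the paper's own statement rather than gaps in your approach, and you handle them at least as carefully as the paper does.
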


An important application of Corollary \ref{cor:unifapproxCDF} lies in uncertainty quantification when output $Y$ is a scalar. Indeed,  for any $\alpha\in (0,1/2)$, we can scan the empirical conditional CDF $ \widehat{F}_{Y\vert X\in A}$ for values $t_{\alpha}< t'_{\alpha}$ such that $ \widehat{F}_{Y\vert X\in A}(t'_{\alpha})- \widehat{F}_{Y\vert X\in A}(t_{\alpha}) = 1-\alpha$ and $t'_{\alpha}-t_{\alpha}$ is minimal. That way we define a non-asymptotic conditional confidence interval $\widehat{B}_{\alpha}:=(t_{\alpha},t'_{\alpha}]$ with approximate coverage $1-\alpha$. More precisely we deduce from Corollary \ref{cor:unifapproxCDF} that
\begin{align}
\label{eq:CI_NCP}
&\vert \PP[Y\in \widehat{B}_{\alpha}\,\vert\,X\in A] - (1-\alpha) \vert \leq \frac{1}{2}\overline{\epsilon}_n(\delta/6)\notag\\
&\hspace{2cm}+ \sqrt{\frac{1}{\PP[X\in A]}}  \left(\sval_{d+1} + 2\sqrt{2}\error + (2\sqrt{2}+1)\epsilon_n(\delta/6)+  4 \varphi_X(A)\overline{\epsilon}_n(\delta/6)\right).
\end{align}




In App \ref{app:Exp-Covariance}, we derive statistical guarantees for the conditional expectation and covariance of $Y$. 




\section{Experiments}\label{sec:exp}

\noindent
\textbf{Conditional density estimation~~} 
We applied our NCP method to a benchmark of several conditional density models including those of \cite{rothfuss2019conditional, gao2022lincde}. See Appendix  \ref{app:benchmark-CDE} for the complete description of the data models and the complete list of compared methods in Tab.~\ref{tab:compared_methods} with references. We also plotted several conditional CDF along with our NCP estimators in Fig.~\ref{fig:conditional_pdfs}. To assess the performance of each method, we use Kolmogorov-Smirnov (KS) distance between the estimated and the true conditional CDFs. We test each method on nineteen different conditional values uniformly sampled between the 5\%- and 95\%-percentile of $p(x)$ and computed the averaged performance over all the used conditioning values. In Tab.  \ref{tab:KS_performance_CDE}, we report mean performance (KS distance $\pm$ std) computed over 10 repetitions, each with a different seed.
NCP with whitening (NCP--W) outperforms all other methods on 4 datasets, ties with FlexCode (FC) 
on 1 dataset, and ranks a close second on another one behind NF. These experiments underscore NCP's consistent performance. We also refer to Tab.  \ref{tab:NCP_ablation} in App \ref{app:benchmark-CDE} for an ablation study on post-treatments for NCP.


\setlength{\tabcolsep}{4pt}
\begin{table}
\caption{Mean and standard deviation of Kolmogorov-Smirnov distance of estimated CDF from the truth averaged over 10 repetitions with $n=10^5$ (best method in red, second best in bold black).}
\label{tab:KS_performance_CDE}
\centering
\footnotesize
\begin{tabular}{lcccccc}
\toprule
Model & LinearGaussian & EconDensity & ArmaJump & SkewNormal & GaussianMixture & LGGMD \\
\midrule
NCP - W & \textbf{0.010\scriptsize{ $\pm$ 0.000}} & \boldred{0.005\scriptsize{ $\pm$ 0.001}} & \boldred{0.010\scriptsize{ $\pm$ 0.002}} & \boldred{0.008\scriptsize{ $\pm$ 0.001}} & \boldred{0.015\scriptsize{ $\pm$ 0.004}} & \boldred{0.047\scriptsize{ $\pm$ 0.005}} \\ 
DDPM & 0.410\scriptsize{ $\pm$ 0.340} & 0.236\scriptsize{ $\pm$ 0.217} & 0.338\scriptsize{ $\pm$ 0.317} & 0.250\scriptsize{ $\pm$ 0.224} & 0.404\scriptsize{ $\pm$ 0.242} & 0.405\scriptsize{ $\pm$ 0.218} \\
NF & \boldred{0.008\scriptsize{ $\pm$ 0.006}} & \textbf{0.006\scriptsize{ $\pm$ 0.003}} & 0.143\scriptsize{ $\pm$ 0.010} & 0.032\scriptsize{ $\pm$ 0.002} & 0.107\scriptsize{ $\pm$ 0.003} & 0.254\scriptsize{ $\pm$ 0.004} \\
KMN & 0.601\scriptsize{ $\pm$ 0.004} & 0.362\scriptsize{ $\pm$ 0.017} & 0.487\scriptsize{ $\pm$ 0.004} & 0.381\scriptsize{ $\pm$ 0.009} & 0.309\scriptsize{ $\pm$ 0.001} & 0.224\scriptsize{ $\pm$ 0.005} \\
MDN & 0.225\scriptsize{ $\pm$ 0.013} & 0.048\scriptsize{ $\pm$ 0.001} & 0.163\scriptsize{ $\pm$ 0.018} & 0.087\scriptsize{ $\pm$ 0.001} & 0.129\scriptsize{ $\pm$ 0.007} & 0.176\scriptsize{ $\pm$ 0.013} \\
LSCDE & 0.420\scriptsize{ $\pm$ 0.001} & 0.118\scriptsize{ $\pm$ 0.002} & 0.247\scriptsize{ $\pm$ 0.001} & 0.107\scriptsize{ $\pm$ 0.001} & 0.202\scriptsize{ $\pm$ 0.001} & 0.268\scriptsize{ $\pm$ 0.024} \\
CKDE & 0.120\scriptsize{ $\pm$ 0.000} & 0.010\scriptsize{ $\pm$ 0.001} & 0.072\scriptsize{ $\pm$ 0.001} & \textbf{0.023\scriptsize{ $\pm$ 0.001}} & 0.048\scriptsize{ $\pm$ 0.001} & 0.230\scriptsize{ $\pm$ 0.014} \\
NNKCDE & 0.047\scriptsize{ $\pm$ 0.003} & 0.036\scriptsize{ $\pm$ 0.003} & \textbf{0.030\scriptsize{ $\pm$ 0.004}} & 0.030\scriptsize{ $\pm$ 0.002} & 0.035\scriptsize{ $\pm$ 0.002} & 0.183\scriptsize{ $\pm$ 0.006} \\
RFCDE & 0.128\scriptsize{ $\pm$ 0.007} & 0.141\scriptsize{ $\pm$ 0.009} & 0.133\scriptsize{ $\pm$ 0.015} & 0.142\scriptsize{ $\pm$ 0.012} & 0.130\scriptsize{ $\pm$ 0.012} & 0.121\scriptsize{ $\pm$ 0.006} \\
FC & 0.095\scriptsize{ $\pm$ 0.005} & 0.011\scriptsize{ $\pm$ 0.001} & 0.033\scriptsize{ $\pm$ 0.002} & 0.035\scriptsize{ $\pm$ 0.007} & \textbf{0.016\scriptsize{ $\pm$ 0.001}} & \boldred{0.047\scriptsize{ $\pm$ 0.003}} \\
LCDE & 0.108\scriptsize{ $\pm$ 0.001} & 0.026\scriptsize{ $\pm$ 0.001} & 0.113\scriptsize{ $\pm$ 0.002} & 0.075\scriptsize{ $\pm$ 0.006} & 0.035\scriptsize{ $\pm$ 0.001} & 0.124\scriptsize{ $\pm$ 0.002} \\
\bottomrule
\end{tabular}
\end{table}

\begin{figure}[t]
    \centering
    \includegraphics[width=\textwidth,height=3cm]{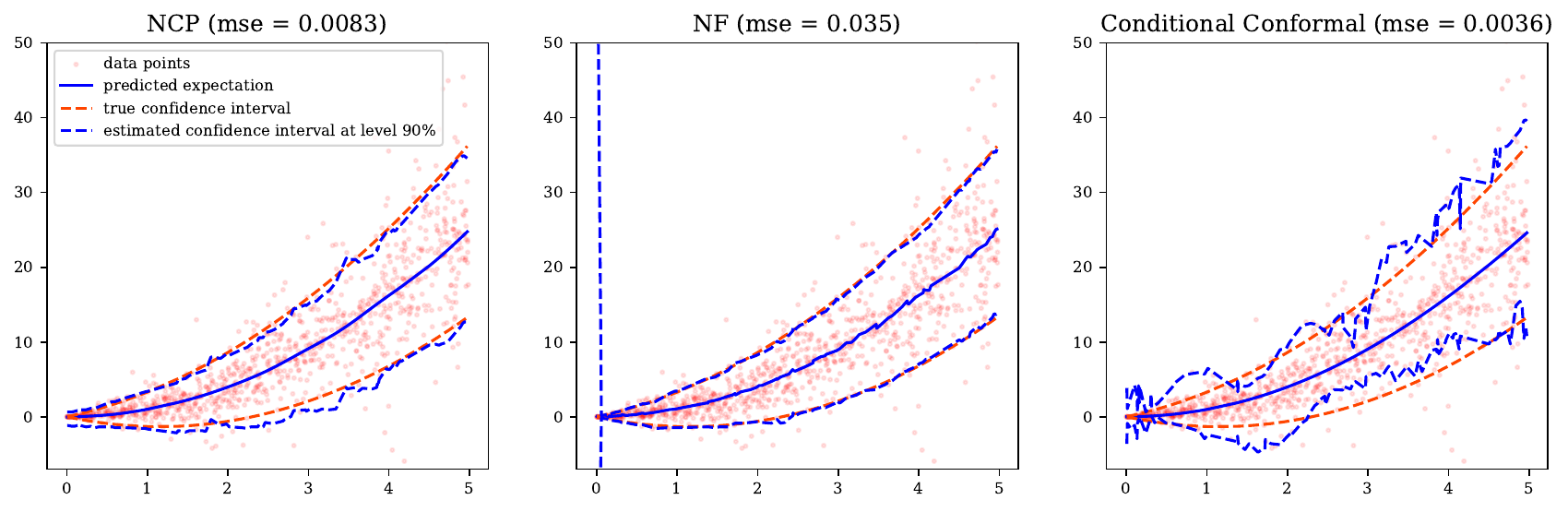}
 \includegraphics[width=\textwidth,height=3cm]{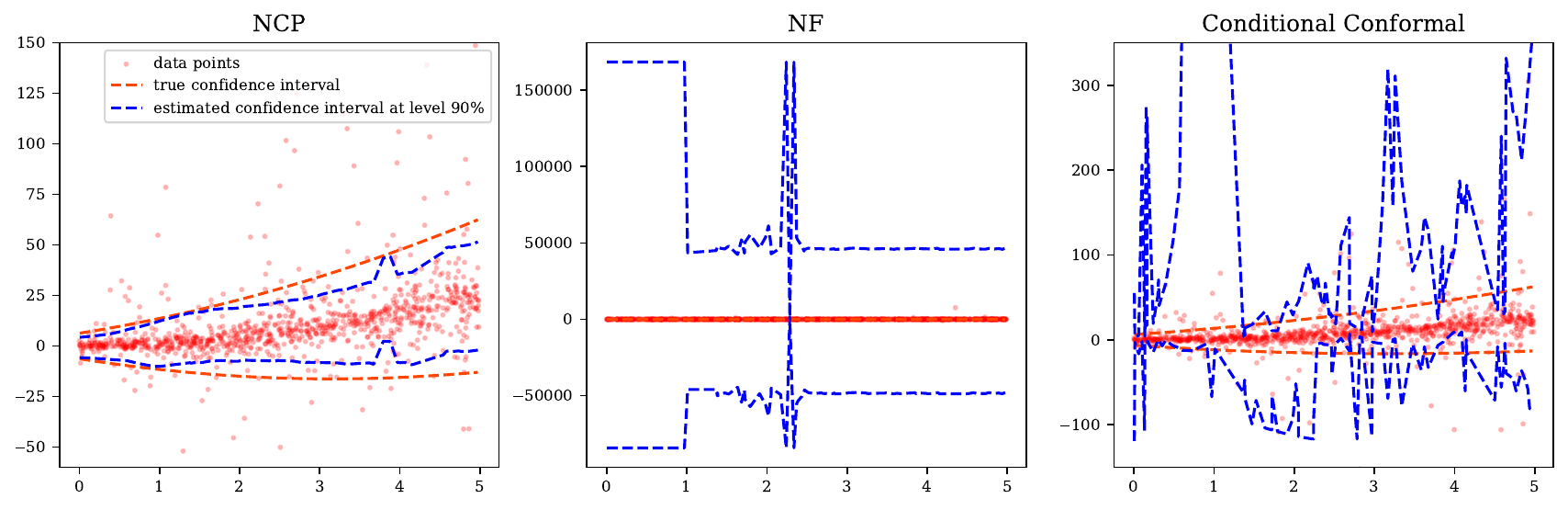}    
 \vspace{-.25truecm}
 \caption{\textbf{Conditional mean (top only) and} $\textbf{90\%}$ \textbf{confidence interval for NCP, NFs and CCP.} Top: Laplace distribution; Bottom: Cauchy distribution. 
 }
    \label{fig:conformal_laplace}
\end{figure}

\noindent
\textbf{Confidence regions~~} Our goal is to estimate conditional confidence intervals for two different data models (Laplace and Cauchy). We investigate the performance of our method in \eqref{eq:CI_NCP} and compare it to the popular conditional conformal prediction approach. We refer to App \ref{app:CR_benchmark} for a quick description of the principle underlying CCP. We trained an NCP model combined with an MLP architecture followed by whitening post-processing. See App \ref{app:CR_benchmark} for the full description. We obtained that way the NCP conditional CDE model that we used according to \eqref{eq:CI_NCP} to build the conditional $90\%$ confidence intervals. We proceeded similarly to build another set of conditional confidence intervals based on NFs. Finally, we also implemented the CCP method of \cite{gibbs2023conformal}. 


In Fig.~\ref{fig:conformal_laplace}, the marginal is $X\sim \mathrm{Unif}([0,5])$ and $Y\vert X=x$ follows either a Laplace distribution (top) with location and scale parameters $(\mu(x),b(x))=(x^2,x)$ or a Cauchy distribution (bottom) with location and scale parameters $(x^2,1+x)$. In this experiment, we considered a favorable situation for the CCP method of \cite{gibbs2023conformal} by assuming prior knowledge that the true conditional location 
is a polynomial function (the truth is actually the square function). Every other parameter of the method was set as prescribed in their paper.

In Fig.~\ref{fig:conformal_laplace}, observe first that the CCP regression achieves the best estimation of the conditional mean $mse=3.6 \cdot 10^{-3}$ against $mse=3.8\cdot 10^{-2}$ for NFs and $mse=8.3\cdot 10^{-3}$ for NCP, as expected since the CCP regression model is well-specified in this example. However, the CCP confidence intervals are unreliable for most of the considered conditioning. We also notice instability for NF and CCP when conditioning in the neighborhood of \(x=0\), with the NF confidence region exploding at \(x=0\). We suspect this is due to the fact that the conditional distribution at \(x=0\) is degenerate, hence violating the condition of existence of a diffeomorphism with the generating prior, a fundamental requirement for NFs models to work at all. Comparatively, NCP does not exhibit such instability around \(x=0\); it only tends to overestimate the confidence region for conditioning close to \(x=0\). The Cauchy distribution is known to be more challenging due to its heavy tail and undefined moments. In Fig \ref{fig:conformal_laplace} (bottom), we notice that NF and CCP completely collapse. This is not a surprising outcome since CCP relies on estimation of the mean which is undefined in this case, creating instability in the constructed confidence regions, while NF attempts to build a diffeomorphism between a Gaussian prior and the final Cauchy distribution. We suspect the conservative confidence region produced by NF might originate from the successive Jacobians involved in the NF mapping taking large values. In comparison, our NCP method still returns some reasonable results. Although the NCP coverage might appear underestimated for larger $x$, actual mean coverages computed on a test set of 200 samples are 88\% for NCP, 99\% for NF and 79\% for CCP. Tab.~\ref{tab:cp_student} in Appendix  \ref{app:CR_benchmark} provides a comparison study on real data for learning a confidence region with NCP, NF and a split conformal predictor featuring a Random Forest regressor (RFSCP).



\begin{figure}[h]
\centering
\includegraphics[width=\textwidth]{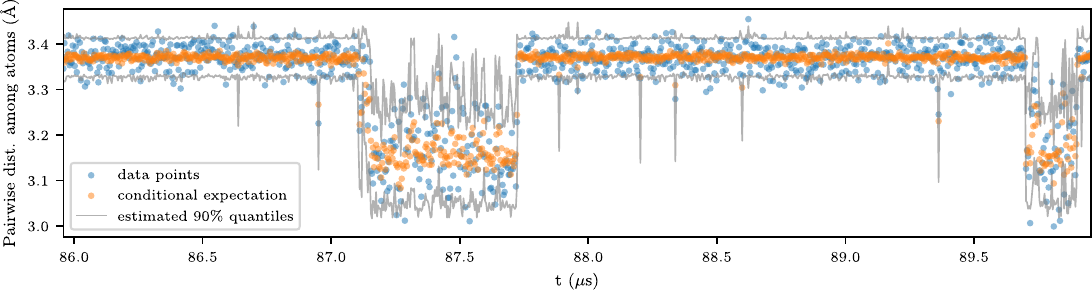}
\vspace{-.35truecm}
\caption{\footnotesize{
\textbf{Protein folding dynamics.
}
Pairwise Euclidean distances between Chignolin atoms exhibit increased variance during folded metastable states (between 87-88$\mu$s and around 89.5$\mu$s). Ground truth is depicted in blue, predicted mean in orange, and the grey lines indicate the estimated 10\% lower and upper quantiles. }
}
\label{fig:chignolin}
\end{figure}

\noindent
\textbf{High-dimensional synthetic experiment~~} We simulated the following $d$-distribution for different values of $d\in\{100,500,1000\}$. Let $\bar{x} = (\bar{x}_1,\bar{x}_2,0,\ldots,0)^\top \in \mathbb{R}^d$ where $x'=(\bar{x}_1,\bar{x}_2)$ admits uniform distribution on the 2-dimensional unit sphere. We pick a random mapping $A\in \mathcal{O}_d$ and we set $X= A \bar{x}$ and the angle $\theta(X) = \arcsin(\bar{x}_2)$. Next we consider two conditional distribution models for $Y|X$ (Gaussian and discrete) described in Figure \ref{fig:synthetic-hd}. NCP performs similarly to NF in the Gaussian case and outperforms NF for discrete distribution. Figure \ref{fig:dimensionality} in Appendix  \ref{app:benchmark-hd} demonstrates that NCP scales effectively with increasing dimensionality \(d\). As the dimension rises from \(d=100\) to \(d=1000\), the computation time increases by only 20\%, while maintaining strong statistical performance throughout.

\begin{figure}[h]
\centering
\includegraphics[width=0.46\textwidth]{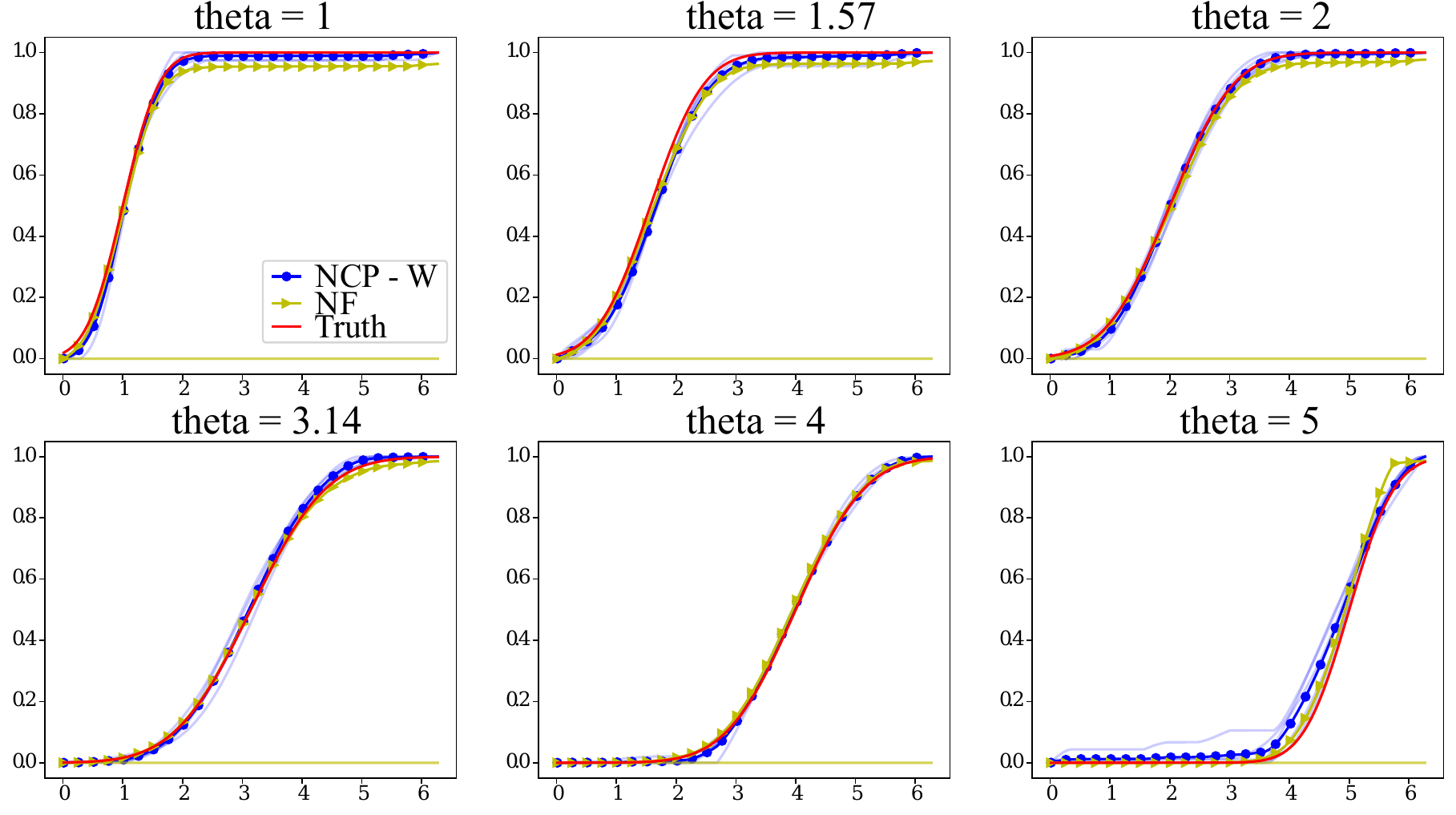}
\includegraphics[width=0.46\textwidth]{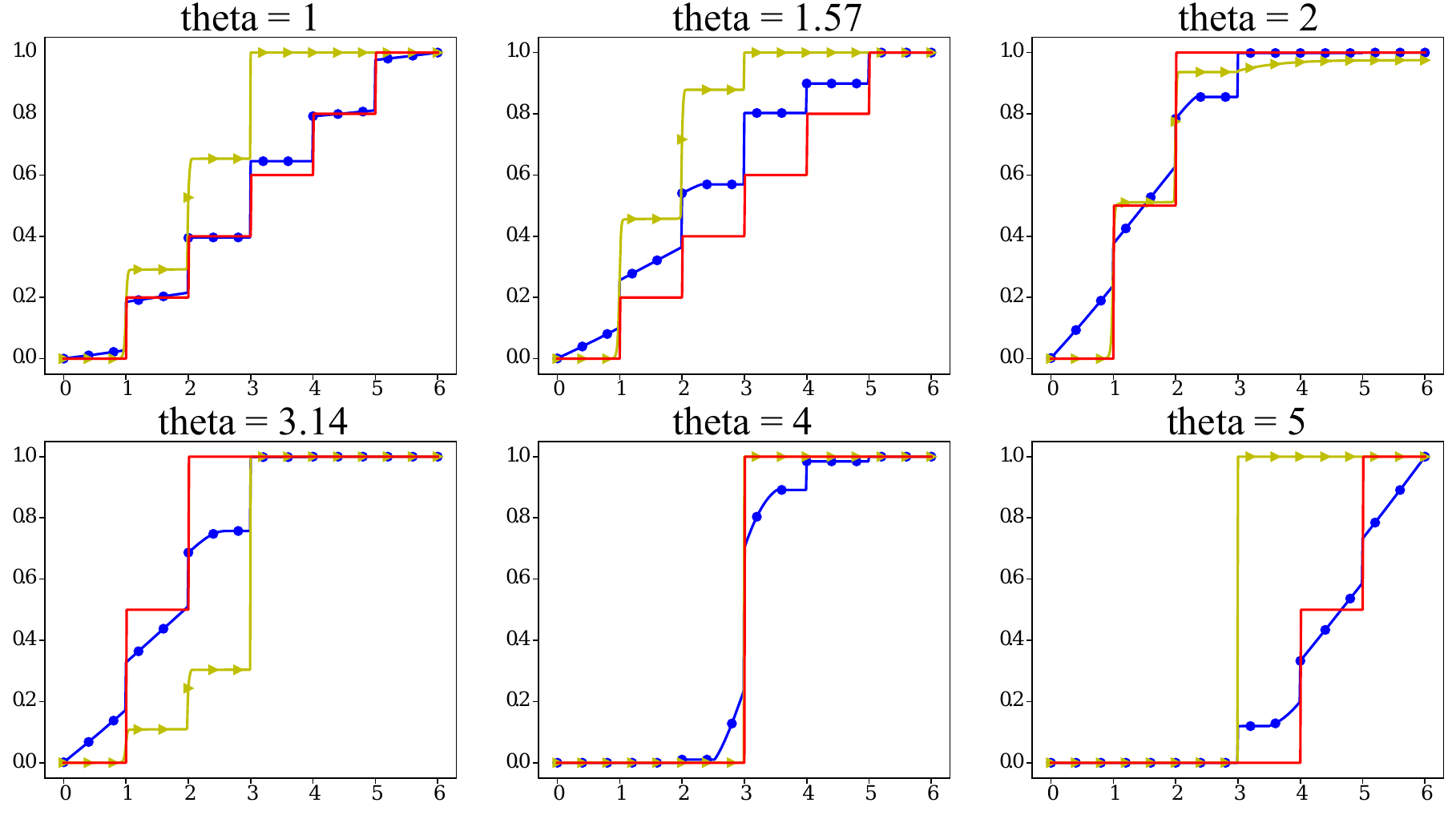}
\caption{\footnotesize{\textbf{High-dimensional synthetic experiment.} 
We consider two models for $Y|X$ with $d=100$. \textbf{Left:} $Y\vert X \sim N(\theta(X),\sin(\theta(X)/2)$. \textbf{Right:} $Y\in \{1,2,3,4,5\}$ admits discrete distribution depending on $\theta(X)$: $Y\vert X \sim P_1$ if $\theta(X) \in [0,\pi/2)$, $P_2$ if $\theta(X) \in [\pi/2,\pi)$, $P_3$ if $\theta(X) \in [\pi,3\pi/2)$, $P_4$ if $\theta(X) \in [3\pi/2, 2\pi)$. We take $P_1= (1/5,1/5,1/5,1/5,1/5)$, $P_2 = (1/2,1/2,0,0,0)$, $P_3 = (0,0,1,0,0)$, $P_4 = (0,0,0,1/2,1/2)$.}}
\label{fig:synthetic-hd}
\end{figure}

\noindent
\textbf{High-dimensional experiment in molecular dynamics~~} We investigate protein folding dynamics and predict conditional transition probabilities between metastable states. Figure \ref{fig:chignolin} shows how, by integrating our NCP approach with a graph neural network (GNN), we achieve accurate state forecasting and strong uncertainty quantification, enabling efficient tracking of transitions. For further context and a full model description, see App \ref{app:benchmark-hd}.

\section{Conclusion}
We introduced NCP, a novel neural operator approach to learn the conditional probability distribution from complex and highly nonlinear data. NCP offers a number of benefits. Notably, it streamlines the training process by requiring just one unconditional training phase to learn the joint distribution $p(x,y)$. Subsequently, it allows us to efficiently derive conditional probabilities and other relevant statistics from the trained model analytically, without any additional conditional training steps or Monte Carlo sampling. Additionally, our method is backed by theoretical non-asymptotic guarantees ensuring the soundness of our training method and the accuracy of the obtained conditional statistics. 
Our experiments on learning conditional densities and confidence regions demonstrate our approach’s superiority or equivalence to leading methods, even using a simple Multi-Layer Perceptron (MLP) with two hidden layers and GELU activations. This highlights the effectiveness of a minimalistic architecture coupled with a theoretically grounded loss function. While complex architectures often dominate advanced machine learning, our results show that simplicity can achieve competitive results without compromising performance. 
Our numerical experiments suggest that, while our approach works well across different datasets and models, the price we pay for this generality appears to be the need for a relatively large sample size ($n\gtrsim 10^4$) 
to start outperforming other methods. Hence, a future direction is to study how to incorporate prior knowledge into our method to make it more data-efficient. Future works will also investigate the performance of NCP for multi-dimensional time series, causality and more general sensitivity analysis in uncertainty quantification.


\section*{Acknowledgements}
We acknowledge financial support from EU Project ELIAS under grant agreement No. 101120237, by NextGenerationEU and MUR PNRR project PE0000013 CUP J53C22003010006 “Future Artificial Intelligence Research (FAIR)" and by NextGenerationEU and MUR PNRR project RAISE “Robotics and AI for Socio-economic Empowerment" (ECS00000035).

\bibliographystyle{apalike}
{
\bibliography{bibliography}

\newpage

\appendix
\section*{Supplemental material}

The appendix is organized as follows:
\begin{itemize}
    \item Appendix \ref{app:methodology_supp} provides additional details on the post-processing for NCP.
    \item Appendix \ref{app:proofs} contains the proofs of the theoretical results and additional statistical results.
    \item In Appendix \ref{app:benchmark}, comprehensive details are presented regarding the experiment benchmark utilized to evaluate the performances of NCP.
\end{itemize}

\section{Details on training and algorithms}
\label{app:methodology_supp}

\subsection{Practical guidelines for training NCP}
\label{app:designtrainingloss}


\begin{itemize}
    \item 
    It is better to choose a larger \(d\) rather than a smaller one. Typically for the problems we considered in Section \ref{sec:exp}, we used $d\in \{100, 500\}$.
   \item 
   The regularization parameter $\gamma$ was found to yield the best results for $\gamma\in \{10^{-2}, 10^{-3}\}$.
    \item 
    To ensure the positivity of the singular values, we transform the vector $w^\theta$ with the Gaussian function $x \mapsto \exp ( -x^2)$ to recover $\sigma^\theta$ during any call of the forward method. The vector $w^\theta$ is initialized at random with parameters following a normal distribution of mean $0$ and standard deviation $\nicefrac{1}{d}$.
    \item 
    With the ReLU function, we observe instabilities in the loss function during training, whereas Tanh struggles to converge. In contrast, the use of GELU solves both problems.
    \item We can compute some statistical objects as a sanity check for 
    the convergence of NCP training.
    For instance, we can ensure that the computed conditional CDF
    satisfies all the conditions to be a valid CDF.
\item 
After training, an additional post-processing may be applied to ensure the orthogonality of operators $\embXp$ and $\embYp$. This 
{\it whitening} step is described in Alg \ref{alg:whitening_procedure} in App \ref{app:whitening}. It leads to an improvement of statistical accuracy of the trained NCP model. See the ablation study in Tab.~\ref{tab:NCP_ablation}.

\end{itemize}

\subsection{Learning dynamics with NCP}
\label{app:empiricaltrainingloss}

\begin{figure}[h!]
    \centering
    \includegraphics[width=\textwidth]{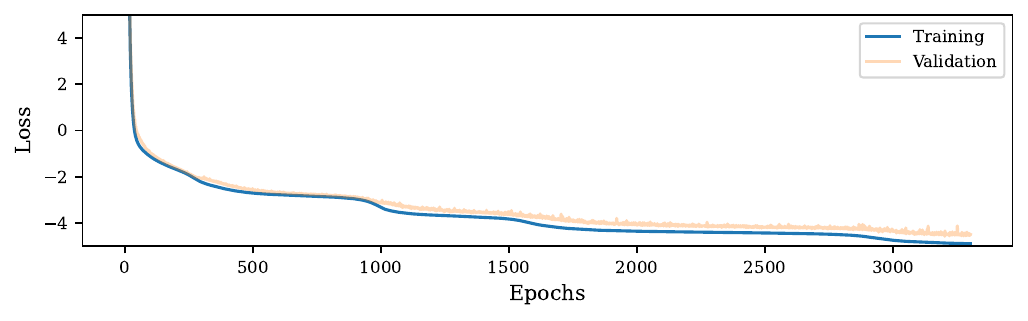}
   \vspace{-0.6cm}
    \caption{Learning dynamic for the Laplace experiment in Section \ref{sec:exp}.}
    \label{fig:learningdynamic}
\end{figure}




\subsection{Whitening post-processing}
\label{app:whitening}

We describe in Algorithm \ref{alg:whitening_procedure} the whitening post-processing procedure that we apply after training.

\begin{algorithm}
    \caption{Whitening procedure}
    \label{alg:whitening_procedure}
    \begin{algorithmic}
        \REQUIRE new data ($X_{\text{new}}$,$Y_{\text{new}}$); trained $u^{\theta}$, $\sigma^{\theta}$ and $v^{\theta}$

        \STATE Evaluate $u_X = u^{\theta} (X_{\text{train}})$ and $v_Y = v^{\theta} (Y_{\text{train}})$
        \STATE \textbf{Centering:}
        \bindent
        \STATE $u_X \gets u_X - \hat{\mathsf{E}} (u^{\theta}(X_{\text{train}}))$ and $v_Y \gets v_Y - \hat{\mathsf{E}} (v^{\theta}(Y_{\text{train}}))$
        \STATE $u_X \gets u_X \text{diag}(\sigma^{\theta})^{\frac{1}{2}}$ and $v_Y \gets v_Y \text{diag}(\sigma^{\theta})^{\frac{1}{2}}$
        \eindent
        
        \STATE Compute covariance matrices : 
        \bindent 
        \STATE $C_{X} \gets \nicefrac{u_x^\top u_x}{n}$
        \STATE $C_{Y} \gets \nicefrac{v_Y^\top v_Y}{n}$
        \STATE $C_{XY} \gets \nicefrac{u_X^\top v_Y}{n}$
        \eindent
        \STATE $U, V, \sigma^{\text{new}} \gets \text{SVD}\left(C_X^{-\nicefrac{1}{2}} C_{XY} C_Y^{-\nicefrac{1}{2}}\right)$

        \IF {($X_{\text{new}}$,$Y_{\text{new}}$)  is different than ($X_{\text{train}}$, $Y_{\text{train}}$)}
            \STATE $u_X \gets \left(u^{\theta}(X_{\text{new}}) - \hat{\mathsf{E}} (u^{\theta}(X_{\text{train}}) \right) \text{diag}(\sigma^{\theta})^{\frac{1}{2}}$
            \STATE $v_Y \gets \left(v^{\theta}(X_{\text{new}}) - \hat{\mathsf{E}} (v^{\theta}(Y_{\text{train}}) \right) \text{diag}(\sigma^{\theta})^{\frac{1}{2}}$
        \ENDIF
        \STATE Final whitening:
        \bindent
        \STATE $u_X^{\text{new}} \gets u_X C_X^{-\nicefrac{1}{2}} U$
        \STATE $v_Y^{\text{new}} \gets v_Y C_Y^{-\nicefrac{1}{2}} V$
        \eindent
        \RETURN $u_X^{\text{new}}$, $\sigma^{\text{new}}$, $v_Y^{\text{new}}$
    \end{algorithmic}
\end{algorithm}

\section{Proofs of theoretical results}
\label{app:proofs}

\subsection{A reminder on Hilbert spaces and compact operators}
\label{app:Hilbert-reminder}
\begin{definition}
    Given a vector space $\mathcal{H}$, we say it is a Hilbert space if there exists an inner product $\langle \cdot, \cdot \rangle$ such that:
\[
\mathcal{H} \text{ is complete with respect to the norm } \|x\| = \sqrt{\langle x, x \rangle} \text{ for all } x \in \mathcal{H}.
\]
\end{definition}

An important example of an infinite-dimensional Hilbert space is $L^2_{\mu}(\mathbb{R})$, the space of square-integrable functions w.r.t probability measure $\mu$ on $\mathbb{R}$ with the inner product defined as
$
\langle f, g \rangle = \int_{\mathbb{R}} f(x) \overline{g(x)} \, \mu(dx).
$

\begin{definition}[Bounded Operators]
Let $\mathcal{H}_1$ and $\mathcal{H}_2$ be Hilbert spaces. A linear operator $T: \mathcal{H}_1 \to \mathcal{H}_2$ is called \textit{bounded} if there exists a constant $C \geq 0$ such that for all $x \in \mathcal{H}_1$, the following inequality holds:
\[
\|T x\|_{\mathcal{H}_2} \leq C \|x\|_{\mathcal{H}_1}.
\]
The smallest such constant $C$ is called the \textit{operator norm} of $T$, denoted by $\|T\|$, and is given by:
\[
\|T\| = \sup_{x \neq 0} \frac{\|T x\|_{\mathcal{H}_2}}{\|x\|_{\mathcal{H}_1}}.
\]
\end{definition}
Bounded operators are continuous and play a key role in functional analysis.

\begin{definition}[Compact Operators]
Let $\mathcal{H}_1$ and $\mathcal{H}_2$ be Hilbert spaces. A bounded linear operator $T : \mathcal{H}_1 \to \mathcal{H}_2$ is called \textit{compact} if for any bounded sequence $\{x_n\} \subset \mathcal{H}_1$, there exists a subsequence $\{x_{n_k}\}$ such that $T x_{n_k}$ converges in $\mathcal{H}_2$.
\end{definition}

Compact operators can be viewed as infinite-dimensional analogues of matrices with finite rank in finite-dimensional spaces.


A key result in the theory of compact operators is the existence of a \textit{singular value decomposition (SVD)} for compact operators. The following is the statement of the \textit{Eckart-Young-Mirsky theorem}:

\begin{theorem}[Eckart-Young-Mirsky]
Let $T: \mathcal{H}_1 \to \mathcal{H}_2$ be a compact operator between Hilbert spaces. Then $T$ can be decomposed as:
\[
T = \sum_{i=1}^{\infty} \sigma_i \langle \cdot, u_i \rangle v_i,
\]
where $\{u_i\} \subset \mathcal{H}_1$ and $\{v_i\} \subset \mathcal{H}_2$ are orthonormal sets, and $\sigma_i$ are the singular values of $T$, which satisfy $\sigma_1 \geq \sigma_2 \geq \cdots \geq 0$. 

Moreover, for any rank-$k$ operator $T_k=\sum_{i=1}^{k} \sigma_i \langle \cdot, u_i \rangle v_i$, we have:
\[
\|T - T_k\| = \min_{\text{rank}(S) \leq k} \|T - S\|,
\]
where $\|\cdot\|$ is the operator norm induced by the Hilbert spaces.
\label{thm:EYM}
\end{theorem}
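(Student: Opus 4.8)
The plan is to prove the two assertions separately: first the existence of the SVD, via the spectral theorem for compact self-adjoint operators applied to $T^*T$; then the optimality of the truncated operator $T_k$, via a dimension-counting argument.

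For the \emph{existence} part, I would start from the operator $T^*T\colon\mathcal{H}_1\to\mathcal{H}_1$, which is compact, self-adjoint and positive semidefinite. The spectral theorem then yields an at most countable orthonormal family $\{u_i\}$ of eigenvectors with eigenvalues $\lambda_1\ge\lambda_2\ge\cdots>0$ accumulating only at $0$, together with $\overline{\Span\{u_i\}}=(\Ker T^*T)^\perp=(\Ker T)^\perp$ (the last identity because $\|Tx\|^2=\langle x,T^*Tx\rangle$). Setting $\sigma_i:=\sqrt{\lambda_i}$ and $v_i:=\sigma_i^{-1}Tu_i$, a one-line computation using $\langle Tu_i,Tu_j\rangle=\langle u_i,T^*Tu_j\rangle=\lambda_j\langle u_i,u_j\rangle$ shows that $\{v_i\}\subset\mathcal{H}_2$ is orthonormal. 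Decomposing any $x\in\mathcal{H}_1$ as $x=\sum_i\langle x,u_i\rangle u_i+x_0$ with $x_0\in\Ker T$ and applying $T$ gives $Tx=\sum_i\sigma_i\langle x,u_i\rangle v_i$; convergence holds in operator norm because the tail operator $\sum_{i>N}\sigma_i\langle\cdot,u_i\rangle v_i$ has norm $\sup_{i>N}\sigma_i\to0$, which is exactly where compactness of $T$ is used (it forces $\sigma_i\to0$).

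For the \emph{optimality} part, I would first compute $\|T-T_k\|$: since $T-T_k=\sum_{i>k}\sigma_i\langle\cdot,u_i\rangle v_i$, orthonormality gives $\|(T-T_k)x\|^2=\sum_{i>k}\sigma_i^2|\langle x,u_i\rangle|^2\le\sigma_{k+1}^2\|x\|^2$, with equality at $x=u_{k+1}$, so $\|T-T_k\|=\sigma_{k+1}$. Then, for an arbitrary $S$ with $\rank(S)\le k$, I would note that $\Ker S$ has codimension at most $k$, hence meets the $(k+1)$-dimensional subspace $W:=\Span\{u_1,\dots,u_{k+1}\}$ nontrivially; choosing a unit vector $x=\sum_{i=1}^{k+1}c_iu_i\in\Ker S\cap W$ yields
\[
\|T-S\|^2\ge\|(T-S)x\|^2=\|Tx\|^2=\sum_{i=1}^{k+1}\sigma_i^2|c_i|^2\ge\sigma_{k+1}^2\sum_{i=1}^{k+1}|c_i|^2=\sigma_{k+1}^2=\|T-T_k\|^2,
\]
which is precisely the claimed minimality.

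The main obstacle is the \emph{existence} step: it rests on the spectral theorem for compact self-adjoint operators and on the structural fact that the positive-eigenvalue eigenvectors of $T^*T$ span $(\Ker T)^\perp$ — these are the genuinely analytic ingredients. Once they are in hand, the optimality in the last two steps is a routine orthogonality and dimension-counting computation. A minor point requiring care is the finite-rank case, where all sums truncate and $\sigma_{k+1}=0$ for $k\ge\rank(T)$; the arguments go through verbatim with the convention that empty tails vanish.
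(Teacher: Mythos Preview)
Your proof is correct and follows the standard textbook approach: spectral theorem for the compact self-adjoint operator $T^*T$ to obtain the SVD, then a dimension-counting argument on $\Ker S \cap \Span\{u_1,\dots,u_{k+1}\}$ for the optimality of the truncation. Both steps are cleanly executed, and you correctly flag the finite-rank edge case.

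Note, however, that the paper does \emph{not} actually prove this theorem. It appears in the appendix section titled ``A reminder on Hilbert spaces and compact operators'', where it is stated as a classical background result without proof --- it is invoked as a known tool, not derived. So there is no paper proof to compare against; your proposal simply supplies the standard argument the paper takes for granted.
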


\subsection{Proof of  Lemma 
\ref{lem:approx}}
\begin{proof}[Proof of Lemma 
\ref{lem:approx}]
It follows from \eqref{eq:def_CE-DCE} and \eqref{eq:pointwisecondExp} that 
$$
\PP[Y\in B\,\vert\,X\in A]  - \PP[Y\in B]  -  \frac{\scalarp{\one_A,\SVDd{\DCE}\one_B}}{\PP[X\in A]} = \frac{\scalarp{\one_A,(\DCE-\SVDd{\DCE})\one_B}}{\PP[X\in A]}.
$$
Next, by definition of the operator norm, we have
\begin{align*}
|\scalarp{\one_A,(\DCE-\SVDd{\DCE})\one_B}|&\leq \norm{\DCE-\SVDd{\DCE}}_{\LiiY\rightarrow \LiiX} \norm{\one_A}_{\LiiX}\norm{\one_B}_{\LiiY}\\
&= \norm{\DCE-\SVDd{\DCE}}_{\LiiY\rightarrow \LiiX} \sqrt{\PP[X\in A]}\sqrt{\PP[Y\in B]}, 
\end{align*}
where the operator norm $\norm{\DCE-\SVDd{\DCE}}_{\LiiY\rightarrow \LiiX}$ is upper bounded by $\sval_{d+1}$ by definition of the SVD of $\DCE$.
\end{proof}

\subsection{Proof of Theorem \ref{thm:main}}\label{app:thm1}

\begin{proof}[Proof of Theorem \ref{thm:main}]
In the following, to simplify notation, whenever dependency on the parameters is not crucial, recalling that $(X,Y)$ and $(X',Y')$ are two iid samples from the joint distribution $\rho$, we will denote the vector-valued random variables in the latent (embedding) space as  $\embX:=\embXp(X)$, $\embX':=\embXp(X')$, $\embY:=\embYp(Y)$ and $\embY':=\embYp(Y')$, as well as $s=\svalp$ and $S:=\Sp$. Then, we can write the training loss simply as $\EE\, [\loss_\reg (\embX -\EE \embX,\embX-\EE \embX',\embY-\EE \embY,\embY'-\EE \embY',S)]$.

Let us further denote centered features as $\overline{\embX}=\embX -\EE \embX$ and $\overline{\embY}=\embY -\EE \embY$, and the operators based on them as 
$\CUp\!\colon\! \R^d\!\to\!\LiiX$ and $\CVp\!\colon\! \R^d\!\to\!\LiiY$ by 
\[
\CUp z := z^\top(\embXp - \EE[\embXp(X)])\one_{\spX}\;\text{ and } \; \CVp z := z^\top(\embYp- \EE[\embYp(Y)])\one_{\spY}, \text{ for }z\in\R^d.
\] 
and prove that $\exploss_0(\param) =  \hnorm{\CUp \Sp\CVp^*}^2 - 2\tr(\Sp\CUp^*\DCE\CVp)$. Since  we have that $\CUp=J_\mX\Up$ and $\CVp=J_\mY\Vp$, where $J_\mX = I-\one_{\spX}\otimes\one_{\spX}$ and $J_\mY=I-\one_{\spY}\otimes \one_{\spY}$ are orthogonal projectors in $\LiiX$ and $\LiiY$, respectively, as well as $\CUp^*\DCE\CVp = \Up^*J_\mX\DCE J_\mY\Vp = \Up^*J_\mX\CE J_\mY\Vp$, consequently  
\[
\CUp^*\DCE\CVp = \Up^*\CE\Vp -  \Up^*\one_{\spX}\otimes (\Vp^*\one_{\spY}) = \EE [\embXp(X) \EE[\embYp(Y)^\top\,\vert\,X]] - (\EE[\embXp(X)])(\EE[\embYp(Y)])^\top,
\]
that is $\CUp^*\DCE\CVp = \EE[\embX\embY^\top] - \EE[\embX]] \EE[\embY]^\top$ is simply centered cross-covariance in the embedding space. Recalling that $\Up^*\Up = \EE[\embX\embX^\top]$ and $\Vp^*\Vp = \EE[\embY\embY^\top]$ are covariance matrices in the embedding space, similarly we get that $\CUp^*\CUp = \EE[(\embX-\EE\embX)(\embX-\EE\embX)^\top]$ and $\CVp^*\CVp = \EE[(\embY-\EE\embY)(\embY-\EE\embY)^\top]$ are centered covariances. Thus, we obtain 
\begin{align*}
\exploss_0(\param) & = -2 \tr\EE[(S^{1/2}\overline{\embX}) (S^{1/2}\overline{\embY})^\top] + \tr(\EE[(S^{1/2}\overline{\embX})(S^{1/2}\overline{\embX})^\top ] \EE[(S^{1/2}\overline{\embY}) (S^{1/2}\overline{\embY})^\top ]) \\
& = -2 \EE[\overline{\embX}^\top S \overline{\embY}] + \tr(\EE[(S^{1/2}\overline{\embX})(S^{1/2}\overline{\embX})^\top ] \EE[(S^{1/2}\overline{\embY}) (S^{1/2}\overline{\embY})^\top ]) \end{align*}
which, by taking $(X,Y)$ and $(X',Y')$ to be iid random variables drawn from $\rho$, gives that $\exploss_0(\param)$ can be written as
\begin{align*}
 & \EE\left[ - \cembX S \cembY - \cembX'S \cembY' + \cembX' S \cembY + \cembX S \cembY' + 
 \tfrac{1}{2} \tr \left(S^{1/2}\cembX\cembX^\top S\cembY'\cembY'^\top S^{1/2} + S^{1/2}\cembX'\cembX'^\top S \cembY\cembY^\top S^{1/2} \right) \right]\\
& = \EE \left[ \tfrac{1}{2}\left(\cembX^\top S \cembY'\right)^2 + \tfrac{1}{2}\left(\cembX'^\top S \cembY \right)^2  - (\cembX - \cembX') S( \cembY - \cembY')  \right] = \EE\,[\loss_0(\cembX,\cembX',\cembY,\cembY',s)]\\
& = \EE\,[\loss_0(\embXp(X)-\EE\embXp(X),\embXp(X')-\EE\embXp(X'),\embYp(Y)-\EE\embXp(Y),\embYp(Y')-\EE\embXp(Y'),\svalp)].
\end{align*}
which implies that $\exploss_0(\param) = \hnorm{\CUp \Sp\CVp^*}^2 - 2\tr(\Sp\CUp^*\DCE\CVp)$. Moreover, to show that 
$\exploss_\reg(\param)=\exploss_0(\param)+\reg\,\rexploss(\param)$.
It suffices to note that 
\begin{align*}
\norm{\Up^*\Up-I}^2_F & = \tr((\Up^*\Up-I)^2) = \tr((\Up^*\Up)^2 - 2\Up^*\Up +I) = \\
& =  \tr(\EE[\embX\embX^\top] \EE[\embX'\embX'^\top] \!-\! \EE[\embX\embX^\top] \!-\! \EE[\embX'\embX'^\top] \!+\! I )\\ & = \EE[\tr(\embX\embX^\top \embX'\embX'^\top \!-\! \embX\embX^\top \!-\! \embX'\embX'^\top \!+\! I )]  = \EE \,\left[(\embX^\top \embX' )^2  - \norm{\embX}^2 - \norm{\embX'}^2  \right] + d,
\end{align*}
as well as that $\norm{\Up^*\one_{\mX}}^2 = \norm{\EE \embX}^2 = (\EE \embX)^\top(\EE \embX) = \EE \embX^\top\embX'$, 
and apply the analogous reasoning for random variable $Y\sim\mY$.

Now, given $r>d+1$, let us denote $D_r:=\sum_{i\in[r]}\sval_i\lsfun_i\otimes\rsfun_i$ and 
\begin{equation}\label{eq:loss_proof}
\exploss_0^r(\param) := 
\left\Vert
D_r 
\! -\!  
%
\CUp 
\Sp 
\CVp
\right\Vert^2_{\rm HS}  -  
\left\Vert
D_r
\right\Vert^2_{\rm HS}. 
\end{equation}
Then, applying the Eckhart-Young-Mirsky theorem, 
we obtain that 
\[
\exploss_0^r(\param) \geq \textstyle{\sum_{i=d+1}^{r}}\sigma_i^{\star 2} - \textstyle{\sum_{i\in[r]}}\sigma_i^{\star 2} = -  \textstyle{\sum_{i\in [d]}}\sigma_i^{\star 2},
\] 
with equality holding whenever $(\svalp_i,\embXp_i, \embYp_i)=(\sval_i,  \lsfun_i , \rsfun_i)$, $\rho$-almost everywhere.

To prove that the same holds for $\exploss_0(\param)$, observe that after expanding the HS norm via trace in \eqref{eq:loss_proof}, we have that  
\[
\exploss_0^r(\theta) = 
-2 \tr\left(
\Sp^{1/2}\CUp^*D_r\CVp \Sp^{1/2} 
\right) + 
\left\Vert
\Up \Sp\Vp^* 
\right\Vert^2_{\rm HS},
\]
and, consequently, 
\[
\exploss_0^r(\theta) = \hnorm{\CUp \Sp\CVp^*}^2 - 2\tr(\Sp^{1/2}\CUp^*D_r\CVp \Sp^{1/2}) = \exploss_0(\param) + 2\tr(\Sp\CUp^*(\DCE-D_r)\CVp).
\]
Thus, using Cauchy-Schwartz inequality, we obtain
\[
\abs{\exploss_0^r(\param)-\exploss_0(\param)}\leq \abs{\tr(\Sp \CUp^*(\DCE \!-\! D_r)\CVp)} \leq \norm{\Sp} \norm{\CUp^*}_{\rm HS}\norm{\DCE- \SVDr{\DCE}} \norm{\CVp^*}_{\rm HS}, 
\]
and,  therefore, $\abs{\exploss_0^r(\param)-\exploss_0(\param)} \leq \sval_{r+1} \sqrt{\tr(\Up^*\Up)\tr(\Vp^*\Vp)} \leq M d \sval_{r+1}$, where the constant is given by $M:=\max_{i\in[d]}\{\norm{\embXp_i-\EE\embXp_i}_{\LiiX},\norm{\embYp_i-\EE\embYp_i}_{\LiiY}\}<\infty$. So, $\exploss_0^r(\param) - M d \sval_{r+1} \leq \exploss_0(\param) \leq \exploss_0^r(\param) + M d \sval_{r+1}$, and, since $r>d+1$ was arbitrary, we can take $r$ arbitrary large to obtain $\sval_r\to 0$ and conclude that $\exploss_0(\param) \geq -\sum_{i\in [d]}\sigma_i^{\star 2}$, with equality holding when $(\svalp_i,\embXp_i, \embYp_i)=(\sval_i,  \lsfun_i , \rsfun_i)$, $\rho$-almost everywhere, since then $\CUp^*\DCE\CVp = \Up^*\DCE\Vp=\Up^*\Up \Sp  = \Sp =\Diag(\sigma_1,\ldots,\sigma_d)$.

Finally, we prove that $\reg>0$ and $\sval_{d}>\sval_{d+1}$ assure uniqueness of the global optimum. First, if the global minimum is achieved $\sval_{d}>\sval_{d+1}$ allows one to use uniqueness result in the Eckhart-Young-Mirsky theorem that states that $\sum_{i\in[d]}\sval_i\,\lsfun_i\otimes\rsfunp_i= \sum_{i\in[d]}\svalp_i\,\lsfunp_i\otimes\rsfunp_i$. But since, $\reg>0$ implies that $\rexploss(\param)=0$, i.e. $(\embXp_i)_{i\in[d]}\subset\LiiX$ and $(\embYp_i)_{i\in[d]}\subset\LiiY$ are two orthonormal systems in the corresponding orthogonal complements of constant functions, using the uniqueness of SVD, the proof is completed.  
\end{proof}

\subsection{Proof of Theorem \ref{thm:error}}
\begin{proof}[Proof of Theorem \ref{thm:error}]
Let us denote the operators arising from centered and empirically centered features as 
$\CUp,\ECUp\colon \R^d\to\LiiX$ and $\ECVp,\CVp\colon \R^d\to\LiiY$ by 
\[
\CUp z := z^\top(\embXp - \EE[\embXp(X)])\one_{\spX},\;\CVp z := z^\top(\embYp- \EE[\embYp(Y)])\one_{\spY}\;\text{ and } \; \ECUp z := z^\top\lsfunp,\; \ECVp z := z^\top\rsfunp,
\] 
respectively, for $z\in\R^d$. 

We first bound the error of the conditional expectation model as $\norm{\DCE - \EDCE}$ as follows.
\begin{align*}
\norm{\DCE - \EDCE} & = \norm{\DCE \pm \SVDd{\DCE}\pm \Up^*\Sp\Vp \pm \CUp^*\Sp \CVp - \ECUp^*\Sp\ECVp} \\
& \leq \sval_{d+1} + \error + \norm{\Up^*\Sp\Vp - \CUp^*\Sp \CVp} + \norm{\CUp^*\Sp\CVp - \ECUp^*\Sp\ECVp}.
\end{align*}

Next, using that $\norm{\Sp}\leq 1$ and that centered covariances are bounded by uncentered ones, i.e. $\CUp^*\CUp \preceq \Up^*\Up$, we have
\begin{align*}
\norm{\Up^*\Sp\Vp - \CUp^*\Sp \CVp} & = \norm{\Up^*\Sp\Vp \pm \CUp^*\Sp\Vp - \CUp^*\Sp\CVp} \\
& \leq \norm{\Up-\CUp} \norm{\Vp} + \norm{\CUp} \norm{\Vp-\CVp}  \\
& \leq \norm{\Up^*\one_{\spX}} \norm{\Vp^*\Vp}^{1/2} + \norm{\Vp^*\one_{\spY}} \norm{\Up^*\Up}^{1/2} 
\leq 2\error \sqrt{1+\error}.
\end{align*}
In a similar way, we obtain
\begin{align*}
\norm{\CUp^*\Sp\CVp - \ECUp^*\Sp \ECVp} & = \norm{\CUp^*\Sp\CVp \pm \ECUp^*\Sp\CVp - \ECUp^*\Sp\ECVp} \\
& \leq \norm{\CUp-\ECUp} \norm{\CVp} + \norm{\ECUp} \norm{\CVp-\ECVp}  \\
& \leq \norm{\CUp-\ECUp} \norm{\CVp} + \norm{\CVp-\ECVp} \norm{\CUp} +\norm{\CUp - \ECUp} \norm{\CVp-\ECVp}\\
& \leq \sqrt{1+\error}\big(\norm{\EEX[\embXp]- \EE[\embXp(X)]} + \norm{\EEY[\embYp]- \EE[\embYp(Y)]} \big) \\
& \qquad\qquad\qquad + \norm{\EEX[\embXp]- \EE[\embXp(X)]}\,\norm{\EEY[\embYp]- \EE[\embYp(Y)]} \\
& \leq 2\sqrt{1+\error} \rate_n(\delta) + [\rate_n(\delta)]^2.
\end{align*}
where 
$\norm{\EEX[\embXp]- \EE[\embXp(X)]} \leq \rate_n(\delta)$ and $\norm{\EEY[\embYp]- \EE[\embYp(Y)]}\leq \rate_n(\delta)$ hold w.p.a.l. $1-\delta$ in view of Lemma \ref{lem:bernsteiniidMean}.

In summary, w.p.a.l. $1-\delta$
\begin{equation}
\label{eq:DCEoperatornorm}
\norm{\DCE - \EDCE}\leq \sval_{d+1} + \error + 2\sqrt{1+\error} (\error+\rate_n(\delta)) + [\rate_n(\delta)]^2=:\psi_n(\delta).
\end{equation}




By definition in \eqref{eq:pointwisecondExp} and \eqref{eq:np_model}, we have
\[
\PP[Y\in B\,\vert\,X\in A] - \ECP{B}{A}  =   \EE[\one_B(Y)] - \EEY[\one_B] + 
\frac{\scalarp{\one_A,\DCE\one_B}}{\EE[\one_A(X)]} - \frac{\scalarp{\one_A,\EDCE\one_B}}{\EEX[\one_A]},
\]
and 
\[
\frac{\scalarp{\one_A,\DCE\one_B}}{\EE[\one_A(X)]} = \frac{\scalarp{\one_A,(\DCE - \EDCE)\one_B}}{\EE[\one_A(X)]} + \frac{\scalarp{\one_A,\EDCE\one_B}}{\EEX[\one_A]}\frac{\EEX[\one_A]}{\EE[\one_A(X)]}.
\]
Note also that $\norm{\one_A(X)}_{\LiiX} = \sqrt{\EE[1_{A}(X)]} = \sqrt{\PP[X\in A]}$, $\norm{\one_B(Y)}_{\LiiY} = \sqrt{\EE[1_{B}(Y)]}= \sqrt{\PP[Y\in B]}$, for any $A\in \sigalgX$ and $B\in \sigalgY$ and
$$
\abs{\scalarp{\one_A,(\DCE-\EDCE )\one_B}}\leq \norm{\DCE-\EDCE}\norm{\one_A(X)}_{\LiiX}\norm{\one_B(Y)}_{\LiiY}.
$$
Combining the previous observations, we get
\begin{align}
\label{eq:proof-interm2}
\abs{\PP[Y\in B\,\vert\,X\in A] - \ECP{B}{A}} &\leq \left(\frac{\abs{\EEY[\one_B] - \EE[\one_B(Y)]}}{\EE[\one_B(Y)]} + \frac{\norm{\DCE-\EDCE}}{\sqrt{\EE[1_{A}(X)]\EE[1_{B}(Y)]}}\right)\EE[\one_B(Y)]\notag\\
&\hspace{1.5cm}+ \frac{\abs{\scalarp{\one_A,\EDCE\one_B}}}{\EEX[\one_A]}\frac{\abs{\EEX[\one_A]-\EE[\one_A(X)]}}{\EE[\one_A(X)]},
\end{align}
and
\begin{align}
 \label{eq:proof-interm3}
 \frac{\abs{\scalarp{\one_A,\EDCE\one_B}}}{\EEX[\one_A]}&\leq \frac{\EE[\one_A(X)]}{\EEX[\one_A]} \left( \frac{\abs{\scalarp{\one_A,\DCE\one_B}}}{\EE[\one_A(X)]} + \frac{\abs{\scalarp{\one_A,(\DCE - \EDCE)\one_B}}}{\EE[\one_A(X)]} \right)\notag\\
 &\leq  \frac{\EE[\one_A(X)]}{\EEX[\one_A]} \left( \norm{\DCE}+ \norm{\DCE-\EDCE}\right) \sqrt{\frac{\EE[\one_B(Y)]}{\EE[\one_A(X)]}}\notag\\
 &\leq \frac{\EE[\one_A(X)]}{\EEX[\one_A]}\sqrt{\frac{\EE[\one_B(Y)]}{\EE[\one_A(X)]}} \left( 1+ \norm{\DCE-\EDCE} \right),
\end{align}
where we have used that $\norm{\DCE}\leq 1$.

Similarly, we have
\begin{align}
\label{eq:proofmainresultProbrelative-1}
   & \frac{\PP[Y\in B\,\vert\,X\in A]}{\PP[Y\in B]} - \frac{\ECP{B}{A}}{\EPY{B}}  
   \notag\\
   &\hspace{0.5cm}= \frac{\scalarp{\one_A,\DCE\one_B}}{\EE[\one_A(X)] \EE[\one_B(Y)]} - \frac{\scalarp{\one_A,\EDCE\one_B}}{\EEX[\one_A]\EEY[\one_B]}\notag\\
    &\hspace{0.5cm}= \frac{\scalarp{\one_A,(\DCE-\EDCE )\one_B}}{\EEX[\one_A]\EEY[\one_B]}\notag\\
&\hspace{1.5cm}+\scalarp{\one_A,\EDCE\one_B}\left(\frac{1}{\EE[\one_A(X)]\EE[\one_B(Y)]}- \frac{1}{\EEX[\one_A]\EEY[\one_B]}\right)\notag\\
&\hspace{0.5cm}= \frac{\scalarp{\one_A,(\DCE-\EDCE )\one_B}}{\EEX[\one_A]\EEY[\one_B]}\notag\\
&\hspace{0.5cm}+\frac{\scalarp{\one_A,\EDCE\one_B}}{\EE[\one_A(X)]\EE[\one_B(Y)]}\left( \frac{(\EEX[\one_A]-\EE[\one_A(X)] )\EEY[\one_B] + \EE[\one_A(X)] (\EEY[\one_B] -\EE[\one_B(Y)]) }{ \EEX[\one_A]\EEY[\one_B]}\right)\notag\\
&\hspace{0.5cm}= \frac{\scalarp{\one_A,(\DCE-\EDCE )\one_B}}{\EEX[\one_A]\EEY[\one_B]}\notag\\
&\hspace{0.9cm}+\frac{\scalarp{\one_A,\EDCE\one_B}}{\EE[\one_A(X)]\EE[\one_B(Y)]}\left(\frac{\EEX[\one_A]-\EE[\one_A(X)] }{ \EEX[\one_A]}+ \frac{\EE[\one_A(X)] (\EEY[\one_B] -\EE[\one_B(Y)]) }{ \EEX[\one_A]\EEY[\one_B]}\right).
\end{align}

Next Lemmas \ref{lem:bernsteiniidProbX} and \ref{lem:bernsteiniidProbY} combined with \eqref{eq:cond_sample} and elementary algebra give w.p.a.l. $1-2\delta$ that 
$$
\bigg\vert
\frac{\EEX[\one_A]-\EE[\one_A(X)] }{ \EEX[\one_A]}\bigg\vert \leq 2 \varphi_X(A)\overline{\epsilon}_{n}(\delta),\quad \bigg\vert
\frac{\EEY[\one_B]-\EE[\one_B(Y)] }{ \EEX[\one_B]}\bigg\vert \leq 2 \varphi_Y(B)\overline{\epsilon}_{n}(\delta),
$$
and
$$
\frac{\EE[\one_A(X)]}{\EEX[\one_A]} \vee \frac{\EE[\one_B(Y)]}{\EEY[\one_B]}\leq 2,\quad \bigg\vert  \frac{\EE[\one_A(X)] (\EEY[\one_B] -\EE[\one_B(Y)]) }{ \EEX[\one_A]\EEY[\one_B]}\bigg\vert \leq 4 \varphi_Y(B)\overline{\epsilon}_{n}(\delta).
$$
It also holds on the same probability event as above that
$$
\bigg\vert \frac{\scalarp{\one_A,(\DCE-\EDCE )\one_B}}{\EEX[\one_A]\EEY[\one_B]}\bigg\vert  {\leq} \frac{\norm{\DCE-\EDCE}}{\sqrt{\EE[\one_A(X)]\EE[\one_B(Y)]}} \frac{\EE[\one_A(X)]}{\EEX[\one_A]} \frac{\EE[\one_B(Y)]}{\EEY[\one_B]}{\leq} 4  \frac{\norm{\DCE-\EDCE}}{\sqrt{\EE[\one_A(X)]\EE[\one_B(Y)]}}.
$$
Combining Lemma \ref{lem:bernsteiniidMean} and \eqref{eq:DCEoperatornorm}, we get with probability at least $1-\delta$
that $\norm{\DCE-\EDCE}\leq \psi_n(\delta).$

By a union bound combining the last two displays with \eqref{eq:DCEoperatornorm}, \eqref{eq:proofmainresultProbrelative-1}, \eqref{eq:proof-interm2} and \eqref{eq:proof-interm3}, we get with probability at least $1-3\delta$
\begin{equation}
\label{eq:proofmainresultProbrelative-2}
    \bigg\vert  \frac{\PP[Y\in B\,\vert\,X\in A]}{\PP[Y\in B]} - \frac{\ECP{B}{A}}{\EPY{B}}  \bigg\vert \leq   \frac{4\psi_n(\delta) + \left[ 1 + \psi_n(\delta)\right] \left[ 2 \varphi_X(A) + 4\varphi_Y(B) \right]\overline{\epsilon}_n(\delta)}{\sqrt{\EE[\one_A(X)]\EE[\one_B(Y)]}},
\end{equation}
and 
\begin{equation}
\label{eq:proofmainresultProb-2}
    \bigg\vert \frac{ \PP[Y\in B\,\vert\,X\in A] - \ECP{B}{A}}{\PP[Y\in B]}  \bigg\vert \leq \varphi_Y(B)\overline{\epsilon}_n(\delta) + \frac{2(1+\psi_n(\delta))\varphi_X(A) \overline{\epsilon}_n(\delta)+ \psi_n(\delta) }{\sqrt{\EE[\one_A(X)]\EE[\one_B(Y)]}}.
\end{equation}
Replacing $\delta$ by $\delta/3$, we get the result w.p.a.l. $1-\delta$.


\end{proof}

The following result will be useful to investigate the theoretical properties of the NCP method in the iid setting.
\begin{lemma}\label{lem:bernsteiniidMean}
    Let Assumption \ref{ass:boundedmapping} be satisfied. Assume in addition that $n\geq c_u^2 d$.
    Then there exists an absolute constant $C>0$ such that, for any $\delta \in (0,1)$, it holds w.p.a.l. $1-\delta$
    $$
    \norm{\EEX[\embXp]- \EE[\embXp(X)]} \leq  C \,  c_u\sqrt{d}\, \left( 
    \frac{\log (e\delta^{-1})}{n}  +  \sqrt{\frac{\,\log (e\delta^{-1})}{n}} \right).
    $$
Similarly, if $n\geq c_v^2 d$, w.p.a.l. $1-\delta$
    $$
    \norm{
    \EEY[\embYp]- \EE[\embYp(Y)] 
    }\leq C c_v\sqrt{d} \left(
    \frac{\log (e\delta^{-1})}{n}  +  \sqrt{\frac{\,\log (e\delta^{-1})}{n}} \right).
    $$
\end{lemma}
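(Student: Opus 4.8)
The plan is to prove a Bernstein-type concentration inequality for the empirical mean $\EEX[\embXp] = \tfrac{1}{n}\sum_{i\in[n]}\embXp(x_i)$ of the $\R^d$-valued random vectors $\embXp(x_i)$, around their true mean $\EE[\embXp(X)]$, using a vector-valued (Hilbert-space) Bernstein inequality. The first step is to set $Z_i := \embXp(x_i) - \EE[\embXp(X)]$, which are i.i.d., centered, and bounded: by Assumption \ref{ass:boundedmapping} we have $\norm{\embXp(x_i)}_{l_\infty}\leq c_u$, hence $\norm{\embXp(x_i)}_{l_2}\leq c_u\sqrt{d}$, and therefore $\norm{Z_i}_{l_2}\leq 2 c_u\sqrt{d}=:R$ almost surely. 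The second step is to bound the variance proxy: $\EE\norm{Z_i}_{l_2}^2 = \mathrm{Var}(\text{coordinates summed}) = \sum_{j\in[d]}\mathrm{Var}(\embXp_j(X)) \leq \sum_{j\in[d]}\EE[\embXp_j(X)^2]\leq d\cdot c_u^2$, so $\sigma^2 := \EE\norm{Z_i}^2_{l_2}\leq c_u^2 d$.

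Next I would invoke a standard Hilbert-space-valued Bernstein inequality (e.g. Pinelis--Sakhanenko, or the version in the proof apparatus used elsewhere in the paper): for i.i.d. centered $Z_i$ in a separable Hilbert space with $\norm{Z_i}\leq R$ a.s. and $\sum_i\EE\norm{Z_i}^2\leq n\sigma^2$, one has, with probability at least $1-\delta$,
\[
\Bigl\Vert \tfrac{1}{n}\textstyle\sum_{i\in[n]} Z_i \Bigr\Vert \;\leq\; C\left(\frac{R\,\log(e\delta^{-1})}{n} + \sqrt{\frac{\sigma^2\,\log(e\delta^{-1})}{n}}\right)
\]
for an absolute constant $C>0$. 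Plugging in $R = 2c_u\sqrt{d}$ and $\sigma^2\leq c_u^2 d$ (so $\sqrt{\sigma^2}\leq c_u\sqrt{d}$) gives exactly
\[
\norm{\EEX[\embXp] - \EE[\embXp(X)]} \;\leq\; C' c_u\sqrt{d}\left(\frac{\log(e\delta^{-1})}{n} + \sqrt{\frac{\log(e\delta^{-1})}{n}}\right),
\]
after absorbing the factor $2$ into the constant. The same argument verbatim with $c_v$ in place of $c_u$ gives the bound for $\EEY[\embYp] - \EE[\embYp(Y)]$. The assumption $n\geq c_u^2 d$ (resp. $n\geq c_v^2 d$) is what makes the bound meaningful — it ensures the sub-Gaussian term $\sqrt{d\log(e\delta^{-1})/n}$ dominates and the whole right-hand side is $o(1)$; it can also be used to simplify the bound to $\lesssim c_u\sqrt{d}\sqrt{\log(e\delta^{-1})/n}$ when $\log(e\delta^{-1})\leq n$, but keeping both terms is harmless.

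The only real subtlety — and the step I would be most careful about — is choosing a Bernstein inequality whose variance term scales with the \emph{scalar} quantity $\EE\norm{Z_i}_{l_2}^2$ rather than with the operator norm of the covariance times an ambient-dimension factor, since the latter would produce an extra $\sqrt{d}$ and break the stated rate. The Pinelis--Sakhanenko / smooth-norm Banach-space Bernstein inequality (or the matrix/Hilbert Bernstein of Tropp / Minsker restricted to the operator $\sum_i Z_i\otimes e$) delivers precisely the form above with $\sigma^2 = \sum_i \EE\norm{Z_i}^2$, so this is the tool to cite. Everything else — the boundedness bound $\norm{Z_i}\leq 2c_u\sqrt d$, the variance bound $\sigma^2\leq c_u^2 d$, and the final algebraic substitution — is routine. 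If the paper prefers a fully self-contained route, one can instead apply the scalar Bernstein inequality to $\norm{\tfrac1n\sum_i Z_i}$ after a standard symmetrization/Rademacher-complexity reduction, but the bounded-difference plus variance computation is cleanest via the vector Bernstein route.
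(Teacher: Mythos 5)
Your proposal is correct and follows essentially the same route as the paper: the paper also sets $Z_i=\embXp(X_i)-\EE\embXp(X_i)$, bounds $\norm{Z_i}\leq 2c_u\sqrt d$ via Assumption \ref{ass:boundedmapping}, bounds the variance proxy by $c_u^2 d$ using $n\geq c_u^2 d$, and applies a Hilbert-space Bernstein-type inequality (Minsker's Corollary 4.1) before simplifying to the stated rate. The only cosmetic difference is that the paper works with $\sigma^2_\theta(X)=\mathrm{Var}(\norm{\embXp(X)-\EE\embXp(X)}_{l_2})$ and carries it through before absorbing it into $c_u^2 d$, whereas you bound $\EE\norm{Z_i}^2$ directly; both are controlled by the same quantity and yield the same conclusion.
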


\begin{proof}[Proof of Lemma \ref{lem:bernsteiniidMean}]
We note that
\begin{align*}
     \EEX[\embXp]- \EE[\embXp(X)] &= \frac{1}{n}\sum_{i=1}^{n} Z_i\quad \text{with }\quad Z_i=\embXp(X_i) - \EE  \embXp(X_i),\; \forall i \in [n].
\end{align*}
We note that $\norm{Z_i}\leq 2 c_u\, \sqrt{d}=:U$ and $\mathrm{Var}(Z_i) = \mathrm{Var}(\norm{\embXp(X_i) - \EE[\embXp(X_i)]})=\sigma^2_{\theta }(X)$ for any $i\in [n]$.
We apply \citet[Corollary 4.1]{MINSKER2017111} to get for any $t\geq \frac{1}{6}(U + \sqrt{U^2+36 n\sigma^2_{\theta }(X)})$,
\begin{equation}
    \PP\left[ \norm{\sum_{i=1}^n  Z_i  } > t  \right] \leq 28 \exp\left(-\frac{t^2/2}{n\sigma^2_{\theta }(X) + t U/3 }\right).
\end{equation}
Replacing $t$ by $nt$ and some elementary algebra give for any $t\geq \frac{1}{6}\left(\frac{U}{n} + \sqrt{\frac{U^2}{n^2}+36 \frac{\sigma^2_{\theta }(X)}{n}}\right)=:\overline{c}$, w.p.a.l. $1-28 \exp\left(- t \right)$,
$$
\norm{\frac{1}{n}\sum_{i=1}^n  Z_i  } \leq    \frac{4U}{3} \frac{t}{n}  + 2 \sigma_{\theta }(X)  \sqrt{\frac{t}{n}} . 
$$
Replacing $t$ by $t+\overline{c}$, we get for any $t\geq 0$, w.p.a.l. $1-28 \exp\left(- t+ \overline{c}\right)$, 
$$
\norm{\frac{1}{n}\sum_{i=1}^n  Z_i  } \leq    \frac{4U}{3} \frac{t+ \overline{c}}{n}  + 2 \sigma_{\theta }(X)  \sqrt{\frac{t+ \overline{c}}{n}} . 
$$
Up to a rescaling of the constants, there exists a numerical constant $C>0$ such that for any $\delta \in (0,1)$, w.p.a.l. $1-\delta$
$$
\norm{\frac{1}{n}\sum_{i=1}^n  Z_i  } \leq  C \left( \frac{U}{n}\overline{c} +  \sigma_{\theta }(X)  \sqrt{\frac{\overline{c}}{n}} + U\frac{t}{n}  + \sigma_{\theta }(X)  \sqrt{\frac{t}{n}} \right). 
$$
Elementary computations give the following bound, that is, there exists a numerical constant $C>0$ such that for any $t>0$, w.p.a.l. $1-\exp(-t)$
$$
\norm{\frac{1}{n}\sum_{i=1}^n  Z_i  } \leq  C \left( \frac{c_u\, \sqrt{d}}{n}\vee \frac{c_u^2\, d}{n^2} \vee \frac{\sigma^{3/2}_{\theta }(X)}{n^{3/4}} \vee \frac{\sigma^{2}_{\theta }(X)}{n} + c_u\frac{\sqrt{d}\,t}{n}  + \sigma_{\theta }(X)  \sqrt{\frac{t}{n}} \right). 
$$
Under Assumption \ref{ass:boundedmapping} and the condition $\frac{c_u^2\, d}{n}\leq 1$, it also holds that $\frac{\sigma^{2}_{\theta }(X)}{n}\leq 1$ since $\sigma^{2}_{\theta }(X)\leq c_u^2 d$. Consequently, the bound simplifies and we obtain for any $t>1$, w.p.a.l. $1-\exp(-t)$
$$
\norm{\frac{1}{n}\sum_{i=1}^n  Z_i  } \leq  C  \,   c_u\sqrt{d}\left(  \frac{t}{n}  \vee \sqrt{\frac{t}{n}} \right) , 
$$
where $C>0$ is possibly a different absolute constant from the previous bound. Taking $t=\,\log e\delta^{-1}$ for any $\delta\in (0,1)$ gives the first result.
We proceed similarly to get the second result.
\end{proof}

\paragraph{Control on empirical probabilities} We derive now a concentration result for empirical probabilities.
\begin{lemma}\label{lem:bernsteiniidProbX}
    For any $A\in \sigalgX$ and any $\delta \in (0,1)$, it holds w.p.a.l. $1-\delta$
$$
\abs{\EEX[\one_A]-\EE[\one_A(X)]} \leq  2 \frac{\log 2\delta^{-1}}{n}+ \sqrt{\PP[X\in A](1 - \PP[X\in A])}\sqrt{2  \frac{\log 2\delta^{-1}}{n} }.
$$   
Assume in addition that $\PP(X\in A)\geq 2 \sqrt{2  \frac{\log 2\delta^{-1}}{n} } $. Then it holds w.p.a.l. $1-\delta$
 $$
    \frac{\abs{\EEX[\one_A]-\EE[\one_A(X)]}}{\EE[\one_A(X)]}\leq \sqrt{2  \frac{\log 2\delta^{-1}}{n} } \sqrt{1 \vee \frac{ 1 - \PP[X\in A]}{\PP[X\in A]} }  .
    $$
\end{lemma}

\begin{proof}
We note that
\begin{align*}
     \EEX[\one_A(X)]- \EE[\one_A(X)] &= \frac{1}{n}\sum_{i=1}^{n} Z_i\quad \text{with }\quad Z_i=\one_A(X_i) - \EE[\one_A(X_i)],\; \forall i \in [n].
\end{align*}
We note that $|Z_i|\leq 2$ and $\mathrm{Var}(Z_i) = \PP[X\in A](1 - \PP[X\in A])$.
Then \citet[Theorem 2.9]{BercuRiu} gives w.p.a.l. $1-2\delta$
$$
\abs{\EEX[\one_A]-\EE[\one_A(X)]} \leq  2 \frac{\log \delta^{-1}}{n}+ \sqrt{\PP[X\in A](1 - \PP[X\in A])}\sqrt{2  \frac{\log \delta^{-1}}{n} }.
$$
Dividing by $\EE[\one_A(X)]$ gives w.p.a.l. $1-2\delta$
$$
\frac{\abs{\EEX[\one_A]-\EE[\one_A(X)]}}{\EE[\one_A(X)]}\leq  2 \sqrt{2  \frac{\log \delta^{-1}}{n} } \sqrt{\frac{[ 2 \log (\delta^{-1})/n] \vee (1 - \PP[X\in A])}{\PP[X\in A]} }  .
$$
Replacing $\delta$ by $\delta/2$ gives the result for $X$. The result for $Y$ follows from a similar reasoning.
\end{proof}

The same proof argument gives an identical result for $Y$.

\begin{lemma}\label{lem:bernsteiniidProbY}
    For any $B\in \sigalgY$ and any $\delta \in (0,1)$, it holds w.p.a.l. $1-\delta$
$$
\abs{\EEY[\one_B]-\EE[\one_B(Y)]} \leq  2 \frac{\log 2\delta^{-1}}{n}+ \sqrt{\PP[Y\in B](1 - \PP[Y\in B])}\sqrt{2  \frac{\log 2\delta^{-1}}{n} }.
$$   
Assume in addition that $\PP(Y\in B)\geq 2 \sqrt{2  \frac{\log 2\delta^{-1}}{n} } $. Then it holds w.p.a.l. $1-\delta$
 $$
    \frac{\abs{\EEY[\one_B]-\EE[\one_B(Y)]}}{\EE[\one_B(Y)]}\leq \sqrt{2  \frac{\log 2\delta^{-1}}{n} } \sqrt{1 \vee \frac{ 1 - \PP[Y\in B]}{\PP[Y\in B]} }  .
    $$
\end{lemma}

\subsection{Sub-Gaussian case}
\label{app:subgaussian}

\paragraph{Sub-Gaussian setting.} We derive another concentration result under a less restricted sub-Gaussian condition on functions $\embXp$ and $\embYp$. This result relies on Pinelis and Sakhanenko's inequality for random variables in a separable Hilbert space, see \citep[][Proposition 2]{caponnetto2007}.

Let $\psi_2(x)= e^{x^2}-1$, $x\geq 0$. We define the $\psi_2$-Orlicz norm of a random variable $\eta$ as 
$$
\norm{\eta}_{\psi_2}:= \inf\left\lbrace C>0\,:\, \EE\left[\psi_2\left(\frac{|\eta|}{C}\right)\right]\leq 1 \right\rbrace.
$$

We recall the definition of a sub-Gaussian random vector. 
\begin{definition}[Sub-Gaussian random vector]
\label{def:subgaussian}
A centered random vector $X\in \mathbb{R}^{\underline{d}}$ will be called sub-Gaussian iff, for all $u\in \mathbb{R}^{\underline{d}}$,
$$
\norm{\langle X,u \rangle}_{\psi_2} \lesssim \norm{\langle X,u \rangle}_{L_2(\PP)}.
$$
\end{definition}

\begin{proposition}{\citet[Proposition 2]{caponnetto2007}}\label{prop:con_ineq_ps}
Let $A_i$, $i\in[n]$ be i.i.d copies of a random variable $A$ in a separable Hilbert space with norm $\norm{\cdot}$. If there exist constants $L>0$ and $\sigma>0$ such that for every $m\geq2$, $\EE\norm{A}^m\leq \frac{1}{2} m! L^{m-2} \sigma^2$, 
then with probability at least $1-\delta$ 
\begin{equation}\label{eq:con_ineq_ps}
\left\|\frac{1}{n}\sum_{i\in[n]}A_i - \EE A \right\|\leq \frac{4\sqrt{2}}{\sqrt{n}}  \sqrt{ \sigma^2 + \frac{L^2}{n} } \log\frac{2}{\delta}.
\end{equation}
\end{proposition}


\begin{lemma}[(Sub-Gaussian random variable) Lemma 5.5. in \cite{vershynin2011introduction}]
\label{lem:sub-gaussian-def}
Let $Z$ be a random variable. Then, the following assertions are equivalent with parameters $K_i>0$ differing from each other by at most an absolute constant factor.
\begin{enumerate}
    \item Tails:  
      $\PP \{ |Z| > t \} \le \exp(1-t^2/K_1^2)$ for all $t \ge 0$;
    \item Moments:
      $(\EE |Z|^p)^{1/p} \le K_2 \sqrt{p}$ for all $p \ge 1$;
    \item Super-exponential moment:
      $\EE \exp(Z^2/K_3^2) \le 2$.
  \end{enumerate}
A random variable $Z$ satisfying any of the above assertions is called a sub-Gaussian random variable. We will denote by $K_3$ the sub-Gaussian norm.
\end{lemma}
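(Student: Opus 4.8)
The plan is to prove the equivalence through the cyclic chain $(1)\Rightarrow(2)\Rightarrow(3)\Rightarrow(1)$, keeping track at each step of how the constant changes; in every step it only deteriorates by an absolute multiplicative factor, which is exactly what is claimed. None of the three implications is deep; the only genuine technical input is Stirling's bound for the Gamma function, and the only thing requiring some care is the bookkeeping of which constant plays which role (in particular, identifying $K_3$ as \emph{the} sub-Gaussian norm relative to the tail constant $K_1$ and the moment constant $K_2$).

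For $(1)\Rightarrow(2)$, I would assume $\PP\{|Z|>t\}\le\exp(1-t^2/K_1^2)$ and write, for $p\ge 1$, $\EE|Z|^p=\int_0^\infty\PP\{|Z|>u^{1/p}\}\,du\le e\int_0^\infty\exp(-u^{2/p}/K_1^2)\,du$. The substitution $u=K_1^p s^{p/2}$ turns the last integral into $eK_1^p(p/2)\int_0^\infty s^{p/2-1}e^{-s}\,ds=eK_1^p\,\Gamma(p/2+1)$. Since $e^{1/p}\le e$ for $p\ge1$ and $\Gamma(p/2+1)^{1/p}\le C\sqrt{p}$ by Stirling for an absolute constant $C$, this yields $(\EE|Z|^p)^{1/p}\le CeK_1\sqrt{p}$, so $K_2=CeK_1$ works.

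For $(2)\Rightarrow(3)$, I would Taylor-expand $\EE\exp(Z^2/K_3^2)=1+\sum_{k\ge1}\EE Z^{2k}/(K_3^{2k}k!)$. Using the moment bound $\EE Z^{2k}\le(K_2\sqrt{2k})^{2k}=(2k)^kK_2^{2k}$ together with $k!\ge(k/e)^k$, each summand is at most $(2eK_2^2/K_3^2)^k$, and choosing $K_3^2\ge4eK_2^2$ makes the common ratio at most $1/2$, so $\sum_{k\ge1}(1/2)^k=1$ and $\EE\exp(Z^2/K_3^2)\le2$. Thus $K_3=2\sqrt{e}\,K_2$ suffices.

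For $(3)\Rightarrow(1)$, Markov's inequality applied to $\exp(Z^2/K_3^2)$ gives $\PP\{|Z|>t\}\le\exp(-t^2/K_3^2)\,\EE\exp(Z^2/K_3^2)\le2\exp(-t^2/K_3^2)\le\exp(1-t^2/K_3^2)$, using $2\le e$; so one may take $K_1=K_3$. Chaining the three bounds shows that whichever assertion is assumed, the other two hold with constants differing from it by at most an absolute factor, which is the claim. The only mildly delicate points are the Stirling estimate in the first implication and checking that the threshold $K_3^2\ge4eK_2^2$ in the second is attainable — neither amounts to a real obstacle.
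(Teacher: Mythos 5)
Your cyclic argument $(1)\Rightarrow(2)\Rightarrow(3)\Rightarrow(1)$ is correct, and all three steps (tail integration plus Stirling, the Taylor expansion of $\exp(Z^2/K_3^2)$ with $K_3^2\asymp K_2^2$, and exponential Markov) check out with the constant bookkeeping you give. The paper does not prove this lemma itself but imports it from Vershynin's notes, and your proof is precisely the standard argument given there, so there is nothing to compare beyond noting the match.
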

Consequently, a sub-Gaussian random variable satisfies the following equivalence of moments property. There exists an absolute constant $c>0$ such that for any $m\geq 2$, 
$$
\big(\EE|Z|^m \big)^{1/m} \leq c K_3\sqrt{m} \big(\EE|Z|^2 \big)^{1/2}.
$$

\begin{lemma}\label{lem:bernsteiniidMean-subgaussian}
    Assume that $\norm{\embXp(X) - \EE[\embXp(X)]}$ and $\norm{\embYp(Y) - \EE[\embYp(Y)]}$ are sub-Gaussian with sub-Gaussian norm $K$. 
    We set $\sigma^2_{\theta }(X):= \mathrm{Var}(\norm{\embXp(X) - \EE[\embXp(X)]})$, $\sigma^2_{\theta }(Y):= \mathrm{Var}(\norm{\embYp(Y) - \EE[\embYp(Y)]})$. Then there exists an absolute constant $C>0$ such that for any $\delta \in (0,1)$, it holds w.p.a.l. $1-\delta$
    $$
    \norm{\EEX[\embXp]- \EE[\embXp(X)]} \leq  \frac{C}{\sqrt{n}} \sqrt{\sigma^2_{\theta }(X) + \frac{K^2}{n}}  \log(2\delta^{-1}). 
    $$
Similarly, w.p.a.l. $1-\delta$
    $$
    \norm{
    \EEY[\embYp]- \EE[\embYp(Y)] 
    }\leq \frac{C}{\sqrt{n}} \sqrt{\sigma^2_{\theta }(Y) + \frac{K^2}{n}}  \log(2\delta^{-1})
    $$
\end{lemma}

\begin{proof}
Set $Z:=\norm{\embXp(X) - \EE \embXp(X) }$ and we recall that $\sigma^2_{\theta }(X):= \mathrm{Var}(\norm{\embXp(X) - \EE[\embXp(X)]})$. We check that the moment condition, 
$$
\EE Z^m\leq \frac{1}{2} m! L^{m-2} \sigma^2_{\theta }(X)^2,\quad \forall m\geq 2,
$$
for some constant $L>0$ to be specified.

The condition is obviously satisfied for $m=2$. Next for any $m\geq 3$, the Cauchy-Schwarz inequality and the equivalence of moment property give
$$
\EE Z^m\leq \left(\EE Z^{2(m-2)} \right)^{1/2} \left(\EE Z^{4} \right)^{1/2}\leq 4 K_3^2  \sigma^2_{\theta }(X)^2  \left(\EE Z^{2(m-2)} \right)^{1/2}.
$$
Next, by homogeneity, rescaling $Z$ to $Z/K_1$ we can assume that $K_1=1$ in Lemma \ref{lem:sub-gaussian-def}. We recall that if $Z$  is in addition non-negative random variable, then for every integer $p\geq 1$, we have
$$
\EE Z^p 
= \int_0^\infty \PP \{ Z \ge t \} \, p t^{p-1} \, dt
\le \int_0^\infty e^{1-t^2} p t^{p-1} \, dt
= \big( \frac{ep}{2} \big) \Gamma \big( \frac{p}{2} \big).
$$
With $p=2(m-2)$, we get that $ \EE Z^p \leq e(m-2) \Gamma \big( m-2 \big) = e (m-2)! = e m!/2.
$
Using again Lemma \ref{lem:sub-gaussian-def}, we can take $L=c K$ for some large enough absolute constant $c>0$. Then Proposition \ref{prop:con_ineq_ps} gives the result.


\end{proof}

\subsection{Estimation of conditional expectation and Conditional covariance}
\label{app:Exp-Covariance}

We now derive guarantees for the estimation of the conditional expectation and the conditional covariance for vector-valued output $Y\in \mathbb{R}^{d_y}$.

We start with a general result for arbitrary vector-valued functions of $Y$. We consider a vector-valued function $\underline{h} = (h_1,\ldots,h_{\underline{d}})$ where $h_j \in \LiiY$ for any $j\in [\underline{d}]$. We introduce the space of square integrable vector-valued functions $[\LiiYRd]$ equipped with the norm
$$
\norm{\underline{h}} = \sqrt{\sum_{j\in [\underline{d}]} \norm{h_j}_{\LiiY}^2 }.
$$
Next we can define the conditional expectation of $\underline{h}(Y) = (h_1(Y^{(1)}),\ldots h_{\underline{d}}(Y^{(d_y)}))^\top$ conditionally on $X\in A$ as follows
\begin{align*}
    \EE[\underline{h}(Y)\,\vert\,X\in A] &=   \left( \EE[h_1(Y)] +  \frac{\langle  \one_A, \DCE h_1 \rangle}{\PP(X\in A)} , \ldots, \EE[h_{\underline{d}}(Y)] + \frac{\langle  \one_A, \DCE h_{\underline{d}} \rangle}{\PP(X\in A)} \right)^{\top}\\
    &= \EE[\underline{h}(Y)] +  \frac{\langle  \one_A, [\one_{\underline{d}}\otimes\DCE] \underline{h} \rangle}{\PP(X\in A)}.
\end{align*}
We define similarly its empirical version as
\begin{align*}
\ECEnull[\underline{h}(Y)\,\vert\,X\in A] &=  \left( \EEY[h_1] + \frac{\langle  \one_A, \EDCE h_1 \rangle}{\EEX[\one_{A}]} , \ldots,  \EEY[h_{\underline{d}}] + \frac{\langle  \one_A, \EDCE h_{\underline{d}} \rangle}{\EEX[\one_{A}]} \right)^{\top}\\
&=\EEY[\underline{h}] +  \frac{\langle  \one_A, [\one_{\underline{d}}\otimes\EDCE] \underline{h} \rangle}{\EEX[\one_{A}]}.
%
\end{align*}

Assuming that $\underline{h}(Y)$ is sub-Gaussian, we set 
$$
K:= \norm{\norm{\underline{h}(Y) - \EE[\underline{h}(Y)]} }_{\psi_2},\quad \sigma^2(\underline{h}(Y)):= \mathrm{Var}(\norm{\underline{h}(Y) - \EE[\underline{h}(Y)]}).
$$
Define
\begin{align*}
\underline{\psi}_n(\delta)&:=\frac{1}{\sqrt{n}} \sqrt{\sigma^2(\underline{h}(Y)) + \frac{K^2}{n}}  \log(3\delta^{-1}) \\
& \qquad\quad+ \frac{\norm{\underline{h}}}{\sqrt{\PP(X\in A)} } \bigg( \psi_n(\delta/3) + 2(1 + \psi_n(\delta/3)) \varphi_X(A)\overline{\epsilon}_{n}(\delta/3)\bigg).    
\end{align*}

\begin{theorem}
\label{thm:arbitraryfunction}
Let the assumptions of Theorem \ref{thm:error} be satisfied. Assume in addition that $\underline{h}(Y)$ is sub-Gaussian. Then we have w.p.a.l. $1-\delta$ that
\begin{align}
  \norm{\ECEnull[\underline{h}(Y)\,\vert\,X\in A] - \EE[\underline{h}(Y)\,\vert\,X\in A]} \lesssim  \underline{\psi}_n(\delta).
\end{align}
\end{theorem}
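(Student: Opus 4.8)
The plan is to mimic the structure of the proof of Theorem~\ref{thm:error}, but tracking the full vector-valued error rather than a scalar probability, and to replace the Bernoulli concentration for $\EEY[\one_B]$ by the sub-Gaussian concentration of Lemma~\ref{lem:bernsteiniidMean-subgaussian} (or rather its analogue obtained from Proposition~\ref{prop:con_ineq_ps}) applied to the random variable $\underline{h}(Y)$. First I would decompose the error into the by-now-familiar three pieces. Writing
\[
\ECEnull[\underline{h}(Y)\,\vert\,X\in A] - \EE[\underline{h}(Y)\,\vert\,X\in A] = \big(\EEY[\underline{h}] - \EE[\underline{h}(Y)]\big) + \bigg(\frac{\langle \one_A,[\one_{\underline{d}}\otimes\EDCE]\underline{h}\rangle}{\EEX[\one_A]} - \frac{\langle \one_A,[\one_{\underline{d}}\otimes\DCE]\underline{h}\rangle}{\PP(X\in A)}\bigg),
\]
and then splitting the second bracket exactly as in \eqref{eq:proof-interm2}--\eqref{eq:proof-interm3}: one term involving $\langle \one_A,[\one_{\underline{d}}\otimes(\DCE-\EDCE)]\underline{h}\rangle/\PP(X\in A)$, and one term $\frac{\langle \one_A,[\one_{\underline{d}}\otimes\EDCE]\underline{h}\rangle}{\EEX[\one_A]}\cdot\frac{\EEX[\one_A]-\PP(X\in A)}{\PP(X\in A)}$ capturing the sampling error in the denominator.

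Next I would bound each piece. For the first, $\norm{\EEY[\underline{h}] - \EE[\underline{h}(Y)]}$, I apply Proposition~\ref{prop:con_ineq_ps} to the i.i.d.\ sum $\tfrac1n\sum_i \underline{h}(y_i)$ in the separable Hilbert space $L^2_\mY(\spY,\R^{\underline d})$ (equivalently $\R^{\underline d}$), using that $\underline h(Y)$ sub-Gaussian gives the required factorial moment bound with $L\asymp K$ and $\sigma^2 = \sigma^2(\underline h(Y))$; this yields $\tfrac{C}{\sqrt n}\sqrt{\sigma^2(\underline h(Y)) + K^2/n}\,\log(3\delta^{-1})$ w.p.a.l. $1-\delta/3$, exactly the first summand of $\underline\psi_n(\delta)$. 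For the $\DCE-\EDCE$ term, I use that $[\one_{\underline d}\otimes(\DCE-\EDCE)]$ acting on $\underline h$ and then paired against $\one_A$ obeys
\[
\norm{\langle \one_A,[\one_{\underline d}\otimes(\DCE-\EDCE)]\underline h\rangle} \le \norm{\DCE-\EDCE}\,\norm{\one_A}_{\LiiX}\,\norm{\underline h} = \norm{\DCE-\EDCE}\,\sqrt{\PP(X\in A)}\,\norm{\underline h},
\]
so after dividing by $\PP(X\in A)$ this contributes $\norm{\underline h}\,\psi_n(\delta/3)/\sqrt{\PP(X\in A)}$, invoking \eqref{eq:DCEoperatornorm} which holds w.p.a.l. $1-\delta/3$. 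For the denominator term, I bound $\norm{\langle \one_A,[\one_{\underline d}\otimes\EDCE]\underline h\rangle}/\EEX[\one_A]$ just as in \eqref{eq:proof-interm3} — it is $\le \tfrac{\PP(X\in A)}{\EEX[\one_A]}(1+\psi_n(\delta/3))\sqrt{\PP(X\in A)}^{-1}\norm{\underline h}\cdot(\text{const})$ — and then multiply by $\abs{\EEX[\one_A]-\PP(X\in A)}/\PP(X\in A)$, which by Lemma~\ref{lem:bernsteiniidProbX} and \eqref{eq:cond_sample} is $\le 2\varphi_X(A)\overline\epsilon_n(\delta/3)$ w.p.a.l. $1-\delta/3$; combined this gives the remaining $\tfrac{\norm{\underline h}}{\sqrt{\PP(X\in A)}}\cdot 2(1+\psi_n(\delta/3))\varphi_X(A)\overline\epsilon_n(\delta/3)$ term of $\underline\psi_n(\delta)$. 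A union bound over the three events of probability $\delta/3$ finishes the argument, and the stated bound then follows by Theorem~\ref{thm:arbitraryfunction}'s $\lesssim$ absorbing the constants $\tfrac{\PP(X\in A)}{\EEX[\one_A]}\le 2$.

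The routine-but-fiddly part is the same bookkeeping of ratios $\PP(X\in A)/\EEX[\one_A]$ and cross terms that already appears in the proof of Theorem~\ref{thm:error}; I expect the only genuinely new point requiring care to be the verification of the factorial-moment hypothesis of Proposition~\ref{prop:con_ineq_ps} for $\norm{\underline h(Y) - \EE\underline h(Y)}$ from the sub-Gaussian assumption — this is done exactly as in the proof of Lemma~\ref{lem:bernsteiniidMean-subgaussian} via Cauchy--Schwarz and the moment characterization in Lemma~\ref{lem:sub-gaussian-def}, giving $L = cK$ for an absolute constant $c$. No delicate operator-theoretic argument beyond $\norm{\DCE}\le 1$ and $\norm{\DCE-\EDCE}\le\psi_n$ is needed, since the tensor structure $\one_{\underline d}\otimes(\cdot)$ only rescales norms by the Hilbert-space norm of $\underline h$.
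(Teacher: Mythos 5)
Your proposal is correct and follows essentially the same route as the paper's proof: the same three-term decomposition (empirical mean of $\underline{h}(Y)$, the $\DCE-\EDCE$ term controlled by $\psi_n$ via \eqref{eq:DCEoperatornorm}, and the denominator sampling error controlled by Lemma \ref{lem:bernsteiniidProbX} together with $\norm{\DCE}\leq 1$), with the first term handled exactly as you describe by applying the Pinelis--Sakhanenko bound of Proposition \ref{prop:con_ineq_ps} to $\underline{h}(Y)$ in place of $\embXp(X)$ as in Lemma \ref{lem:bernsteiniidMean-subgaussian}. The union bound and bookkeeping of the ratios $\PP(X\in A)/\EEX[\one_A]\le 2$ also match the paper's argument.
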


\begin{proof}
We have
\begin{align}
&\norm{\EE[\underline{h}(Y)\,\vert\,X\in A] - \ECEnull[\underline{h}(Y)\vert X\in A]}\notag\\ &\hspace{1cm}\leq \norm{\EE[\underline{h}(Y)] {-} \EEY[\underline{h}]} {+} \frac{\norm{\DCE - \EDCE}}{\sqrt{\PP(X\in A)}} \norm{\underline{h}}{+}\vert \langle  \one_A, [\one_{\underline{d}}\otimes\EDCE] \underline{h} \rangle\vert \bigg\vert \frac{1 }{\EEX[\one_{A}]} {-}  \frac{1 }{\PP(X\in A)}\bigg\vert \notag\\
&\hspace{1cm}\leq \norm{\EE[\underline{h}(Y)] - \EEY[\underline{h}]} + \frac{\norm{\DCE - \EDCE}}{\sqrt{\PP(X\in A)}} \norm{\underline{h}}\notag\\
 &\hspace{2cm} +\sqrt{\PP(X\in A)} (\norm{\DCE} + \norm{\DCE - \EDCE})\norm{\underline{h}}\bigg\vert \frac{\PP(X\in A) - \EEX[\one_{A}]}{\EEX[\one_{A}] \PP(X\in A)} \bigg\vert.
\end{align}
Recall that $\norm{\DCE}\leq 1$, \eqref{eq:DCEoperatornorm} and Lemma \ref{lem:bernsteiniidProbX}. Hence, a union bound we get with w.p.a.l. $1-2\delta$ that
\begin{align}
\label{eq:condexpect-interm1}
&\norm{\EE[\underline{h}(Y)\,\vert\,X{\in} A] {-} \ECEnull[\underline{h}(Y)\vert X{\in} A]}\notag\\ 
&\hspace{1cm}\leq \norm{\EE[\underline{h}(Y)] {-} \EEY[\underline{h}]} {+}\frac{\norm{\underline{h}}}{\sqrt{\PP(X{\in }A)} } \bigg( \psi_n(\delta) {+} 2(1 {+} \psi_n(\delta)) \varphi_X(A)\overline{\epsilon}_{n}(\delta)\bigg).
\end{align}
We now handle the first term $\norm{\EE[\underline{h}(Y)] - \EEY[\underline{h}]}$. We recall that a similar quantity was already studied in Lemma \ref{lem:bernsteiniidMean-subgaussian}. We can just replace $\embXp(X)$ by $\underline{h}(Y)\in \mathbb{R}^{\underline{d}}$ to get the result since we assumed that $\underline{h}(Y)$ is sub-Gaussian. Hence there exists an absolute constant $C>0$ such that w.p.a.l. $1-\delta$ 
 $$
\norm{\EE[\underline{h}(Y)] - \EEY[\underline{h}]} \leq  \frac{C}{\sqrt{n}} \sqrt{\sigma^2(\underline{h}(Y)) + \frac{K^2}{n}}  \log(2\delta^{-1}).
$$
\end{proof}

Actually, we can handle the conditional expectation $\EE[Y\,\vert\,X\in A]$ in a more direct way. Set
$$
\underline{\epsilon}_n(\delta):= \sqrt{\frac{\log(\delta^{-1} d_y)}{n}} \bigvee\frac{\log(\delta^{-1}d_y)}{n}.
$$

\begin{corollary}
    \label{cor:conditionalExp}
    Let the Assumptions of Theorem \ref{thm:error} be satisfied. Assume in addition that $Y$ is a sub-Gaussian vector. Then for any $\delta\in (0,1)$, it holds with probability at least $1-\delta$ that
\begin{align}
   & \norm{\EE[Y\,\vert\,X\in A] - \ECEnull[Y\vert X\in A]}\lesssim \sqrt{\mathrm{tr}(\mathrm{Cov}(Y))} \underline{\epsilon}_n(\delta/3)\notag\\
   &\hspace{2cm}+ \frac{\norm{\underline{h}}}{\sqrt{\PP(X\in A)} } \bigg( \psi_n(\delta/3) + 2(1 + \psi_n(\delta/3)) \varphi_X(A)\overline{\epsilon}_{n}(\delta/3)\bigg) =: \psi_n^{(1)}(\delta).
\end{align}%
\end{corollary}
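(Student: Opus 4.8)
The plan is to reuse the deterministic decomposition already isolated in the proof of Theorem~\ref{thm:arbitraryfunction}, specialised to the identity map $\underline{h}(y)=y$ (so that $\underline{d}=d_y$), and then to replace the single Hilbert-space-valued concentration step by a sharper coordinate-wise one that exploits the fact that $Y$ is a sub-Gaussian vector in the sense of Definition~\ref{def:subgaussian}.

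First I would take $\underline{h}(y)=y$ in the bound \eqref{eq:condexpect-interm1}, which gives, with probability at least $1-2\delta$,
\[
\norm{\EE[Y\,\vert\,X\in A] - \ECEnull[Y\vert X\in A]} \leq \norm{\EE[Y] - \EEY[Y]} + \frac{\norm{\underline{h}}}{\sqrt{\PP(X\in A)}}\Big( \psi_n(\delta) + 2(1 + \psi_n(\delta)) \varphi_X(A)\overline{\epsilon}_{n}(\delta)\Big).
\]
The second summand is already the second term of $\psi_n^{(1)}(\delta)$, and its derivation is a verbatim reuse of the bound $\norm{\DCE-\EDCE}\le\psi_n(\delta)$ from \eqref{eq:DCEoperatornorm}, the empirical-probability control on $\EEX[\one_A]$ from Lemma~\ref{lem:bernsteiniidProbX}, and $\norm{\DCE}\le 1$. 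So the only thing left is to control $\norm{\EE[Y]-\EEY[Y]}$ more tightly than Lemma~\ref{lem:bernsteiniidMean-subgaussian} would.

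Second, I would bound $\norm{\EE[Y]-\EEY[Y]}$ coordinate by coordinate. For each $j\in[d_y]$ the scalar variable $Y_j-\EE Y_j$ is sub-Gaussian with $\norm{Y_j-\EE Y_j}_{\psi_2}\lesssim \sqrt{\mathrm{Var}(Y_j)}$ (apply Definition~\ref{def:subgaussian} with $u=e_j$, together with Lemma~\ref{lem:sub-gaussian-def}), so a scalar Bernstein/sub-Gaussian tail bound for the i.i.d.\ average $\tfrac1n\sum_i (Y_j^{(i)}-\EE Y_j)$ — the same mechanism used in Lemma~\ref{lem:bernsteiniidProbX}, now with sub-Gaussian tails — yields, with probability at least $1-\delta/d_y$,
\[
\abs{\EE[Y_j] - \EEY[Y_j]} \lesssim \sqrt{\mathrm{Var}(Y_j)}\,\Big(\sqrt{\tfrac{\log(\delta^{-1}d_y)}{n}} \vee \tfrac{\log(\delta^{-1}d_y)}{n}\Big).
\]
A union bound over $j\in[d_y]$, followed by squaring and summing, gives
\[
\norm{\EE[Y]-\EEY[Y]}^2 = \sum_{j\in[d_y]}\abs{\EE[Y_j]-\EEY[Y_j]}^2 \lesssim \Big(\sum_{j\in[d_y]}\mathrm{Var}(Y_j)\Big)\Big(\tfrac{\log(\delta^{-1}d_y)}{n}\vee\tfrac{\log^{2}(\delta^{-1}d_y)}{n^2}\Big),
\]
and since $\sum_{j}\mathrm{Var}(Y_j)=\tr(\mathrm{Cov}(Y))$ this reads $\norm{\EE[Y]-\EEY[Y]}\lesssim \sqrt{\tr(\mathrm{Cov}(Y))}\,\underline{\epsilon}_n(\delta)$ with probability at least $1-\delta$.

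Combining the two displays by a union bound, and replacing $\delta$ by $\delta/3$ so that the total failure probability is at most $2\delta/3+\delta/3=\delta$, yields the claim with right-hand side $\psi_n^{(1)}(\delta)$. The main obstacle — the one I would be most careful about — is the coordinate-wise step: one must center each coordinate and use the sharp estimate $\norm{Y_j-\EE Y_j}_{\psi_2}\lesssim\sqrt{\mathrm{Var}(Y_j)}$ (rather than the crude $\norm{Y_j-\EE Y_j}_{\psi_2}\lesssim c_v$), so that after squaring and summing the leading constant is exactly $\sqrt{\tr(\mathrm{Cov}(Y))}$, and one must verify that the $d_y$-fold union bound enters only through the $\log(\delta^{-1}d_y)$ factor inside $\underline{\epsilon}_n(\delta)$ and contributes no polynomial factor in $d_y$; everything else is a direct transcription of the arguments already used in the proofs of Theorems~\ref{thm:error} and~\ref{thm:arbitraryfunction}.
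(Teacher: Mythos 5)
Your proposal is correct and follows essentially the same route as the paper: both specialise the decomposition \eqref{eq:condexpect-interm1} from Theorem~\ref{thm:arbitraryfunction} to $\underline{h}(y)=y$ and then control $\norm{\EE[Y]-\EEY[Y]}$ by coordinate-wise Bernstein-type concentration with a union bound over the $d_y$ components, arriving at the $\sqrt{\tr(\mathrm{Cov}(Y))}\,\underline{\epsilon}_n(\delta)$ term. The only cosmetic difference is that the paper keeps $\max_j\norm{Y^{(j)}}_{\psi_2}$ in the second-order Bernstein term and bounds it afterwards by $\sqrt{\norm{\mathrm{Cov}(Y)}}\le\sqrt{\tr(\mathrm{Cov}(Y))}$, whereas you fold $\norm{Y_j-\EE Y_j}_{\psi_2}\lesssim\sqrt{\mathrm{Var}(Y_j)}$ into each coordinate before squaring and summing.
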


\begin{proof}
    The proof of this result is identical to that of Theorem \ref{thm:arbitraryfunction} up to \eqref{eq:condexpect-interm1}. Now if we specify $\underline{h}(Y)= Y\in \mathbb{R}^{d_y}$. Then, applying Bernstein's inequality on each of the $d_y$ components of $\EE[Y] - \overline{Y}_n$ and a union bound, we get w.p.a.l. $1-\delta$
\begin{align*}
\norm{\EE[Y] - \overline{Y}_n} \lesssim \sqrt{\mathrm{tr}(\mathrm{Cov}(Y))} \sqrt{\frac{\log(\delta^{-1} d_y)}{n}} + \max_{j\in [d_y]}\norm{Y^{(j)}}_{\psi_2} \frac{\log(\delta^{-1}d_y)}{n}.
\end{align*}
Using again Definition \ref{def:subgaussian}, we obtain $\max_{j\in [d_y]}\norm{Y^{(j)}}_{\psi_2} \lesssim \sqrt{\norm{\mathrm{Cov}(Y)}}\leq \sqrt{\mathrm{tr}(\mathrm{Cov}(Y))}$.

It follows from the last two displays, w.p.a.l. $1-\delta$
\begin{align}
\label{eq:condexpectation-interm2}
\norm{\EE[Y] - \overline{Y}_n} \lesssim \sqrt{\mathrm{tr}(\mathrm{Cov}(Y))} \underline{\epsilon}_n(\delta).
\end{align}
A union bound combining the previous display with \eqref{eq:condexpect-interm1} gives the first result.
\end{proof}

We focus now on the conditional covariance estimation problem. We first define the conditional covariance as follows:
\begin{align}
\label{eq:defcond_cov}
&\mathrm{Cov}(Y\vert X\in A) = \mathrm{Cov}(Y) + \langle \one_A, [(\one_{d_y} \otimes \one_{d_y})\otimes \DCE] \underline{h}\otimes \underline{h} \rangle/\PP[X\in A] \notag\\
&\hspace{4cm}- \langle \one_A, [\one_{d_y}\otimes \DCE] \underline{h} \rangle \otimes  \langle \one_A, [\one_{d_y}\otimes \DCE] \underline{h} \rangle/(\PP[X\in A])^2.
\end{align}
Note that $\langle \one_A, [(\one_{d_y} \otimes \one_{d_y})\otimes \DCE] \underline{h}\otimes \underline{h} \rangle = (\langle \one_A,  \DCE h_j h_k \rangle)_{j,k \in [d_y]}$ is a $d_y \times d_y$ matrix. We obtain a similar decomposition for the estimator $\ecovYXA$ of the conditional covariance $\mathrm{Cov}(Y\vert X\in A)$ by replacing $\DCE$ by $\EDCE$:
\begin{align}
\label{eq:defcond_cov_empirical}
\ecovYXA &:= \widehat{\mathrm{Cov}
}(Y) + \langle \one_A, [(\one_{d_y} \otimes \one_{d_y})\otimes \EDCE] \underline{h}\otimes \underline{h} \rangle/\EEX[\one_A] \notag\\
&\hspace{2cm}- \langle \one_A, [\one_{d_y}\otimes \EDCE] \underline{h} \rangle \otimes  \langle \one_A, [\one_{d_y}\otimes \EDCE] \underline{h} \rangle/(\EEX[\one_A])^2 .
\end{align}

We define the effective of covariance matrix $\mathrm{Cov}(Y)$ as follows:
$$
\mathbf{r}(\mathrm{Cov}(Y)):=\frac{\mathrm{tr}(\mathrm{Cov}(Y))}{\norm{\mathrm{Cov}(Y)}}.
$$
We set for any $\delta\in (0,1)$
\begin{align}
\label{eq:ratecovarianceeffectiverank}
    \epsilon_n^{(2)}(\delta):= \norm{\mathrm{Cov}(Y)} \left( \sqrt{\frac{ \mathbf{r}(\mathrm{Cov}(Y))}{n}} + \frac{\mathbf{r}(\mathrm{Cov}(Y))}{n} +  \sqrt{\frac{\log (\delta^{-1})}{n}} + \frac{ \log (\delta^{-1})}{n} \right),
\end{align}
and
\begin{align*}
\psi_n^{(2)}(\delta)&= \epsilon_n^{(2)}(\delta) + \left[ \psi_n(\delta/4) +  2(1+ \psi_n(\delta/4))  \varphi_{X}(A)\overline{\epsilon}_n(\delta/4) \right] \frac{(\EE[\norm{Y}^2])^2}{\sqrt{\PP[X\in A]}} \\
&\hspace{1cm}+  \psi_n^{(1)}(\delta/4) \big[2 \norm{\EE[Y\,\vert\,X\in A]} + \psi_n^{(1)}(\delta/4)\big].
\end{align*}

\begin{corollary}
\label{cor:condcovariance}
Let the assumptions of Corollary \ref{cor:conditionalExp} be satisfied. Then for any $\delta\in (0,1)$, it holds with probability at least $1-\delta$ that
\begin{align}
&\norm{\ecovYXA - \mathrm{Cov}(Y\vert X\in A)} \lesssim \psi_n^{(2)}(\delta).
\end{align}
\end{corollary}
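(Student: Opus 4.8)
The plan is to bound $\norm{\ecovYXA - \mathrm{Cov}(Y\,\vert\,X\in A)}$ by splitting the error along the three building blocks that appear in both \eqref{eq:defcond_cov} and \eqref{eq:defcond_cov_empirical}: (i) the purely unconditional piece $\widehat{\mathrm{Cov}}(Y) - \mathrm{Cov}(Y)$; (ii) the second–moment correction $\langle\one_A, [(\one_{d_y}\otimes\one_{d_y})\otimes\DCE]\underline h\otimes\underline h\rangle/\PP[X\in A]$ compared with its empirical analogue built from $\EDCE$ and $\EEX[\one_A]$; and (iii) the squared first–moment correction $\langle\one_A,[\one_{d_y}\otimes\DCE]\underline h\rangle^{\otimes 2}/(\PP[X\in A])^2$ compared with its empirical analogue. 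I would control each of the three terms in operator norm on a high–probability event, and then a union bound together with the rescaling $\delta\mapsto\delta/4$ will assemble them into $\psi_n^{(2)}(\delta)$.

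For piece (i), since $Y$ is sub-Gaussian, I would invoke a standard non-asymptotic concentration bound for the empirical covariance operator of a sub-Gaussian vector (the effective-rank rate), which gives $\norm{\widehat{\mathrm{Cov}}(Y) - \mathrm{Cov}(Y)}\lesssim\epsilon_n^{(2)}(\delta)$ with probability at least $1-\delta$; this is exactly the quantity defined in \eqref{eq:ratecovarianceeffectiverank}. For piece (ii), I would add and subtract $\langle\one_A,[(\one_{d_y}\otimes\one_{d_y})\otimes\EDCE]\underline h\otimes\underline h\rangle/\PP[X\in A]$. Testing the matrix $\langle\one_A,[(\one_{d_y}\otimes\one_{d_y})\otimes(\DCE-\EDCE)]\underline h\otimes\underline h\rangle$ against an arbitrary unit vector $w\in\R^{d_y}$ turns it into the scalar $\langle\one_A, (\DCE-\EDCE)\big((w^\top(Y-\EE[Y]))^2\big)\rangle$, which by the definition of the operator norm and Cauchy--Schwarz is at most $\norm{\DCE-\EDCE}\sqrt{\PP[X\in A]}\,\big(\EE(w^\top(Y-\EE[Y]))^4\big)^{1/2}$; sub-Gaussianity of $Y$ bounds this fourth moment by a constant multiple of $(\EE[\norm{Y}^2])^2$ uniformly in $w$, while \eqref{eq:DCEoperatornorm} gives $\norm{\DCE-\EDCE}\le\psi_n(\delta)$. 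The leftover term (with $\EDCE$ fixed and the denominator changing from $\PP[X\in A]$ to $\EEX[\one_A]$) is handled using $\norm{\DCE}\le 1$, hence $\norm{\EDCE}\le 1+\psi_n(\delta)$, together with Lemma \ref{lem:bernsteiniidProbX} applied to $\abs{\EEX[\one_A]-\PP[X\in A]}$ and the sample-size assumption $\PP[X\in A]\ge\overline{\epsilon}_n(\delta/3)$, which also forces $\EEX[\one_A]\gtrsim\PP[X\in A]$. Collecting terms, piece (ii) contributes $\big[\psi_n(\delta/4) + 2(1+\psi_n(\delta/4))\varphi_{X}(A)\overline{\epsilon}_n(\delta/4)\big]\,(\EE[\norm{Y}^2])^2/\sqrt{\PP[X\in A]}$, mirroring the operator part of the bound in Theorem \ref{thm:arbitraryfunction}.

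For piece (iii), write $m := \EE[Y\,\vert\,X\in A]$ and $\widehat m := \ECEnull[Y\,\vert\,X\in A]$. Using the identity $\widehat m\otimes\widehat m - m\otimes m = (\widehat m - m)\otimes\widehat m + m\otimes(\widehat m - m)$ gives $\norm{\widehat m\otimes\widehat m - m\otimes m}\le\norm{\widehat m - m}\,\big(2\norm{m} + \norm{\widehat m - m}\big)$, and Corollary \ref{cor:conditionalExp} supplies $\norm{\widehat m - m}\le\psi_n^{(1)}(\delta)$ with probability at least $1-\delta$, producing the summand $\psi_n^{(1)}(\delta/4)\big[2\norm{\EE[Y\,\vert\,X\in A]} + \psi_n^{(1)}(\delta/4)\big]$. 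A final union bound over the three events, each occurring with probability at least $1-\delta/4$ (the remaining $\delta/4$ budget absorbing the invocations of Lemma \ref{lem:bernsteiniidProbX} and of the operator-norm bound \eqref{eq:DCEoperatornorm}), followed by $\delta\mapsto\delta/4$, assembles the contributions into $\psi_n^{(2)}(\delta)$.

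The step I expect to be the main obstacle is (ii): the quadratic features $\underline h\otimes\underline h = (Y-\EE[Y])(Y-\EE[Y])^\top$ are only sub-exponential when $Y$ is sub-Gaussian, so Theorem \ref{thm:arbitraryfunction} cannot be applied verbatim and one must verify directly that the $L^2$-norm entering the operator-norm estimate of $(\DCE-\EDCE)(\underline h\otimes\underline h)$ is finite and correctly controlled by the sub-Gaussian parameters; the rest is the (somewhat lengthy but otherwise routine) bookkeeping of the three distinct $\PP[X\in A]$-denominators and their cross terms, carried out as in the proofs of Theorem \ref{thm:error} and Theorem \ref{thm:arbitraryfunction}.
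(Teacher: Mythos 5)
Your proposal is correct and follows essentially the same route as the paper: the same three-term decomposition of $\ecovYXA - \covYXA$, the effective-rank concentration bound for $\widehat{\mathrm{Cov}}(Y)$, the direct operator-norm control of the $(\DCE-\EDCE)$ term on the quadratic features via sub-Gaussian moment equivalence (the paper goes through the Hilbert--Schmidt norm and entrywise bounds $\norm{Y_jY_k}_{\LiiY}\lesssim \EE[Y_j^2]\EE[Y_k^2]$ rather than testing against unit vectors, but this is only a bookkeeping difference), the tensor-product identity together with Corollary \ref{cor:conditionalExp} for the first-moment term, and a final union bound. Your closing remark that Theorem \ref{thm:arbitraryfunction} cannot be applied verbatim to $\underline h\otimes\underline h$ is exactly why the paper also handles that term by a direct estimate.
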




\begin{proof}
We use again the function $\underline{h}(Y)=Y$. We note in view of \eqref{eq:defcond_cov}-\eqref{eq:defcond_cov_empirical} that
\begin{align}
\label{eq:interm-cov-1}
&\norm{\ecovYXA - \mathrm{Cov}(Y\vert X\in A)} \leq \norm{\widehat{\mathrm{Cov}
}(Y) - \mathrm{Cov}(Y)}\notag\\
&\hspace{1cm}+ \norm{ \langle \one_A, \left[(\one_{d_y} \otimes \one_{d_y})\otimes \left(\frac{\DCE}{\PP[X\in A]} - \frac{\EDCE}{\EEX[\one_A ]}\right)\right] \underline{h}\otimes \underline{h} \rangle }\notag\\
&\hspace{1.5cm}{+}
\norm{ \EE[\underline{h}(Y)\,\vert\,X{\in} A] \otimes \EE[\underline{h}(Y)\,\vert\,X\in A] {-}  \ECEnull[\underline{h}(Y)\,\vert\,X{\in} A]\otimes   \ECEnull[\underline{h}(Y)\,\vert\,X{\in} A] },
\end{align}

Next, we note that
\begin{align*}
&\norm{ \langle \one_A, \left[(\one_{d_y} \otimes \one_{d_y})\otimes \left(\frac{\DCE}{\PP[X\in A]} - \frac{\EDCE}{\EEX[\one_A ]}\right)\right] \underline{h}\otimes \underline{h} \rangle }  \notag\\
&\hspace{0.5cm}\leq \norm{ \langle \one_A, \left[(\one_{d_y} \otimes \one_{d_y})\otimes \left(\frac{\DCE}{\PP[X\in A]} - \frac{\EDCE}{\EEX[\one_A ]}\right)\right] \underline{h}\otimes \underline{h} \rangle }_{HS} \notag\\
&\hspace{1cm}\leq \sqrt{\PP[X\in A]} \norm{\frac{\DCE}{\PP[X\in A]} - \frac{\EDCE}{\EEX[\one_A ]}} \sum_{j,k\in [d_y]} \norm{Y_j Y_k}_{\LiiY}\notag\\
&\hspace{1cm}{\lesssim }\sqrt{\PP[X{\in} A]} \left(\norm{\frac{\DCE {-} \EDCE}{\PP[X{\in} A]}} {+} \norm{\EDCE} \left( \frac{1}{\PP[X{\in} A]} {-}  \frac{1}{\EEX[\one_A ]} \right) \right) \sum_{j,k\in [d_y]} \norm{Y_j Y_k}_{\LiiY}\notag\\
\end{align*}
Remind that $Y$ is a sub-Gaussian vector. Using the equivalence of moments property of sub-Gaussian vector, we get that 
$$
\norm{Y_j Y_k}_{\LiiY} \leq \sqrt{\EE[Y_j^4] \EE[Y_k^4] } \lesssim \EE[Y_j^2] \EE[Y_k^2],\quad \forall j,k\in [d_y].
$$
By a union bound combining the last two displays with \eqref{eq:DCEoperatornorm} and Lemma \ref{lem:bernsteiniidProbX}, we get w.p.a.l. $1-2\delta$
\begin{align}
\label{eq:interm-cov-2}
&\norm{ \langle \one_A, \left[(\one_{d_y} \otimes \one_{d_y})\otimes \left(\frac{\DCE}{\PP[X\in A]} - \frac{\EDCE}{\EEX[\one_A ]}\right)\right] \underline{h}\otimes \underline{h} \rangle }  \notag\\
&\hspace{2cm}\leq \left[ \psi_n(\delta) +  2(1+ \psi_n(\delta))  \varphi_{X}(A)\overline{\epsilon}_n(\delta) \right] \frac{(\EE[\norm{Y}^2])^2}{\sqrt{\PP[X\in A]}}.
\end{align}

Next, we set $u=\EE[\underline{h}(Y)\,\vert\,X\in A]$ and $\hat{u}=  \ECEnull[\underline{h}(Y)\,\vert\,X\in A]$. Then we have
$$
\norm{u\otimes u - \hat{u}\otimes \hat{u}} \leq \norm{u - \hat{u}} (\norm{u} +\norm{\hat{u}})\leq \norm{u - \hat{u}} (2\norm{u} +\norm{\hat{u}-u}).
$$
We apply next Corollary \ref{cor:conditionalExp} to get w.p.a.l. $1-\delta$
\begin{align}
\label{eq:interm-cov-3}
    \norm{u\otimes u - \hat{u}\otimes \hat{u}} &\leq \psi_n^{(1)}(\delta) \big[2 \norm{\EE[Y\,\vert\,X\in A]} + \psi_n^{(1)}(\delta)\big].
\end{align}
Next \citet[Theorem 4]{koltchinskii2017concentration} guarantees that w.p.a.l $1-\delta$
\begin{align}
\label{eq:interm-cov-4}
\norm{\widehat{\mathrm{Cov}
}(Y) - \mathrm{Cov}(Y)} \lesssim \epsilon_n^{(2)}(\delta),
\end{align}
where $\epsilon_n^{(2)}(\delta)$ is defined in \eqref{eq:ratecovarianceeffectiverank}.
    

A union bound combining \eqref{eq:interm-cov-1}, \eqref{eq:interm-cov-2}, \eqref{eq:interm-cov-3} and \eqref{eq:interm-cov-4} gives the result.

\end{proof}

\section{Numerical Experiments}
\label{app:benchmark}

Experiments were conducted on a high-performance computing cluster equipped with an Intel(R) Xeon(R) Silver 4210 CPU @ 2.20GHz Sky Lake CPU, 377GB RAM, and an NVIDIA Tesla V100 16Gb GPU. Code is available at \url{https://github.com/CSML-IIT-UCL/NCP}.
\subsection{Conditional Density Estimation}
\label{app:benchmark-CDE}

To evaluate our method's ability to estimate conditional densities, we tested NCP on six different data models (described in the following paragraph) and compared its performance with ten other methods (detailed in Tab.  \ref{tab:compared_methods}). We assessed the methods' performance using the KS distance between the estimated conditional CDF and the true CDF. Additionally, we explored how the performance of each method scales with the number of training samples, ranging from $10^2$ to $10^5$, with a validation set of $10^3$ samples. We tested each method on nineteen different conditional values uniformly sampled between the 5\%- and 95\%-percentile of $p(x)$. Conditional CDFs were estimated on a grid of $1000$ points uniformly distributed over the support of $Y$. The KS distance between each pair of CDFs was averaged over all the conditioning values. In Tab.~\ref{fig:cde_benchmarks}, we present the mean performance (KS distance $\pm$ standard deviation), computed over 10 repetitions, each with a different random seed.

\paragraph{Synthetic data models.} We included the following synthetic datasets from \cite{rothfuss2019conditional} and \cite{gao2022lincde} into our benchmark:
\begin{itemize}
    \item \texttt{LinearGaussian}, a simple univariate linear density model defined as $Y = X + \Nd(0, 0.1)$ where $X \sim \mathrm{Unif}(-1,1)$.

    \item \texttt{EconDensity}, an economically inspired heteroscedastic density model with a quadratic dependence on the conditional variable defined as $Y = X^2 + \epsilon_Y, \epsilon_Y \sim \Nd(0, 1+X)$ where $X \sim |\Nd(0,1)|$.


    \item \texttt{ArmaJump}, a first-order autoregressive model with a jump component exhibiting negative skewness and excess kurtosis, defined as 
    \begin{equation*}
        x_t = \left[ c(1-\alpha) + \alpha x_{t-1} \right] + (1-z_t) \epsilon_t + z_t \left[ -3c + 2 \epsilon_t \right],
    \end{equation*}
    where $\epsilon_t \sim \Nd(0,0.05)$ and $z_t \sim B(1,p)$ denote a Gaussian shock and a Bernoulli distributed jump indicator with probability $p$, respectively. The parameters were left at their default value.
    
    \item \texttt{GaussianMixture}, a bivariate Gaussian mixture model with 5 kernels where the goal is to estimate the conditional density of one variable given the other. The mixture model is defined as $p(X, Y)=\sum_{k=1}^5 \pi_k \Nd\left(\boldsymbol{\mu_k}, \boldsymbol{\Sigma_k}\right)$
    where $\pi_k$, $\boldsymbol{\mu_k}$, and $\boldsymbol{\Sigma_k}$ are the mixing coefficient, mean vector, and covariance matrix of the $k$-th distribution. All the parameters were randomly initialized.

    \item \texttt{SkewNormal}, a univariate skew normal distribution defined as $Y=2\phi(X)\psi(\alpha X)$ where $\phi(\cdot)$ and $\psi(\cdot)$ are the standard normal probability and cumulative density functions, and $\alpha$ is a parameter regulating the skewness. The parameters were left at their default value.
    

    \item Locally Gaussian or Gaussian mixture distribution (\texttt{LGGMD}) \citep{gao2022lincde}, a regression dataset where the target $y$ depends on the three first dimensions of $x$, with seventeen irrelevant features added to $x$.
    The features of $x$ are all uniformly distributed between $-1$ and $1$. The first dimension of $x$ gives the mean of $Y \vert X$, the second is whether the data is Gaussian or a mixture of two Gaussians, and the third gives its asymmetry. More specifically:
    \begin{equation}
        Y \vert X \sim \begin{cases}
            0.5 \mathcal{N}(0.25 X^{(1)} - 0.5, 0.25(0.25 X^{(3)}+0.5)^2) \\ \quad + 0.5 \mathcal{N}(0.25 X^{(1)} + 0.5, 0.25(0.25 X^{(3)}-0.5)^2)\text{ if } X^{(2)} \leq 0.2\\
            0\mathcal{N}(0.25 X^{(1)} - 0.5, 0.3) \text{ if } X^{(2)} > 0.2
        \end{cases}
    \end{equation}
    \end{itemize}

To sample data from \texttt{EconDensity}, \texttt{ArmaJump}, \texttt{GaussianMixture}, and \texttt{SkewNormal}, we used the library \texttt{Conditional\_Density\_Estimation} \citep{rothfuss2019conditional} available at \url{https://github.com/freelunchtheorem/Conditional_Density_Estimation}.

\paragraph{Training NCP.} We trained an NCP model with $u^\theta$ and $v^\theta$ as multi-layer perceptrons (MLPs), each having two hidden layers of 64 units using GELU activation function in between. The vector $\sigma^\theta$ has a size of $d=100$, and $\gamma$ is set to $10^{-3}$. Optimization was performed over $10^4$ epochs using the Adam optimizer with a learning rate of  $10^{-3}$. Early stopping was applied based on the validation set with patience of $1000$ epochs. To ensure the positiveness of the singular values, we transform the vector $\sigma^\theta$ with the Gaussian function $x \mapsto \exp ( -x^2)$ during any call of the forward method. Whitening was applied at the end of training.

\paragraph{Compared methods.}
We compared our NCP network with ten different CDE methods. See Tab.~\ref{tab:compared_methods} for the exhaustive list of models including a brief summary and key hyperparameters.

In particular, the methods were set up as follows:
\begin{itemize}
    \item \texttt{NF} was characterized by a $1D$ Gaussian base distribution and two Masked Affine Autoregressive flows \citep{papamakarios2017masked} followed by a LU Linear permutation flow. To match the NCP architecture, each flow was defined by two hidden layers with $64$ units each. The training procedure was the same as for the NCP model. The model was implemented using the library \texttt{normflows} \citep{Stimper2023}.
    \item \texttt{DDPM} was characterized by a U-Net \citep{ronneberger2015u}, a noise schedule starting from $10^-4$ to $0.02$ and $400$ steps of diffusion as implemented in \url{https://github.com/TeaPearce/Conditional_Diffusion_MNIST}.
    \item \texttt{CKDE}'s kernels bandwidth was estimated according to Silverman's rule \citep{Silverman1986Density}.
    \item \texttt{MDN}'s architecture was defined by two hidden layers with $64$ units each and $20$ Gaussians kernels.
    \item \texttt{KMN}'s architecture was defined by two hidden layers with $64$ units each, $50$ Gaussians kernels, and kernels bandwidth was estimated according to Silverman's rule \citep{Silverman1986Density}.
    \item \texttt{LSCDE} was defined by $500$ components which bandwidths were set to $0.5$ and  kernels center found via a k-means procedure.
    \item \texttt{NNKDE}'s number of neighbors was set using the heuristics $k=\sqrt{n}$ \citep{Devroye1996}. Kernels bandwidth was estimated according to Silverman's rule \citep{Silverman1986Density}. We used the implementation available at \url{https://github.com/lee-group-cmu/NNKCDE}.
    \item \texttt{RFCDE} was characterized by a Random Forest with $1000$ trees and $31$ cosine basis functions. The training was performed using the \texttt{rfcde} library available at \url{https://github.com/lee-group-cmu/RFCDE}.
    \item \texttt{FC} was trained using a Random Forest with $1000$ trees as a regression method and had $31$ cosine basis functions. The training was performed using the \texttt{flexcode} library available at \url{https://github.com/lee-group-cmu/FlexCode}.
    \item \texttt{LinCDE} was trained with $1000$ LinCDE trees using the \texttt{LinCDE.boost} R function from \url{https://github.com/ZijunGao/LinCDE}.
\end{itemize}
\texttt{CKDE}, \texttt{MDN}, \texttt{KMN}, and \texttt{LSCDE} hyperparameters were set according to \cite{rothfuss2019conditional} and were trained using the library \texttt{Conditional\_Density\_Estimation} available at \url{https://github.com/freelunchtheorem/Conditional_Density_Estimation}. All methods involving the training of a neural network were assigned the same number of epochs given to NCP. All other method parameters were set as prescribed in their paper.

\begin{table}[ph]
\caption{Compared methods for the CDE problem.}
\label{tab:compared_methods}
    \footnotesize
    \centering
    \begin{adjustbox}{max width=\textwidth}
    \begin{tabular}{c|c|c}
        Method & Summary & Main hyperparams \\
        \hline
        \makecell{Normalizing Flows (\texttt{NF}) \\ \citep{rezende2015variational}} & \makecell{Generative models that transform a \\ simple distribution into a complex \\ one through a series of invertible \\ and differentiable transformations}  & \makecell[l]{\tabitem Architecture \\ \tabitem Flow type}
        \\ \hline
        
        \makecell{Denoising Diffusion \\ Probabilistic Model (\texttt{DDPM}) \\ \citep{ho2020denoising}} & \makecell{Generative models that learn to generate \\ data by reversing a gradual noising \\ process, modeling distributions through \\ iterative refinement}  & \makecell[l]{\tabitem Number of diffusion steps \\ \tabitem Noise schedule} \\ \hline
        
        \makecell{Conditional KDE (\texttt{CKDE}) \\ \citep{NonparametricEconomicsBook}} & \makecell{Nonparametric approach modeling the \\ joint and marginal probabilities via \\ KDE and computes the  conditional \\ density as $p(y|x) = p(x,y) / p(x)$.}  & \makecell[l]{\tabitem KDE bandwidth} \\ \hline
        
        \makecell{Mixture Density Network (\texttt{MDN}) \\ \citep{bishop1994mixture}} & \makecell{Uses NeuralNets which takes conditional \\ $x$  as input and governs all the \\ weights of a GMM modeling $p(y|x)$.} & \makecell[l]{\tabitem NeuralNet architecture \\ \tabitem Number of kernels} \\ \hline
        
        \makecell{Kernel Mixture Network (\texttt{KMN}) \\  \citep{ambrogioni2017kernel}} & \makecell{Similar to \texttt{MDN} with the difference that \\ NN only controls the weights of the GMM.}  & \makecell[l]{\tabitem NeuralNet architecture \\ \tabitem Method for finding kernel centers \\ \tabitem Number of kernels} \\ \hline
        
        \makecell{Least-Squares CDE (\texttt{LSCDE}) \\ \citep{sugiyama2010conditional}} & \makecell{Computes the conditional density as \\ linear combination of Gaussian kernels} & \makecell[l]{\tabitem Method for finding kernel centers \\ \tabitem Number of kernels \\ \tabitem Kernels' bandwidth} \\ \hline
        
        \makecell{Nearest Neighbor Kernel CDE (\texttt{NNKDE}) \\ \citep{izbicki2017photoz} \\ \citep{freeman2017unified}} & 
         \makecell{Uses nearest neighbors of the \\ evaluation point $x$ to compute \\ a KDE estimation of $y$.} &  \makecell[l]{\tabitem Number of neighbors \\ \tabitem Kernel bandwidth} \\ \hline
         
        \makecell{Random Forest CDE (\texttt{RFCDE}) \\ \citep{pospisil2018rfcde}} &  \makecell{Uses a random forest to partition \\ the feature space and constructs \\ a weighted KDE of the output space, \\ based on the weights of the leaves \\ in the forest.}  & \makecell[l]{\tabitem Random forest hyperparams \\ \tabitem Basis system \\ \tabitem Number of basis} \\ \hline
        
        \makecell{Flexible CDE (\texttt{FC}) \\ \citep{izbicki2017converting}} & \makecell{Nonparametric approach which uses \\ a basis expansion  of univariate $y$ \\  to turn CDE into a series \\ of univariate regression problems.} & \makecell[l]{\tabitem Number of expansion coeffs \\ \tabitem Regression method hyperparams.} \\ \hline
        
        \makecell{LinCDE (\texttt{LCDE}) \\ \citep{gao2022lincde}} & \makecell{Conditional training of \\ unconditional machine learning \\ models to learn density} & \makecell[l]{\tabitem Number of LinCDE trees} \\
         
    \end{tabular}
    \end{adjustbox}
\end{table}

\paragraph{Results.} See Tab.~\ref{tab:KS_performance_CDE_1e4} for the comparison of performances for $n=10^4$. See also Fig.~\ref{fig:cde_benchmarks}. We also carried out an ablation study on centering and whitening post-treatment for NCP in Tab.~\ref{tab:NCP_ablation}

\begin{table}
\caption{Ablation study on post-treatment for NCP. We report the mean and std of KS distance of estimated CDF from the truth averaged over 10 repetitions with $n=10^5$ (best method in bold red). NCP--C and NCP--W refer to our method with centering and whitening post-treatment, respectively.}
\label{tab:NCP_ablation}
\centering
\footnotesize
\begin{tabular}{lcccccc}
\toprule
Model & LinearGaussian & EconDensity & ArmaJump & SkewNormal & GaussianMixture & LGGMD \\
\midrule
NCP & 0.040\scriptsize{ $\pm$ 0.007} & 0.014\scriptsize{ $\pm$ 0.003} & 0.046\scriptsize{ $\pm$ 0.012} & 0.023\scriptsize{ $\pm$ 0.006} & 0.027\scriptsize{ $\pm$ 0.008} & 0.055\scriptsize{ $\pm$ 0.010} \\
NCP--C & 0.019\scriptsize{ $\pm$ 0.006} & 0.010\scriptsize{ $\pm$ 0.003} & 0.037\scriptsize{ $\pm$ 0.011} & 0.015\scriptsize{ $\pm$ 0.004} & \boldred{0.015\scriptsize{ $\pm$ 0.004}} & 0.048\scriptsize{ $\pm$ 0.007} \\
NCP--W & \boldred{0.010\scriptsize{ $\pm$ 0.000}} & \boldred{0.005\scriptsize{ $\pm$ 0.001}} & \boldred{0.010\scriptsize{ $\pm$ 0.002}} & \boldred{0.008\scriptsize{ $\pm$ 0.001}} & \boldred{0.015\scriptsize{ $\pm$ 0.004}} & \boldred{0.047\scriptsize{ $\pm$ 0.005}} \\
\bottomrule
\end{tabular}
\end{table}

\begin{table}
\caption{Mean and standard deviation of KS distance of estimated CDF from the truth averaged over 10 repetitions with sample size of $10^4$ (best method in bold red, second best in bold black). NCP--C and NCP--W refer to our method with centering and whitening post-treatment, respectively.}
\label{tab:KS_performance_CDE_1e4}
\centering
\footnotesize

\begin{tabular}{lcccccc}
\toprule
Model & LinearGaussian & EconDensity & ArmaJump & SkewNormal & GaussianMixture & LGGMD \\
\midrule
NCP & 0.046\scriptsize{ $\pm$ 0.011} & 0.021\scriptsize{ $\pm$ 0.009} & 0.048\scriptsize{ $\pm$ 0.009} & 0.043\scriptsize{ $\pm$ 0.029} & 0.035\scriptsize{ $\pm$ 0.004} & 0.188\scriptsize{ $\pm$ 0.011} \\
NCP--C & 0.031\scriptsize{ $\pm$ 0.008} & 0.019\scriptsize{ $\pm$ 0.008} & \textbf{0.038\scriptsize{ $\pm$ 0.011}} & \textbf{0.031\scriptsize{ $\pm$ 0.013}} & \textbf{0.031\scriptsize{ $\pm$ 0.003}} & 0.189\scriptsize{ $\pm$ 0.012} \\
NCP--W & \textbf{0.026\scriptsize{ $\pm$ 0.002}} & \textbf{0.016\scriptsize{ $\pm$ 0.003}} & \boldred{0.020\scriptsize{ $\pm$ 0.002}} & \boldred{0.024\scriptsize{ $\pm$ 0.011}} & \boldred{0.030\scriptsize{ $\pm$ 0.002}} & 0.176\scriptsize{ $\pm$ 0.014} \\ 
DDPM & 0.414\scriptsize{ $\pm$ 0.341} & 0.264\scriptsize{ $\pm$ 0.240} & 0.358\scriptsize{ $\pm$ 0.314} & 0.284\scriptsize{ $\pm$ 0.251} & 0.416\scriptsize{ $\pm$ 0.242} & 0.423\scriptsize{ $\pm$ 0.223} \\
NF & \boldred{0.011\scriptsize{ $\pm$ 0.002}} & \boldred{0.015\scriptsize{ $\pm$ 0.003}} & 0.141\scriptsize{ $\pm$ 0.005} & 0.039\scriptsize{ $\pm$ 0.005} & 0.113\scriptsize{ $\pm$ 0.006} & 0.288\scriptsize{ $\pm$ 0.010} \\
KMN & 0.599\scriptsize{ $\pm$ 0.003} & 0.349\scriptsize{ $\pm$ 0.019} & 0.490\scriptsize{ $\pm$ 0.007} & 0.380\scriptsize{ $\pm$ 0.009} & 0.306\scriptsize{ $\pm$ 0.003} & 0.225\scriptsize{ $\pm$ 0.008} \\
MDN & 0.245\scriptsize{ $\pm$ 0.011} & 0.051\scriptsize{ $\pm$ 0.002} & 0.164\scriptsize{ $\pm$ 0.005} & 0.089\scriptsize{ $\pm$ 0.002} & 0.144\scriptsize{ $\pm$ 0.009} & 0.232\scriptsize{ $\pm$ 0.008} \\
LSCDE & 0.418\scriptsize{ $\pm$ 0.003} & 0.119\scriptsize{ $\pm$ 0.004} & 0.250\scriptsize{ $\pm$ 0.007} & 0.109\scriptsize{ $\pm$ 0.002} & 0.201\scriptsize{ $\pm$ 0.005} & 0.295\scriptsize{ $\pm$ 0.034} \\
CKDE & 0.187\scriptsize{ $\pm$ 0.001} & 0.023\scriptsize{ $\pm$ 0.003} & 0.125\scriptsize{ $\pm$ 0.002} & 0.046\scriptsize{ $\pm$ 0.001} & 0.085\scriptsize{ $\pm$ 0.003} & 0.241\scriptsize{ $\pm$ 0.021} \\
NNKCDE & 0.090\scriptsize{ $\pm$ 0.002} & 0.060\scriptsize{ $\pm$ 0.006} & 0.063\scriptsize{ $\pm$ 0.006} & 0.052\scriptsize{ $\pm$ 0.005} & 0.059\scriptsize{ $\pm$ 0.004} & 0.207\scriptsize{ $\pm$ 0.013} \\
RFCDE & 0.132\scriptsize{ $\pm$ 0.009} & 0.136\scriptsize{ $\pm$ 0.010} & 0.130\scriptsize{ $\pm$ 0.009} & 0.139\scriptsize{ $\pm$ 0.009} & 0.134\scriptsize{ $\pm$ 0.012} & 0.162\scriptsize{ $\pm$ 0.006} \\
FC & 0.090\scriptsize{ $\pm$ 0.004} & 0.030\scriptsize{ $\pm$ 0.006} & 0.042\scriptsize{ $\pm$ 0.003} & 0.033\scriptsize{ $\pm$ 0.002} & 0.033\scriptsize{ $\pm$ 0.003} & \boldred{0.065\scriptsize{ $\pm$ 0.008}} \\
LCDE & 0.122\scriptsize{ $\pm$ 0.002} & 0.029\scriptsize{ $\pm$ 0.003} & 0.118\scriptsize{ $\pm$ 0.003} & 0.064\scriptsize{ $\pm$ 0.007} & 0.050\scriptsize{ $\pm$ 0.002} & \textbf{0.141\scriptsize{ $\pm$ 0.004}} \\
\bottomrule
\end{tabular}
\end{table}

\begin{figure}
    \centering
    \includegraphics{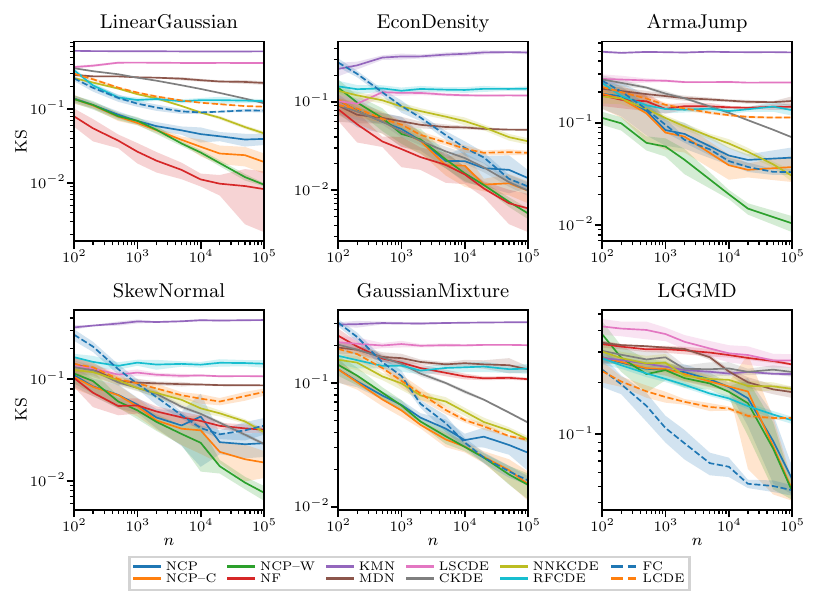}
    \caption{\textbf{Performances for CDE on synthetic datasets w.r.t sample size} $n$. Performance metric is Kolmogorov-Smirnov (KS) distance to truth.}
    \label{fig:cde_benchmarks}
\end{figure}

\begin{figure}
    \centering
    \includegraphics{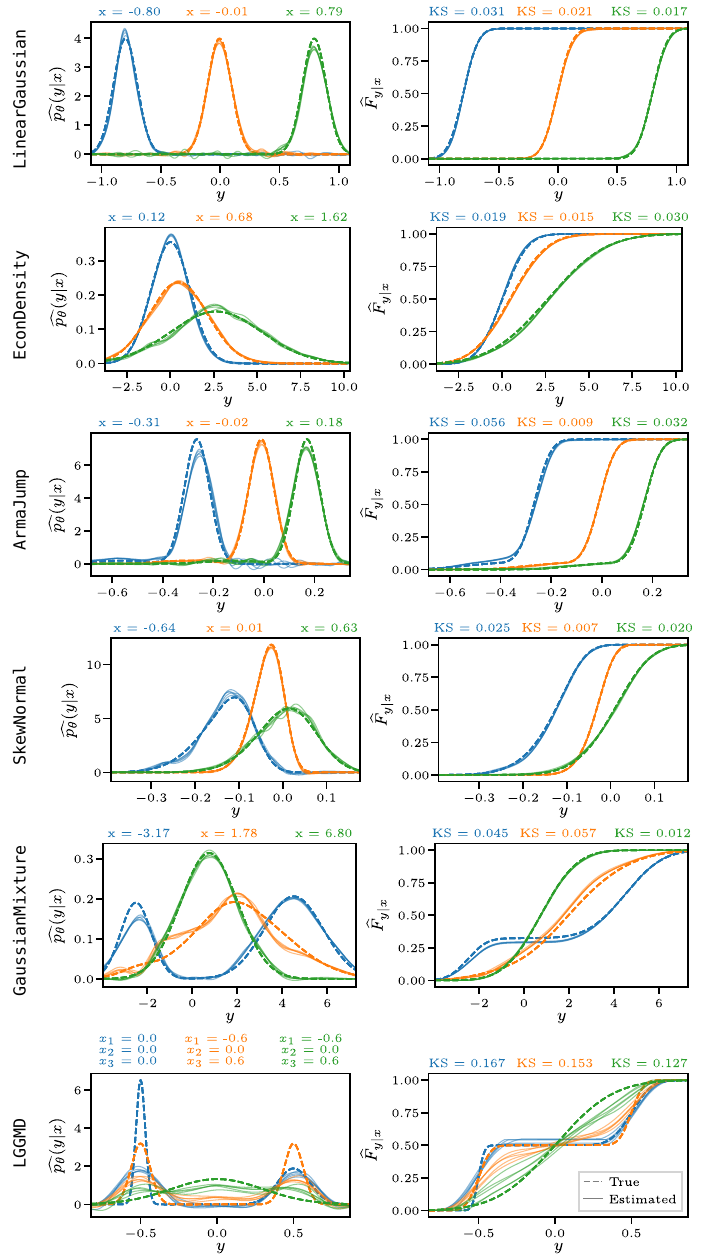}
    \caption{\textbf{Estimated conditional PDFs (left) and CDFs (right) for each synthetic dataset for 3 different conditioning points.} Dotted lines represent the true distributions, while solid lines represent the estimates from NCP. The average KS distance over 5 repetitions is also reported on the right plots.}
    \label{fig:conditional_pdfs}
\end{figure}

\subsection{Confidence Regions}
\label{app:CR_benchmark}


The objective of this next experiment is to estimate a confidence interval at coverage level $90\%$ for two distribution models with different properties (Laplace and Cauchy) and one real dataset in order to showcase the versatility of our NCP approach.

\paragraph{Compared methods.} We compared our NCP procedure for building conditional confidence intervals to the state-of-the-art conditional conformal prediction method in \cite{gibbs2023conformal}. We also developed another method based on Normalizing Flows' estimation of the conditional CDE and we added it to the benchmark.

\paragraph{Experiment for Laplace and Cauchy distributions.}
We generate a dataset where the $X$ variable follows a uniform distribution on interval $[0, 5]$ and $Y\vert X=x$ follows either a Laplace distribution with location and scale parameters $(\mu(x),b(x))=(x^2,x)$ or a Cauchy distribution with location and scale parameters $(\mu(x),b(x))=(x^2,1+x)$. 
We create a train set of $50 000$ samples, a validation set of $1000$ samples and a test set of $1000$ samples.

For the Laplace distribution, we train an NCP where $u^\theta$ and $v^\theta$ are multi-layer perceptrons with two hidden layers of $128$ cells, $\sigma^\theta$ is a vector of size $d=500$ and $\gamma=10^{-2}$. Between each layer, we use the GELU activation function. We optimize over $5 000$ epochs using the Adam optimizer with a learning rate of $10^{-3}$. We apply early stopping with regard to the validation set with a patience of $100$ epochs. Whitening is applied at the end of training. To fit the Cauchy distribution, we increase the depth of the MLPs to $5$ and the width to $258$.

We compare this NCP network with two state-of-the-art methods. The first is a normalizing flow with base distribution a $1D$ Gaussian and two Autoregressive Rational Quadratic spline flows \citep{durkanflows2019} followed by a LU Linear permutation flow. All flows come from the library \texttt{normflows} \citep{Stimper2023}. The spline flows have each two blocks of $128$ hidden units to match the NCP architecture. The normalizing flow is allowed the same number of epochs as ours with the same optimizer. The second model is the conditional conformal predictor from \cite{gibbs2023conformal}. This model needs a regressor as an input. We consider a situation favorable to \cite{gibbs2023conformal} as we assume as prior knowledge that the true conditional expectation is a polynomial function (the truth is actually the quadratic function in this example). Therefore we chose a linear regression with polynomial features as in \cite{gibbs2023conformal} 
as this regressor should fit the data without any problem. For all other choices of parameters, we follow the prescriptions of \cite{gibbs2023conformal}. 
For the sake of fairness, we note that the validation set used for early stopping in NF and NCP was also used as a calibration set for the CCP method.

By design, the Conditional Conformal Predictor (CCP) gives the confidence interval directly. However NCP and NF output the conditional distribution. To find the smallest confidence interval with desired coverage, we apply the linear search algorithm described in Algorithm \ref{alg:quantile_search} on the discretized conditional CDFs provided by NCP and NF. The results are provided in Fig.~\ref{fig:conformal_laplace}. First, observe that although the linear regression achieves the best estimation of the conditional mean, as should be expected since the model is well-specified in this case, the confidence intervals, however, are unreliable for most of the considered conditioning. We also notice instability for NF and CCP for conditioning in the neighborhood of $x=0$ with NF confidence region exploding at $x=0$. We expect this behavior is due to the fact that the conditional distribution at $x=0$ is degenerate. Comparatively, NCP does not exhibit such instability around $x=0$. It only tends to overestimate the produced confidence region for conditioning close to $x=0$.

\begin{algorithm}
    \caption{Confidence interval search given a CDF}
    \label{alg:quantile_search}
    \begin{algorithmic}
        \REQUIRE $Y$ a vector of values, $F_Y$ a vector of realisations of the CDF at points $Y$, $\alpha \in [0,1]$ a confidence level
        \STATE Initialize $t_{\text{low}} = 0$ and $t_{\text{high}} = 1$
        \STATE Initialize $t^*_{\text{low}} = 0$ and $t^*_{\text{high}} = -1$
        \STATE Initialize $s* = \infty$
        \WHILE {Center and scale $X_{\text{train}}$ and $Y_{\text{train}}$}
            \IF {$F_Y[t_{\text{high}}] - F_Y[t_{\text{low}}] \geq \alpha$}
                \STATE size = $Y[t_{\text{high}}] - Y[t_{\text{low}} ]$
                \IF {size $< s*$}
                    \STATE $t^*_{\text{low}}=t_{\text{low}}$, \, $t^*_{\text{high}}=t_{\text{high}}$, \, $s* = $size
                \ENDIF
                \STATE $t_{\text{low}} = t_{\text{low}} +1$
            \ELSIF {$t_{\text{high}} = \text{len}(Y) -1$}
                \STATE break
            \ELSE
                \STATE $t_{\text{high}} = t_{\text{high}}+1 $
            \ENDIF
        \ENDWHILE
        \STATE Return $Y[t_{\text{low}}], Y[t_{\text{high}}]$
    \end{algorithmic}
\end{algorithm}

\paragraph{Experiment on real data.}
We also evaluate the performance of NCP in estimating confidence intervals using the Student Performance dataset available at \url{https://www.kaggle.com/datasets/nikhil7280/student-performance-multiple-linear-regression/data}. This dataset comprises $10000$ records, each defined by five predictors: hours studied, previous scores, extracurricular activities, sleep hours, and sample question papers practiced, with a performance index as the target variable. In this experiment, the NCP's $u^\theta$ and $v^\theta$ are defined by MLPs with two hidden layers, each containing $32$ units and using GELU activation functions, $\sigma^\theta$ is a vector of size $d=50$ and $\gamma=10^{-2}$. Optimization was performed over $50000$ epochs using the Adam optimizer with a learning rate of  $10^{-3}$. We compare NCP with a normalizing flow defined as above in which spline flows have each two blocks of $32$ hidden units to match NCP architecture. The normalizing flow is trained for the same number of epochs as our model, using the same optimizer. We further compare NCP with a split conformal predictor featuring a Random Forest regressor (RFSCP) with $100$ estimators. We used the implementation of the library \texttt{puncc} \citep{mendil2023puncc}. For NCP and the normalizing flow, early stopping is based on the validation set, while for RFSCP, the validation set serves as the calibration set. We performed 10 repetitions, randomly splitting the dataset into a training set of $8000$ samples and validation and test sets of $1000$ samples each. We report the results of the estimated confidence interval at a coverage level of $90\%$ in Tab.~\ref{tab:cp_student}. The methods provide fairly good coverage. NF did not respect the $90\%$ coverage condition. Only NCP and RFSCP both respect the coverage condition but the width of the confidence intervals for RFSCP are larger than for NCP.

\begin{table}
    \caption{Mean and standard deviation of 90\% prediction interval (PI) coverages and interval widths, averaged over 10 repetitions for the Student Performance dataset from Kaggle
    .NCP--C and NCP--W refer to our method with centering and whitening post-treatment, respectively.}
    \label{tab:cp_student}
    \centering
    \begin{tabular}{lcc}
    \toprule
         Model&  Coverage 90\% PI& Width 90\% PI\\ \midrule
         NCP--C &  89.41\% \scriptsize{$\pm$ 2.12\%} & 0.39 \scriptsize{$\pm$ 0.02} \\
         NCP--W &  91.02\% \scriptsize{$\pm$ 0.72\%} & 0.38 \scriptsize{$\pm$ 0.01} \\
         NF &  89.10\% \scriptsize{$\pm$ 1.07\%} & 0.35 \scriptsize{$\pm$ 0.00} \\
         RFSCP & 90.03\% \scriptsize{$\pm$ 1.06\%}& 0.41 \scriptsize{$\pm$ 0.01} \\
         \bottomrule
    \end{tabular}

\end{table}

\paragraph{Discussion on Conformal Prediction.}

 Conformal prediction (CP) is a popular model-agnostic framework for uncertainty quantification approach \cite{vovk1999machine}. CP assigns nonconformity scores to new data points. These scores reflect how well each point aligns with the model's predictions. CP then uses these scores to construct a prediction region that guarantees the true outcome will fall within it with a user-specified confidence parameter. However, CP is not without limitations. The construction of these guaranteed prediction regions can be computationally expensive especially for large datasets, and need to be recomputed from scratch for each value of the confidence level parameter. In addition, the produced CP confidence regions tend to be conservative. Another limitation of regular CP is that predictions are made based on the entire input space without considering potential dependencies between variables. Conditional conformal prediction (CCP) was later developed to handle conditional dependencies between variables, allowing in principle for more accurate and reliable predictions \cite{gibbs2023conformal}. CCP suffers from the typical limitations of regular CP and the theoretical guarantees.

\subsection{High-dimensional Experiments}
\label{app:benchmark-hd}

\paragraph{Experiment on high-dimensional synthetic data.} In Fig. \ref{fig:synthetic-hd}, we trained NCP for $d=100$ using the same MLP architecture and the same NF with autoregressive flow as in our initial experiments based on $n=10^5$ samples \(\{(X_i, Y_i)\}_{i=1}^{n}\) with values in $\mathbb{R}^{d} \times \mathcal{Y}$. We plot the conditional CDF for several conditioning w.r.t. $\theta(x)$ on 10 repetitions. NCP paired with a small MLP architecture performs comparably to the NF model for Gaussian distributions. For discrete distributions, the NCP demonstrates superior performance compared to the NF model.\\
\\
We repeated the experiment in Fig. \ref{fig:synthetic-hd} for $d\in 
\{100,200,500,1000\}$ and recorded the average Kolmogorov-Smirnov (KS) distance of the NCP conditional distribution to the truth, computation time and their standard deviations over 10 repetitions. 

\begin{figure}[h]
\hspace{1cm}
  \begin{minipage}[b]{0.25\linewidth}
    \centering
    \includegraphics[width=\linewidth]{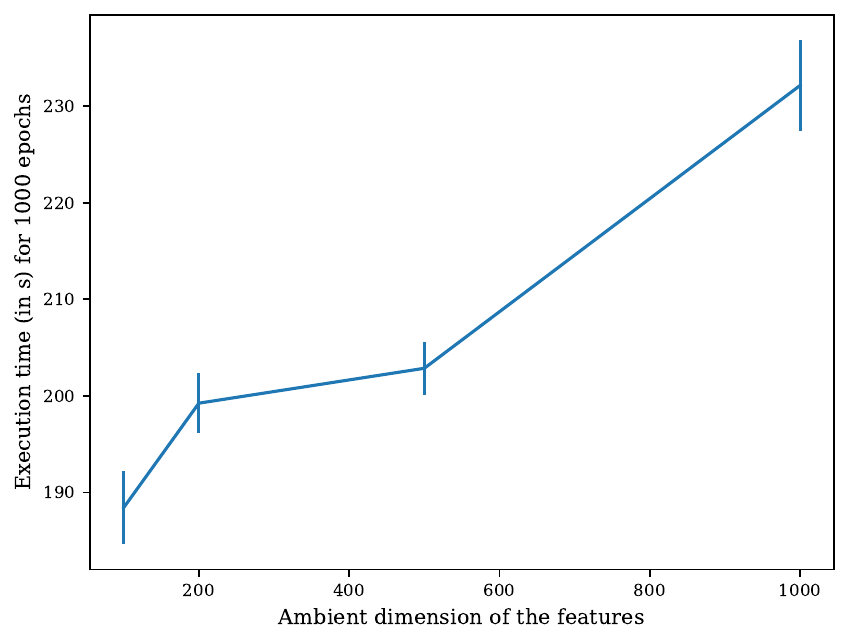}
  \end{minipage}%
  \hspace{2cm}
  \begin{minipage}[t]{0.40\linewidth}
    \centering
    \footnotesize
    \vspace{-2.5cm}
      \begin{tabular}{|l|l|l|l|l|l|l|}
    \hline
        ~ & 100 & ~ & 500 & ~ & 1000 & ~ \\ \hline
        $\theta$ & mean & std & mean & std & mean & std \\ \hline
        1.0 & 0.057 & 0.019 & 0.079 & 0.050 & 0.062 & 0.016 \\ \hline
         1.57 & 0.041 & 0.009 & 0.069 & 0.042 & 0.049 & 0.017 \\ \hline
        3.14 & 0.030 & 0.016 & 0.116 & 0.173 & 0.036 & 0.010 \\ \hline
        5.0 & 0.067 & 0.052 & 0.131 & 0.175 & 0.072 & 0.036 \\ \hline
    \end{tabular}
\end{minipage}
\caption{\footnotesize{\textbf{Left:} we observe only $\approx$ 20\% increase in compute time going from $d=10^2$ to $d=10^3$. \textbf{Right:} average KS distance to the truth and standard deviation over 10 repetitions. }}
\label{fig:dimensionality}
\end{figure}

\paragraph{High-dimensional experiment in molecular dynamics: Chignolin folding.~~}
We investigated the dynamics of Chignolin folding, using a molecular dynamics simulation lasting $106 \mu s$ and sampled every $200 ps$, resulting in $524,743$ data points. Our analysis focuses on 39 heavy atoms (nodes) with a cutoff radius of 5 Angstroms.
To predict the conditional transition probability between metastable states, we integrate our NCP approach with a graph neural network (GNN) model. GNNs, as demonstrated by \citet{Chanussot2021}, represent the state-of-the-art in modeling atomistic systems, adeptly incorporating the roto-translational and permutational symmetries inherent in physical systems. In particular, we employed a SchNet model \citet{schutt2019schnetpack,schutt2023schnetpack} with three interaction blocks. Each block features a 64-dimensional latent atomic environment, and the inter-atomic distances for message passing are expanded over $20$ radial basis functions. After the final interaction block, each latent atomic environment is processed through a linear layer and then aggregated by averaging. The model underwent training for $100$ epochs using an Adam optimizer with a learning rate of $10^{-3}$. We employed a batch size of $256$ and set $\gamma$ to $10^{-3}$. In Fig. \ref{fig:chignolin}, we show how our NCP approach enables the tracking transitions between metastable states, demonstrating accurate forecasting and strong uncertainty quantification.

\end{document}